\theoremstyle{plain}
\newtheorem{theorem}{Theorem}[section]
\newtheorem{lemma}[theorem]{Lemma}
\newtheorem{corollary}[theorem]{Corollary}
\theoremstyle{definition}
\newtheorem{definition}[theorem]{Definition}
\newtheorem{assumption}[theorem]{Assumption}
\newtheorem{problem}{Problem}
\newtheorem{remark}[theorem]{Remark}
\DeclareMathOperator*{\argmax}{arg\,max}
\DeclareMathOperator*{\argmin}{arg\,min}
\newcommand{\lmid}{\,\middle\vert\,}
\newcommand{\calM}{\mathcal{M}}
\newcommand{\calS}{\mathcal{S}}
\newcommand{\calA}{\mathcal{A}}
\newcommand{\calP}{\mathcal{P}}
\newcommand{\calF}{\mathcal{F}}
\newcommand{\calT}{\mathcal{T}}
\newcommand{\calD}{\mathcal{D}}
\newcommand{\mbR}{\mathbb{R}}
\newcommand{\mbE}{\mathbb{E}}
\newcommand{\mbZ}{\mathbb{Z}}
\newcommand{\mbB}{\mathbb{B}}
\newcommand{\algo}{\textsf{\small MASE}}
\newcommand{\glmalgo}{\textsf{\small GLM-MASE}}
\newcommand{\GSE}{\textsf{\small GSE}}
\title{Safe Exploration in Reinforcement Learning: \\
A Generalized Formulation and Algorithms}
\author{%
  Akifumi Wachi \\
  LINE Corporation\\
  \texttt{akifumi.wachi@linecorp.com} \\
  \And
  Wataru Hashimoto \\
  Osaka University \\
  \texttt{hashimoto@is.eei.eng.osaka-u.ac.jp} \\
  \And
  Xun Shen \\
  Osaka University \\
  \texttt{shenxun@eei.eng.osaka-u.ac.jp} \\
  \And
  Kazumune Hashimoto \\
  Osaka University \\
  \texttt{hashimoto@eei.eng.osaka-u.ac.jp} \\
}
\begin{document}

\maketitle

\begin{abstract}
Safe exploration is essential for the practical use of reinforcement learning (RL) in many real-world scenarios.
In this paper, we present a generalized safe exploration (\GSE) problem as a unified formulation of common safe exploration problems.
We then propose a solution of the \GSE~problem in the form of a meta-algorithm for safe exploration, \algo, which combines an unconstrained RL algorithm with an uncertainty quantifier to guarantee safety in the current episode while properly penalizing unsafe explorations \textit{before actual safety violation} to discourage them in future episodes. 
The advantage of \algo~is that we can optimize a policy while guaranteeing with a high probability that no safety constraint will be violated under proper assumptions.
Specifically, we present two variants of \algo~with different constructions of the uncertainty quantifier: one based on generalized linear models with theoretical guarantees of safety and near-optimality, and another that combines a Gaussian process to ensure safety with a deep RL algorithm to maximize the reward.
Finally, we demonstrate that our proposed algorithm achieves better performance than state-of-the-art algorithms on grid-world and Safety Gym benchmarks without violating any safety constraints, even during training.
\end{abstract}

\section{Introduction}
\label{sec:intro}

Safe reinforcement learning (RL) is a promising paradigm that enables policy optimizations for safety-critical decision-making problems (e.g., autonomous driving, healthcare, and robotics), where it is necessary to incorporate safety requirements to prevent RL policies from posing risks to humans or objects~\citep{dulac2021challenges}.
As a result, safe exploration has received significant attention in recent years as a crucial issue for ensuring the safety of RL during both the learning and execution phases~\citep{amodei2016concrete}.

Safe exploration in RL has typically been addressed by formulating a constrained RL problem in which the policy optimization is subject to safety constraints~\cite{brunke2022safe,garcia2015comprehensive}.
While there have been many attempts under different types of constraint representations (e.g., expected cumulative cost~\cite{altman1999constrained}, CVaR~\cite{rockafellar2000optimization}), satisfying constraints almost surely or with high probability received less attention to date.
Imagine safety-critical applications such as planetary exploration where even a single constraint violation may result in catastrophic failure.
NASA's engineers hope Mars rovers to ensure safety at least with high probability~\cite{bajracharya2008autonomy}; thus, constraint satisfaction ``on average'' does not fit their purpose.

While several algorithms have addressed this problem with this stricter notion of safety, there are several formulations in terms of how the constraints are represented, including cumulative~\citep{sootla2022saute}, state~\citep{thomas2021safe}, and instantaneous constraints~\citep{wachi_sui_snomdp_icml2020}, which respectively correspond to Problems~\ref{problem:cumulative_safety}, \ref{problem:state_safety}, and~\ref{problem:every_time_safety} as we will discuss shortly in Section~\ref{sec: preliminary}.
Unfortunately, there has been limited discussion on the relationships between these approaches, making it challenging for researchers to acquire a systematic understanding of the field as a whole.
If a generalized problem were to be formulated, then the research community could pool their efforts to develop suitable algorithms.
 
A closer examination of existing algorithms that span the entire theory-to-practice spectrum reveals several areas for improvement.
Practical algorithms using deep RL (e.g., \citep{sootla2022saute},\citep{thomas2021safe},\citep{turchetta2020safe}) may provide satisfactory performance after convergence, but do not usually guarantee safety during training.
In contrast, theoretical studies (e.g., \citep{amani2019linear}, \citep{wachi_sui_snomdp_icml2020}) that guarantee safety with high probability during training often have limitations, such as relying on strong assumptions (e.g., known state transition) or experiencing decreased performance in complex environments.
In summary, many algorithms have been proposed in various safe RL formulations, but the creation of a safe exploration algorithm that is both practically useful and supported by theoretical foundations remains an open problem.

\textbf{Contributions.\space}
We first present a generalized safe exploration (\GSE) problem and prove its generality compared with existing safe exploration problems.
By taking advantage of the tractable form of the safety constraint in the \GSE~problem, we establish a meta-algorithm for safe exploration, \algo. This algorithm employs an uncertainty quantifier for a high-probability guarantee that the safety constraints are not violated and penalizes the agent \textit{before} safety violation, under the assumption that the agent has access to an ``emergency stop'' authority.
Our \algo~is both practically useful and theoretically well-founded, which allows us to optimize a policy via an arbitrary RL algorithm under the high-probability safety guarantee, even during training.
We then provide two specific variants of \algo~with different uncertainty quantifiers.
One is based on generalized linear models~(GLMs), for which we theoretically provide high-probability guarantees of safety and near-optimality.
The other is more practical, combining a Gaussian process~(GP, \cite{rasmussen2003gaussian}) to ensure safety with a deep RL algorithm to maximize the reward.
Finally, we show that \algo~performs better than state-of-the-art algorithms on the grid-world and Safety Gym \cite{Ray2019} without violating any safety constraints, even during training.

\section{Preliminaries}
\label{sec: preliminary}

\textbf{Definitions.\space}
We consider an episodic safe RL problem in a constrained Markov decision process (CMDP, \cite{altman1999constrained}), $\calM = \langle\, \calS, \calA, H, \calP, r, g, s_1 \, \rangle$,
where $\calS$ is a state space, $\calA$ is an action space, $H \in \mathbb{Z}_{>0}$ is a (fixed) length of each episode, $\calP: \calS \times \calA \times \calS \rightarrow [0, 1]$ is a state transition probability, $r: \calS \times \calA \rightarrow [0, 1]$ is a reward function, $g: \calS \times \calA \rightarrow [0, 1]$ is a safety (cost) function, and $s_1 \in \calS$ is an initial state.
At each discrete time step, with a given (fully-observable) state $s$, the agent selects an action $a$ with respect to its policy $\pi: \calS \rightarrow \calA$, receiving the new state $s'$, reward $r$, and safety cost $g$. 
Though we assume a deterministic policy, our core ideas can be extended to stochastic policy settings.
Given a policy $\pi$, the value and action-value functions in a state $s$ at time $h$ are respectively defined as
\begin{align*}
    V^\pi_{r, h}(s) \coloneqq \mbE_\pi \left[ \, \sum_{h'=h}^H \gamma_r^{h'} r(s_{h'}, a_{h'}) \lmid s_h = s \, \right]
\end{align*}
and $Q^\pi_{r, h}(s, a) \coloneqq \mbE_\pi \left[ \, \sum_{{h'}=h}^H \gamma_r^{h'} r(s_{h'}, a_{h'}) \mid s_h = s, a_h = a \, \right]$,
where the expectation $\mbE_{\pi}$ is taken over the random state-action sequence $\{(s_{h'}, a_{h'})\}_{h'=h}^H$ induced by the policy $\pi$.
Additionally, $\gamma_r \in (0, 1]$ is a discount factor for the reward function.
In the remainder of this paper, we define $V_{\max} \coloneqq \frac{1 - \gamma_r^H}{1 - \gamma_r}$ and let $\calT_h: (\calS \times \calA \rightarrow \mbR) \rightarrow (\calS \times \calA \rightarrow \mbR)$ denote the Bellman update operator
$\calT_h(Q)(s, a) \coloneqq \mbE \left[ r(s_h, a_h) + \gamma_r V_Q(s_{h+1}) \, \lmid\, s_h = s, a_h = a \right]$,
where $V_Q(s) \coloneqq \max_{a \in \calA} Q(s, a)$.

\textbf{Three common safe RL problems.\space}
We tackle safe RL problems where constraints must be satisfied almost surely, even during training.
While such problems have garnered attention in the research community, there are several types of formulations, and their relations are yet to be fully investigated.

One of the most popular formulations for safe RL problems involves maximizing $V_r^\pi \coloneqq V^\pi_{r, 1}(s_1)$ under the constraint that the cumulative cost is less than a threshold, which is described as follows:
\begin{problem}[Almost surely safe RL with cumulative constraint \cite{sootla2022saute}]
  \label{problem:cumulative_safety}
  \begin{equation*}
    \max_{\pi} V^\pi_r \quad \text{subject to} \quad \Pr \left[\, \sum_{h=1}^H \gamma_g^h g(s_h, a_h) \le \xi_1 \lmid \calP, \pi \,\right] = 1,
  \end{equation*}
  where $\xi_1 \in \mbR_{\ge 0}$ is a constant representing a threshold, and $\gamma_g \in (0, 1]$ is a discount factor for $g$.
\end{problem}
%
% This problem is stricter than the conventional one where the expectation is taken with respect to the cumulative safety cost function (i.e., $\mathbb{E}_\pi \sum_{h=1}^H \gamma_g^h g(s_h, a_h) \le \xi_1$), for which there have been many attempts to solve this problem across the theory-to-practice spectrum, as represented by constrained policy optimization (CPO, \citet{achiam2017constrained}) or primal-dual policy optimization~\citep{pmlr-v119-yang20h}.
%
Observe that, the expectation is \textit{not} taken regarding the safety constraint in Problem~\ref{problem:cumulative_safety}.
This problem was studied in \cite{sootla2022saute}, which is stricter than the conventional one where the expectation is taken with respect to the cumulative safety cost function (i.e., $\mathbb{E}_\pi [\,\sum_{h=1}^H \gamma_g^h g(s_h, a_h)\,] \le \xi_1$).

Another popular formulation involves leveraging the state constraints so that safety corresponds to avoiding visits to a set of unsafe states.
This type of formulation has been widely adopted by previous studies on safe-critical robotics tasks \citep{thananjeyan2021recovery,thomas2021safe,turchetta2020safe,wang2023enforcing}, which is written as follows:
\begin{problem}[Safe RL with state constraints]
  \label{problem:state_safety}
  \begin{equation*}
    \max_{\pi} V^\pi_r \ \ \ \text{subject to} \quad \mbE \left[\, \sum_{h=1}^H \gamma_g^h \, \mathbb{I}(s_h \in S_\text{unsafe}) \lmid \calP, \pi \,\right]  \le \xi_2,
  \end{equation*}
  where $\xi_2 \in \mbR_{\ge 0}$ is a threshold, $\mathbb{I}(\cdot)$ is the indicator function, and $S_\text{unsafe} \subset \calS$ is a set of unsafe states.
\end{problem}
%
% For example, \citet{thananjeyan2021recovery} proposed a safe RL algorithm called Recovery RL for safety-critical robotic tasks, whereas \citet{thomas2021safe} proposed a model-based approach that imagines the near future by estimating a dynamical model.
%Though both algorithms compare favorably to conventional ones (e.g., CPO) in terms of safety violations, it is still difficult to guarantee safety in the learning phase by nature.

Finally, some existing studies formulate safe RL problems via an instantaneous constraint, attempting to ensure safety even during the learning stage while aiming for extremely safety-critical applications such as planetary exploration~\citep{wachi2018safe} or healthcare~\cite{sui2015safe}.
Such studies typically require the agent to satisfy the following instantaneous safety constraint at every time step.
\begin{problem}[Safe RL with instantaneous constraints]
  \label{problem:every_time_safety}
  \begin{equation*}
    \max_{\pi} V^\pi_r \quad \text{subject to} \quad \Pr \Bigl[\, g(s_h, a_h) \le \xi_3 \mid \calP, \pi \, \Bigr] = 1, \quad \forall h \in [\, 1, H\, ],
  \end{equation*}
  where $\xi_3 \in \mbR_{\ge 0}$ is a time-invariant safety threshold.
\end{problem}
%
% 

%% Problem Formulation
\section{Problem Formulation}
\label{sec:problem}

This paper also requires an agent to optimize a policy under a safety constraint, as in the three common safe RL problems.
We seek to find the optimal policy $\pi^\star: \calS \rightarrow \calA$ of the following problem, which will hereinafter be referred to as the ``generalized'' safe exploration (\GSE) problem: 
\begin{problem}[\GSE~problem]
  \label{eq:unified_problem}
  % Find the optimal policy $\pi^\star: \calS \rightarrow \triangle_{\calA}$ such that
  %
  Let $b_h \in \mbR$ denote a time-varying threshold.
  \begin{equation*}
    \max_{\pi} V^\pi_r \quad \text{subject to} \quad \Pr \Bigl[\, g(s_h, a_h) \le b_h \mid \calP, \pi \, \Bigr] = 1, \quad \forall h \in [\, 1, H\,].
  \end{equation*}
\end{problem}
This constraint is instantaneous, which requires the agent to learn a policy without a single constraint violation not only after convergence but also during training.
We assume that the threshold is myopically known; that is, $b_h$ is known at time $h$, but unknown before that.
Crucially, at every time step $h$, since $s_h$ is a fully observable state and the agent's policy is deterministic, we will use a simplified inequality represented as $g(s_h, a_h) \le b_h$ in the rest of this paper.
This constraint is akin to that in Problem~\ref{problem:every_time_safety}, with the difference that the safety threshold is time-varying.

\textbf{Importance of the \GSE~problem.\space}
Though our problem may not seem relevant to Problems~\ref{problem:cumulative_safety} and \ref{problem:state_safety}, we will shortly present and prove a theorem on the relationship between the \GSE~problem and the three common safe RL problems.

\begin{theorem}
  \label{theorem:generality}
  Problems~\ref{problem:cumulative_safety}, \ref{problem:state_safety}, and \ref{problem:every_time_safety} can be transformed into the \GSE~problem (i.e., Problem~\ref{eq:unified_problem}).
\end{theorem}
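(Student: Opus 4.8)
The plan is to give three explicit constructions, one per problem, each producing a \GSE~instance (Problem~\ref{eq:unified_problem}) whose objective $\max_\pi V^\pi_r$ and feasible set of policies coincide with those of the original problem. Since the objective is identical across all four problems, it suffices to match the constraints, and I would order the cases from easiest to hardest: Problem~\ref{problem:every_time_safety}, then Problem~\ref{problem:cumulative_safety}, then Problem~\ref{problem:state_safety}.

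First, Problem~\ref{problem:every_time_safety} is essentially already in \GSE~form. I would keep the same $g$ and set the time-varying threshold to the constant $b_h = \xi_3$ for every $h \in [1,H]$; the two constraints $\Pr[g(s_h,a_h)\le\xi_3 \mid \calP, \pi]=1$ and $\Pr[g(s_h,a_h)\le b_h \mid \calP, \pi]=1$ then coincide verbatim. This shows that \GSE~contains Problem~\ref{problem:every_time_safety} as the time-invariant-threshold special case.

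Second, for Problem~\ref{problem:cumulative_safety} I would use a budget-tracking state augmentation. Augment the state with a running-cost coordinate $c_h$, initialized at $c_1 = 0$ and updated deterministically by $c_{h+1} = c_h + \gamma_g^h g(s_h,a_h)$, so that $c_{h+1}$ is the partial cumulative cost through step $h$. Define the \GSE~safety function on the augmented state by $\tilde g(\tilde s_h, a_h) = c_h + \gamma_g^h g(s_h,a_h)$ and set $b_h = \xi_1$ for all $h$. The key observation is that because $g$ takes values in $[0,1]$, hence is nonnegative, the sequence $c_h$ is nondecreasing, so $\max_h c_{h+1} = c_{H+1} = \sum_{h=1}^H \gamma_g^h g(s_h,a_h)$ along every trajectory. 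Therefore the per-step event $\{\tilde g(\tilde s_h,a_h)\le \xi_1 \text{ for all } h\}$ coincides with $\{\sum_{h=1}^H \gamma_g^h g(s_h,a_h)\le\xi_1\}$, and imposing the former with probability one is exactly the constraint of Problem~\ref{problem:cumulative_safety}. Note that $\tilde g$ is measurable with respect to the augmented state observed at time $h$, so the threshold remains myopically available as required.

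Third, and this is where I expect the main difficulty, Problem~\ref{problem:state_safety} imposes an in-expectation constraint, whereas \GSE~demands an almost-sure one; these are not equivalent in general, so I cannot simply track a realized running count as above. My resolution is to fold the trajectory expectation into the definition of the safety function and to activate the constraint only at the first step. Concretely, set $\tilde g(s_1,a_1) := \mbE_\pi\bigl[\sum_{h=1}^H \gamma_g^h \mathbb{I}(s_h \in S_\text{unsafe}) \mid s_1\bigr]$ with $b_1 = \xi_2$, and make the constraint inactive for $h\ge 2$ by choosing $b_h$ at least as large as the maximal attainable cost there. Because the initial state $s_1$ is fixed and the policy is deterministic, $\tilde g(s_1,a_1)$ is a deterministic scalar, so $\Pr[\tilde g(s_1,a_1)\le\xi_2 \mid \calP, \pi]=1$ holds if and only if $\tilde g(s_1,a_1)\le\xi_2$, which is verbatim the constraint of Problem~\ref{problem:state_safety}. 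The subtle point to argue is that admitting such a policy-evaluated, value-function-type safety function is legitimate within the generalized template: the \GSE~constraint is tested per candidate policy, so replacing the instantaneous cost by its expected discounted cost-to-go at the initial step is well-defined and preserves both the feasible policy set and the objective. This expectation-to-almost-sure step is the crux of the theorem; the other two reductions are comparatively mechanical.
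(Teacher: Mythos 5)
Your three reductions are all correct, and two of them essentially coincide with the paper's: Problem~\ref{problem:every_time_safety} is handled identically ($b_h = \xi_3$), and your treatment of Problem~\ref{problem:cumulative_safety} is the same budget-tracking idea as the paper's Lemma~\ref{lemma:gse_vs_p2}, just reparametrized --- you carry the accumulated cost $c_h$ in an augmented state and keep $b_h \equiv \xi_1$ constant, whereas the paper keeps the original $g$ and pushes the bookkeeping into a time-varying threshold $b_h = \eta_h = \gamma_g^{-h}(\xi_1 - \sum_{h'<h}\gamma_g^{h'}g(s_{h'},a_{h'}))$; the two conditions are algebraically identical, and you are right (and more explicit than the paper) that nonnegativity of $g$ is what makes ``every partial sum within budget'' equivalent to ``total within budget.'' Where you genuinely diverge is Problem~\ref{problem:state_safety}. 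The paper's Lemma~\ref{lemma:gse_vs_p3} also folds the expectation into the safety function, but it does so \emph{per step}: it sets $g(s_h,a) \coloneqq \mbE[\mathbb{I}(s_h \in S_\text{unsafe}) \mid \calP,\pi]$ and distributes the budget over time via $b_h \coloneqq \gamma_g^{-h}\{\xi_2 - \mbE[\sum_{h'=1}^{h-1}\gamma_g^{h'}\mathbb{I}(s_{h'}\in S_\text{unsafe})\mid\calP,\pi]\}$, so that the \GSE~constraint is active at every $h$. You instead collapse the entire expected discounted cost into a single value-function-type cost $\tilde g(s_1,a_1)$ tested only at $h=1$, deactivating the constraint for $h\ge 2$. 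Both constructions yield the same feasible policy set, and both share the same admissibility caveat (a ``safety cost'' or threshold that is itself a policy-dependent expectation), so your version is not wrong; but the paper's per-step decomposition is the one that makes the reduction \emph{usable} by \algo, since it produces a running budget $b_h$ the agent can update online at each step, whereas your step-one collapse requires evaluating the full trajectory expectation up front and leaves nothing for the algorithm to enforce during the episode. One small additional wrinkle in your version: $\tilde g(s_1,a_1)$ can exceed $1$, violating the stated codomain $g:\calS\times\calA\to[0,1]$, though a rescaling of $\tilde g$ and $\xi_2$ repairs this.
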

See Appendix~\ref{appendix:theorem21} for the proof.
In other words, the feasible policy space in the \GSE~problem can be identical to those in the other three problems by properly defining the safety cost function and threshold.
Crucially, Problem~\ref{problem:cumulative_safety} is a special case of the \GSE~problem with $b_h = \eta_h$ for all $h$, where $\eta_{h+1} = \gamma_g^{-1} \cdot (\eta_h - g(s_h, a_h))$ with $\eta_0 = \xi_1$.
It is particularly beneficial to convert Problems~\ref{problem:cumulative_safety} and \ref{problem:state_safety}, which have additive constraint structures, to the \GSE~problem, which has an instantaneous constraint.
The accurate estimation of the cumulative safety value in Problems~\ref{problem:cumulative_safety} and \ref{problem:state_safety} is difficult because they depend on the trajectories induced by a policy.
Dealing with the instantaneous constraint in the \GSE~problem is easier, both theoretically and empirically.
Also, especially when the environment is time-varying (e.g., there are moving obstacles), the \GSE~problem is more useful than Problem~\ref{problem:every_time_safety}.

Typical CMDP formulations with expected cumulative (safety) cost are out of the scope of the \GSE~problem.
In such problems, the safety notion is milder; hence, although many advanced deep RL algorithms have been actively proposed that perform well in complex environments after convergence, their performance in terms of safety during learning is usually low, as reported by \citet{stooke2020responsive} or \citet{wachi2021safe}.
Risk-constrained MDPs are also important safe RL problems that are \textit{not} covered by the \GSE~problem; they have been widely studied by representing risk as a constraint on some conditional value-at-risk \cite{chow2017risk} or using chance constraints \cite{ono2015chance,pmlr-v168-pfrommer22a}.\footnote{
The solution in the \GSE~problem is guaranteed to be a conservative approximation of that in safe RL problems with chance constraints.
For more details, see Appendix~\ref{appendix_cc}.
}

\textbf{Difficulties and Assumptions.\space}
Theorem~\ref{theorem:generality} insists that the \GSE~problem covers a wide range of safe RL formulations and is worth solving, but the problem is intractable without assumptions.
We now discuss the difficulties in solving the \GSE~problem, and then list the assumptions in this paper.

The biggest difficulty with the \GSE~problem lies in the fact that there may be no viable safe action given the current state $s_h$, safety cost $g$, and threshold $b_h$.
When $b_h = 0.1$ and $g(s_h, a) = 0.5, \forall a \in \calA$, the agent has no viable action for ensuring safety.
The agent needs to guarantee safety, even during training, where little environmental information is available; hence, it is significant for the agent to avoid such situations where there is no action that guarantees safety.
Another difficulty is related to the regularity of the safety cost function and the strictness of the safety constraint.
In this paper, the safety cost function is unknown a priori.; thus, when the safety cost does not exhibit any regularity, the agent can neither infer the safety of decisions nor guarantee safety almost surely.

To address the first difficulty mentioned above, we use Assumptions~\ref{assumption:slator} and \ref{assumption:reset}.
\begin{assumption}[Safety margin]
  \label{assumption:slator}
There exists $\zeta \in \mbR_{>0}$ such that
$\Pr[\, g(s_h, a_h) \le b_h - \zeta \mid \calP, \pi^\star \,] = 1, \forall h \in [1, H]$.
\end{assumption}
\begin{assumption}[Emergency stop action]
  \label{assumption:reset}
  Let $\widehat{a}$ be an emergency stop action such that $\calP(s_1 \mid s, \widehat{a}) = 1$ for all $ s \in \calS$.
  The agent is allowed to execute the emergency stop action and reset the environment if and only if the probability of guaranteed safety is not sufficiently high.
\end{assumption}
Assumption~\ref{assumption:slator} is mild; this is similar to the Slater condition, which is widely adopted in the CMDP literature \cite{ding2020natural,paternain2019constrained}.
We consider Assumption~\ref{assumption:reset} is also natural for safety-critical applications because it is usually better to guarantee safety, even with human interventions, if the agent requires help in emergency cases.
In some applications (e.g., the agent is in a hazardous environment), however, emergency stop actions should often be avoided because of the expensive cost of human intervention.
In such cases, the agent needs to learn a reset policy allowing them to return to the initial state as in \citet{eysenbach2018leave}, rather than asking for human help, which we will leave to future work.

As for the second difficulty, we assume that the safety cost function belongs to a class where uncertainty can be estimated and guarantee the satisfaction of the safety constraint with high probability.
We present an assumption regarding an important notion called an uncertainty quantifier:
\begin{assumption}[$\delta$-uncertainty quantifier]
    \label{assumption:delta}
    Let $\mu: \calS \times \calA \rightarrow \mbR$ denote the estimated mean function of safety.
    There exists a $\delta$-uncertainty quantifier $\Gamma: \calS \times \calA \rightarrow \mbR$ such that
    $|\, g(s, a) - \mu(s, a) \,| \le \Gamma(s, a)$ for all $ (s, a) \in \calS \times \calA$,
    with a probability of at least $1 - \delta$.
\end{assumption}

%% Method
\section{Method}
\label{sec:method}

We propose \algo~for the \GSE~problem, which combines an unconstrained RL algorithm with additional mechanisms for addressing the safety constraints.
The pseudo-code is provided in Algorithm~\ref{alg:mase}, and a conceptual illustration can be seen in Figure~\ref{fig:graph}.

The most notable feature of \algo~is that safety is guaranteed via the $\delta$-uncertainty quantifier and the emergency stop action (lines $3$ -- $9$).
The $\delta$-uncertainty quantifier is particularly useful because we can guarantee that the confidence bound contains the true safety cost function, that is, $g(s, a) \in [\, \mu(s, a) \pm \Gamma(s, a)\, ]$ for all $s \in \calS$ and $a \in \calA$.
This means that, if the agent chooses actions such that $\mu(s_h, a_h) + \Gamma(s_h, a_h) \le b_h$, then $g(s_h, a_h) \le b_h$ holds with a probability of at least $1 - \delta$.
Regarding the first difficulty mentioned in Section~\ref{sec:problem}, it is crucial that there is at least one safe action.
Thus, at every time step $h$, the agent computes a set of actions that are considered to satisfy the safety constraints with a probability at least $1 - \delta$ given the state $s_h$ and threshold $b_h$.
This is represented as
\begin{alignat*}{2}
    \calA_{h}^+ \coloneqq \{\, a \in \calA \mid \min \{\, 1, \mu(s_{h}, a) + \Gamma(s_{h}, a) \,\} \le b_{h}\,\}.
\end{alignat*}
Whenever the agent identifies that at least one action will guarantee safety, the agent is required to choose an action within $\calA_h^+$ (line $3$).
The emergency stop action is executed if and only if there is no viable action satisfying the safety constraint (i.e., $\calA_{h+1}^+ \ne \emptyset$); that is, the agent is allowed to execute $\widehat{a}$ and start a new episode from an initial safe state (lines $6$ -- $9$).
The safety cost is upper-bounded by 1 because $g \in [0, 1]$ by definition.
Note that \algo~proactively avoids unsafe actions by selecting the emergency action to take \textit{beforehand}; this is in contrast to \citet{sun2021safe}, whose method terminates the episode immediately \textit{after} the agent has already violated a safety constraint.

\begin{algorithm}[t]
    \caption{Meta-Algorithm for Safe Exploration (\algo)}
    \label{alg:mase}
    \begin{algorithmic}[1]
        \For{episode $t = 1, 2, \ldots, T$}
        \For{time $h = 1, 2, \ldots, H$}
        \State Take ``safe'' action $a_h = \pi(s_h)$ within $\calA^+_{h}$ \Comment{Execute only safe actions}
        \State Receive reward $r(s_h, a_h)$, safety cost $g(s_h, a_h)$, and next state $s_{h+1}$
        \State Update safety threshold $b_{h+1}$
        \If{$\mathcal{A}_{h+1}^+ = \emptyset$}
            \State Compute $\widehat{r}(s_h, a_h) = - \frac{c}{\min_{a \in \calA} \Gamma(s_{h+1}, a)}$ \Comment{Penalty for the emergency stop action}
            \State Append $(s_h, a_h, \widehat{r}(s_h, a_h), s_{h+1} )$ to $\calD$
            \State \textbf{break} (i.e., take action $\widehat{a}$) \Comment{Execute the emergency stop action}
        \Else
            \State Append $(s_h, a_h, r(s_h,a_h), s_{h+1})$ to $\calD$
        \EndIf
        \EndFor
        \State Optimize a policy $\pi$ based on $\calD$ via an (unconstrained) RL algorithm
        \State Update the uncertainty quantifier $\Gamma$ and rewrite $\calD$
        \EndFor
    \end{algorithmic}
\end{algorithm}

When safety is guaranteed in the manner described above, the question remains as to how to obtain a policy that maximizes the expected cumulative reward.
As such, we first convert the original CMDP $\calM$ to the following unconstrained MDP
\begin{equation*}
    \widehat{\cal{M}} \coloneqq
    \langle\, \calS, \{\calA, \widehat{a}\}, H, \calP, \widehat{r}, \rho \, \rangle.
\end{equation*}
The changes from $\calM$ lie in the action space and the reward function, as well as in the absence of the safety cost function.
First, the action space is augmented so that the agent can execute the emergency stop action, $\widehat{a}$.
The second modification concerns the reward function.
When executing the emergency stop action $\widehat{a}$, the agent is penalized as its sacrifice so that the same situation will not occur in future episodes; hence, we modify the reward function as follows:
\begin{equation}
  \label{eq:penalty}
  \widehat{r}(s_h, a_h) = 
  \left\{ \,
    \begin{aligned}
        & -c / \min\nolimits_{a \in \calA} \Gamma(s_{h+1}, a) \quad\ \text{if}\ \ \calA_{h+1}^+ = \emptyset, \\
        & r(s_h, a_h) \quad \quad \quad \quad \quad \quad \quad \ \ \  \text{otherwise},
    \end{aligned}
  \right.
\end{equation}
where $c \in \mbR_{>0}$ is a positive scalar representing a penalty for performing the emergency stop.
This penalty is assigned to the state-action pair $(s_h, a_h)$ that placed the agent into the undesirable situation at time step $h+1$, represented as $\calA_{h+1}^+ = \emptyset$ (see Figure~\ref{fig:graph}).

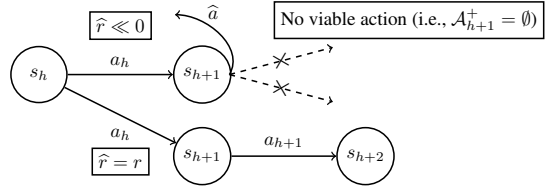
\begin{wrapfigure}{R}{0.54\textwidth}
    \centering
    \vspace{-5pt}
    \resizebox{.95\linewidth}{!}{\begin{tikzpicture}[node distance={30mm}, thick, main/.style = {draw, circle}]
    \node[draw, circle] (A) at (0cm, 0cm) {$\ \ s_h\ \ $};
    \node[draw, circle] (B) at (3cm, 0cm) {$s_{h+1}$};
    \node[draw, circle] (C) at (3cm, -1.5cm) {$s_{h+1}$};
    \node[draw, circle] (D) at (6cm, -1.5cm) {$s_{h+2}$};

    \draw[->] (A) -- node[midway, above, sloped] {$a_h$} (B);
    \draw[->] (A) -- (C);
    \draw[->] (C) -- node[midway, above, sloped] {$a_{h+1}$} (D);
    \draw[dashed, ->] (3.5, 0) -- node[midway, sloped] {{\Large $\times$}} (5.4, 0.5);
    \draw[dashed, ->] (3.5, 0) -- node[midway, sloped] {{\Large $\times$}} (5.4, -0.5);

    \node[draw] (Z) at (1.5, 0.9) {$\widehat{r} \ll 0$};
    \node[draw] (Y) at (1.5, -1.6) {$\widehat{r} = r$};
    \node[draw] (X) at (6.8, 1) {No viable action (i.e., $\calA_{h+1}^+ = \emptyset$)};
    \node (W) at (1.5, -1.1) {$a_h$};

    % Emergency Stop
    \draw [thick, ->] (3.5 ,0) to [out=60, in=0] (2.5, 1.1);
    \node (V) at (3.2, 1.15) {$\widehat{a}$};
\end{tikzpicture}}
    \caption{Conceptual illustration of \algo. At every time step $h$, the agent chooses action $a_h$ within $\calA_h^+$. If there is no safe action at state $s_{h+1}$ satisfying the constraint, the emergency stop action $\widehat{a}$ is executed and the agent receives a large penalty for $(s_h, a_h)$.}
    \label{fig:graph}
\end{wrapfigure}
To show that \algo~is a reasonable safe RL algorithm, we express the following intuitions.
Consider the ideal situation in which the safety cost function is accurately estimated for any state-action pairs; that is, $\Gamma(s, a) = 0$ for all $(s, a)$.
In this case, all emergency stop actions are properly executed, and the safety constraint will be violated at the next time step if the agent executes other actions.
It is reasonable for the agent to receive a penalty of $\widehat{r}(s, a) = -\infty$ because this state-action pair surely causes a safety violation without the emergency stop action.
Unfortunately, however, the safety cost is uncertain and the agent conservatively executes $\widehat{a}$ although there are still actions satisfying the safety constraint, especially in the early phase of training; hence, we increase or reduce the penalty according to the magnitude of uncertainty in~\eqref{eq:penalty} to avoid an excessively large penalty.

We must carefully consider the fact that the quality of information regarding the modified reward function $\widehat{r}$ is uneven in the replay buffer $\calD$.
Specifically, in the early phase of training, the $\delta$-uncertainty quantifier is loose.
Hence, the emergency stop action is likely to be executed even if viable actions remain; that is, the agent will receive unnecessary penalties.
In contrast, the emergency stop actions in later phases are executed with confidence, as reflected by the tight $\delta$-uncertainty quantifier.
Thus, as in line $15$, we rewrite the replay buffer $\calD$ while updating $\Gamma$ depending on the model in terms of the safety cost function (as for specific methods to update $\Gamma$, see Sections~\ref{sec:provable} and \ref{sec:practical}). 

\textbf{Connections to shielding methods.\space} The notion of the emergency stop action is akin to shielding~\cite{alshiekh2018safe,konighofer2020safe} which has been actively studied in various problem settings including partially-observable environments \cite{carr2023safe} or multi-agent settings~\cite{melcer2022shield}.
Thus, \algo~can be regarded as a variant of shielding methods (especially, preemptive shielding in \cite{alshiekh2018safe}) that is specialized for the \GSE~problem. On the other hand, \algo does not only block unsafe actions but also provides proper penalties for executing the emergency stop actions based on the uncertainty quantifier, which leads to rigorous theoretical guarantees presented shortly.
Such theoretical advantages can be enjoyed in many safe RL problems because of the wide applicability of the \GSE~problem backed by Theorem~\ref{theorem:generality}.

\textbf{Advantages of \algo.\space}
Though certain existing algorithms for Problem~\ref{problem:every_time_safety} (i.e., the closest problem to the \GSE~problem) theoretically guarantee safety during learning, several strong assumptions are needed, such as a known and deterministic state transition and regular safety function as in \cite{wachi_sui_snomdp_icml2020} and a known feature mapping function that is linear with respect to transition kernels, reward, and safety as in \cite{amani2021safe}.
Such algorithms have little affinity with deep RL; thus, their actual performance in complex environments tends to be poor.
In contrast, \algo~is compatible with any advanced RL algorithms, which can also handle various constraint formulations while maintaining the safety guarantee.

\textbf{Validity of \algo.\space}
We conclude this section by presenting the following two theorems to show that our \algo~produces reasonable operations in solving the \GSE~problem.

\begin{theorem}
    \label{theorem:reasonable_safe_algo}
    Under Assumption~\ref{assumption:delta}, \algo~guarantees safety with a probability of at least $1-\delta$.
\end{theorem}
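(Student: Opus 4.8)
The plan is to condition on the single high-probability event supplied by Assumption~\ref{assumption:delta} and then argue \emph{deterministically} that, on that event, every action executed by \algo~respects the instantaneous constraint $g(s_h,a_h) \le b_h$ at every step. First I would define the ``good event'' $E \coloneqq \{\, |g(s,a) - \mu(s,a)| \le \Gamma(s,a) \text{ for all } (s,a) \in \calS \times \calA \,\}$, so that $\Pr[E] \ge 1-\delta$ holds directly by Assumption~\ref{assumption:delta}. Because this is a \emph{uniform} bound over all state--action pairs, the entire remaining argument can be carried out on $E$ without any union bound over time steps or episodes, and the claimed probability then follows immediately.

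Working on $E$, the second step is a short sandwich inequality. For any pair $(s,a)$ we have $g(s,a) \le \mu(s,a) + \Gamma(s,a)$ on $E$, while unconditionally $g(s,a) \le 1$ since $g \in [0,1]$ by definition; hence $g(s,a) \le \min\{1, \mu(s,a) + \Gamma(s,a)\}$. I would then split into the two branches of Algorithm~\ref{alg:mase}. Whenever the agent takes a regular action $a_h = \pi(s_h)$ (line~$3$), that action lies in $\calA_h^+$, so by the definition of $\calA_h^+$ we have $\min\{1, \mu(s_h,a_h)+\Gamma(s_h,a_h)\} \le b_h$, which combined with the sandwich inequality yields $g(s_h,a_h) \le b_h$. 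Whenever instead $\calA_{h+1}^+ = \emptyset$, the agent executes the emergency stop action $\widehat{a}$ (lines~$6$--$9$), which by Assumption~\ref{assumption:reset} deterministically resets to the initial state $s_1$ and thus incurs no violation. Consequently, on $E$, the constraint holds at every time step $h$.

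I would close by observing that the \emph{preemptive} check is exactly what makes this case analysis exhaustive: since \algo~inspects $\calA_{h+1}^+$ \textit{before} ever committing to a regular action at $s_{h+1}$, the agent only selects an action in $\calA_h^+$ at states where such an action exists and otherwise stops, so every executed action falls into one of the two safe branches above, with initial feasibility $\calA_1^+ \ne \emptyset$ guaranteed by the safety margin in Assumption~\ref{assumption:slator}.

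The main obstacle I anticipate is the careful bookkeeping around the emergency stop rather than any hard inequality: one must justify (i) that executing $\widehat{a}$ does not itself register as a constraint violation, leaning on its interpretation as an external safe reset in Assumption~\ref{assumption:reset}, and (ii) that the one-step look-ahead at $\calA_{h+1}^+$ genuinely prevents the agent from ever being forced into a state with no safe action available. Once these two points are settled, all the probabilistic content is contained in the single event $E$, and no concentration argument beyond Assumption~\ref{assumption:delta} is needed.
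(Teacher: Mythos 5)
Your proposal is correct and follows essentially the same argument as the paper's proof: condition on the uniform event from Assumption~\ref{assumption:delta}, note that any action in $\calA_h^+$ then satisfies $g(s_h,a_h)\le b_h$, and observe that the emergency stop handles the case $\calA_{h+1}^+=\emptyset$ safely. Your version is in fact slightly more careful than the paper's (explicitly handling the $\min\{1,\cdot\}$ in the definition of $\calA_h^+$ and noting that the uniformity of the bound over all state--action pairs obviates any union bound over time steps), but this is a refinement of the same route rather than a different one.
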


\begin{theorem}
    \label{theorem:reasonable_algo}
    Assume that the safety cost function is estimated for any state-action pairs with an accuracy of better than $\frac{\zeta}{2}$; that is, $\Gamma(s, a) \le \frac{\zeta}{2}$ for all $(s,a)$.
    Set $c \in \mbR$ to be a sufficiently large scalar such that $c > \frac{\zeta V_{\max}}{2 \gamma_r^H}$.
    Then, the optimal policy in $\widehat{\calM}$ is identical to that in $\calM$.
\end{theorem}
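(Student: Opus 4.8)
The plan is to prove the statement in two halves: (i) the safe optimal policy $\pi^\star$ of $\calM$ can be executed in $\widehat{\calM}$ without ever triggering the emergency stop, so that it attains there exactly its original value $V^\star \coloneqq V^{\pi^\star}_r$; and (ii) under the stated penalty size, every policy that does trigger the emergency stop is strictly dominated, so the optimum of $\widehat{\calM}$ must be emergency-stop-free and therefore coincide with $\pi^\star$.

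For half (i), I would show that $\calA^+_h$ is never empty along a trajectory generated by $\pi^\star$. By Assumption~\ref{assumption:slator} the action $a_h = \pi^\star(s_h)$ satisfies $g(s_h,a_h) \le b_h - \zeta$ almost surely, and by Assumption~\ref{assumption:delta} together with the accuracy hypothesis $\Gamma \le \zeta/2$ we get
\begin{equation*}
    \mu(s_h,a_h) + \Gamma(s_h,a_h) \le g(s_h,a_h) + 2\Gamma(s_h,a_h) \le (b_h - \zeta) + \zeta = b_h ,
\end{equation*}
hence $a_h \in \calA^+_h$ and in particular $\calA^+_h \neq \emptyset$. Applying this at every step shows that, starting from $s_1$ and following $\pi^\star$, the triggering condition $\calA^+_{h+1} = \emptyset$ never occurs; therefore $\widehat{r}$ equals $r$ along the whole trajectory and the value of $\pi^\star$ in $\widehat{\calM}$ equals $V^\star$.

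For half (ii), I would quantify the penalty relative to the attainable reward. Since $\min_{a}\Gamma(s_{h+1},a) \le \zeta/2$, the penalty $-c/\min_a \Gamma(s_{h+1},a)$ has magnitude at least $2c/\zeta$, and being incurred at some step $h \le H$ its discounted contribution is at least $\gamma_r^H \cdot 2c/\zeta$. The choice $c > \zeta V_{\max}/(2\gamma_r^H)$ makes this strictly exceed $V_{\max}$, which upper-bounds the total discounted reward collectible in an episode because $r \in [0,1]$. Hence any trajectory that triggers even a single emergency stop has strictly negative cumulative modified reward, while $\pi^\star$ secures the nonnegative return $V^\star \ge 0$. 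From this I would conclude that an optimal policy of $\widehat{\calM}$ never triggers the emergency stop, and that on emergency-stop-free policies $\widehat{\calM}$ and $\calM$ induce the same returns, so the optimum is precisely the reward-maximizing safe policy $\pi^\star$.

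The hard part is the final step: promoting the per-trajectory domination to a statement about \emph{expected} values under stochastic transitions, since a risky action may lead to a dead end only with small probability, and one must still argue the optimum avoids it rather than merely that each bad trajectory has negative return. I would handle this by an exchange / policy-improvement argument seeded at $\pi^\star$: at any state reachable under the safe policy, Assumption~\ref{assumption:slator} supplies a safe action whose continuation has nonnegative value, and replacing a dead-end-risking choice by it weakly raises the value, because it deletes a branch whose return is at most $V_{\max} - \gamma_r^H \cdot 2c/\zeta < 0$ while retaining a nonnegative one. Iterating this removal drives every emergency-stop branch out of the candidate optimum, leaving $\pi^\star$.
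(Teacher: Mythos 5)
Your proof takes essentially the same route as the paper's: Assumption~\ref{assumption:slator} combined with $\Gamma \le \zeta/2$ shows that the safe optimal policy never triggers the emergency stop (via $\mu + \Gamma \le g + \zeta \le b_h$), and the choice of $c$ makes the discounted penalty $\gamma_r^{H} \cdot 2c/\zeta$ exceed $V_{\max}$, so any policy that triggers a stop is dominated. You are in fact somewhat more careful than the paper, which asserts per-trajectory negativity and immediately concludes that the optimum of $\widehat{\calM}$ is emergency-stop-free, without addressing the passage from trajectory-wise domination to expected values under stochastic transitions that you flag and handle with the exchange argument.
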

See Appendix~\ref{appendix:reasonable_algo} for the proofs.
Unfortunately, obtaining a $\delta$-uncertainty qualifier that works in general cases is highly challenging.
To develop a feasible model for the uncertainty quantification, we assume that the safety cost can be modeled via a GLM in Section~\ref{sec:provable} and via a GP in Section~\ref{sec:practical}.

%% Linear
\section{A Provable Algorithm under Generalized Linear CMDP Assumptions}
\label{sec:provable}

In this section, we focus on CMDPs with generalized linear structures and analyze the theoretical properties of \algo.
Specifically, we provide a provable algorithm to use a class of GLMs denoted as $\calF$ for modeling $Q_{r, h}^\star \coloneqq Q_{r, h}^{\pi^\star}$ and $g$, and then provide theoretical results on safety and optimality.

\subsection{Generalized Linear CMDPs}

We extend the assumption in \citet{wang2020optimism} from unconstrained MDPs to CMDPs settings.
Our assumption is based on GLMs as with \cite{wang2020optimism} that makes a strictly weaker assumption than their Linear MDP assumption \cite{jin2020provably,yang2019sample}.
As preliminaries, we first list the necessary definitions and assumptions.
\begin{definition}[GLMs]
  Let $d \in \mbZ_{>0}$ be a feature dimension and let $\mbB^d \coloneqq \{ \texttt{x} \in \mbR^d: \|\texttt{x}\|_2 \le 1 \}$ be the $l_2$ ball in $\mbR^d$.
  For a known feature mapping function $\phi: \calS \times \calA \rightarrow \mbB^d$ and a known link function $f: [-1, 1] \rightarrow [-1, 1]$, the class of generalized linear model is denoted as $\calF \coloneqq \{(s,a) \rightarrow f(\langle \phi_{s,a}, \theta \rangle): \theta \in \mbB^d\}$ where $\phi_{s,a} \coloneqq \phi(s, a)$.
\end{definition}
\begin{assumption}[Regular link function]
  \label{assumption:link}
  The link function $f(\cdot)$ is twice differentiable and is either monotonically increasing or decreasing.
  Furthermore, there exist absolute constants $0 < \underline{\kappa} < \overline{\kappa} < \infty$ and $M < \infty$ such that $\underline{\kappa} < |f'(\texttt{x})| < \overline{\kappa}$ and $|f''(\texttt{x})| < M$ for all $|\texttt{x}| \le 1$.
\end{assumption}
This assumption on the regular link function is standard in previous studies (e.g., \cite{NIPS2010_4166}, \cite{li2010contextual}).
Linear and logistic models are the special cases of the GLM where the link functions are defined as $f(\texttt{x}) = \texttt{x}$ and $f(\texttt{x}) = 1/(1 + e^{-\texttt{x}})$.
In both cases, the link functions satisfy Assumption~\ref{assumption:link}.

We finally make the assumption of generalized linear CMDPs (GL-CMDPs), which extends the notion of the optimistic closure for unconstrained MDP settings in \citet{wang2020optimism}.
\begin{assumption}[GL-CMDP]
  \label{assumption:glm}
  For any $1 \le h < H$ and $u \in \calF_{\text{up}}$, we have $\calT_h(u) \in \calF$ and $g \in \calF$.
\end{assumption}
Recall that $\calT_h$ is the Bellman update operator.
In Assumption~\ref{assumption:glm}, with a positive semi-definite matrix $A \in \mbR^{d \times d} \succ 0$ and a fixed positive constant $\alpha_{\max} \in \mbR_{>0}$, we define
\begin{align*}
  \calF_{\text{up}} \coloneqq \{(s,a) \rightarrow \min\{V_{\max}, f(\langle \phi_{s,a}, \theta \rangle) + \alpha \| \phi_{s,a}\|_A \}: \theta \in \mbB^d, 0 \le \alpha \le \alpha_{\max}, \|A\|_{\text{op}} \le 1 \},
\end{align*}
where $\|\texttt{x}\|_A \coloneqq \sqrt{\texttt{x}^\top A \texttt{x}}$ is the matrix Mahalanobis seminorm, and $\|A\|_{\text{op}}$ is the matrix operator norm.
For simplicity, we suppose the same link functions for the Q-function and the safety cost function, but it is acceptable to use different link functions.
Note that Assumption~\ref{assumption:glm} is a more general assumption than \citet{amani2021safe} that assumes linear transition kernel, reward, and safety cost functions or \citet{wachi2021safe} that assumes a known transition and GLMs in terms of reward and safety cost functions.

\subsection{GLM-MASE Algorithm}

We introduce an algorithm \glmalgo~under Assumptions~\ref{assumption:link} and \ref{assumption:glm}.
Hereinafter, we explicitly denote the episode for each variable.
For example, we let $s_h^{(t)}$ or $a_h^{(t)}$ denote a state or action at the time step $h$ of episode $t$.
We also let $\widetilde{\phi}_h^{(t)} \coloneqq \phi(s_h^{(t)}, a_h^{(t)})$ for more concise notations.

\textbf{Uncertainty quantifiers.\space}
To actualize \algo~in the generalized linear CMDP settings, we first need to consider how to obtain the $\delta$-uncertainty quantifier in terms of the safety cost function.
Since we assume $g \in \calF$, we can define the $\delta$-uncertainty quantifier based on the existing studies on GLMs, especially in the field of multi-armed bandit~\cite{li2017provably,NIPS2010_4166}.
Based on  Assumptions~\ref{assumption:link} and~\ref{assumption:glm}, we now provide a lemma regarding the $\delta$-uncertainty quantifier on safety.
\begin{lemma}
  \label{lemma:glm_safety_uq}
  Suppose Assumptions~\ref{assumption:link} and \ref{assumption:glm} hold.
  Set $\delta=\frac{1}{TH}$.
  With a universal constant $C \in \mbR_{>0}$, let $C_g \coloneqq C \overline{\kappa} \underline{\kappa}^{-1}\sqrt{1+M+\overline{\kappa}+d^2 \ln \left(\frac{1+\overline{\kappa}+\alpha_{\max}}{\delta}\right)}$.
  Define
    \begin{equation*}
        \textstyle
        \Gamma(s, a) \coloneqq  C_g \cdot \|\phi_{s, a}\|_{\Lambda_{h,t}^{-1}} \quad \text{with} \quad \Lambda_{h,t} \coloneqq \sum_{\tau\leq t} \widetilde{\phi}_h^{(\tau)} \widetilde{\phi}_h^{(\tau)} + I,
    \end{equation*}
  where $I$ is the identity matrix.
  Let $\widehat{\theta}^g_{h, t} \in \mbR^d$ be the ridge estimate, which is computed by
  $\widehat{\theta}^g_{h,t} \coloneqq \argmin_{\|\theta\|_2\leq 1} \sum_{\tau\leq t}\left(g(s_{h}^{(\tau)}, a_{h}^{(\tau)}) \!- \! f(\langle\, \widetilde{\phi}_h^{(\tau)},\theta \,\rangle)\right)^2$.
  Then, the following inequality holds
  \begin{alignat*}{2}
    |\, g(s_h^{(t)},a_h^{(t)}) - f(\langle\, \widetilde{\phi}_h^{(t)}, \widehat{\theta}^g_{h,t} \,\rangle)\, | & \le \Gamma(s_h^{(t)}, a_h^{(t)})
  \end{alignat*} 
  for all $(s, a) \in \calS \times \calA$, with a probability at least $1 - \delta$.
\end{lemma}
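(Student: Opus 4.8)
The plan is to prove the bound in two stages: first reduce the prediction error on the safety value to an error on the estimated parameter measured in the geometry induced by $\Lambda_{h,t}$, and then control that parameter error by a self-normalized concentration argument tailored to the generalized linear model. For the reduction, note that since $g \in \calF$ by Assumption~\ref{assumption:glm}, there exists $\theta^\star \in \mbB^d$ with $g(s,a) = f(\langle \phi_{s,a}, \theta^\star \rangle)$ for all $(s,a)$. Writing the prediction error as $f(\langle \phi_{s,a}, \theta^\star \rangle) - f(\langle \phi_{s,a}, \widehat{\theta}^g_{h,t} \rangle)$, applying the mean value theorem with the upper bound $|f'| < \overline{\kappa}$ from Assumption~\ref{assumption:link}, and then Cauchy--Schwarz in the Mahalanobis norm, I would obtain $|\, g(s,a) - f(\langle \phi_{s,a}, \widehat{\theta}^g_{h,t} \rangle)\,| \le \overline{\kappa}\, \|\phi_{s,a}\|_{\Lambda_{h,t}^{-1}} \, \|\widehat{\theta}^g_{h,t} - \theta^\star\|_{\Lambda_{h,t}}$. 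Comparing with $\Gamma(s,a) = C_g \|\phi_{s,a}\|_{\Lambda_{h,t}^{-1}}$, it then suffices to show that, with probability at least $1-\delta$, $\|\widehat{\theta}^g_{h,t} - \theta^\star\|_{\Lambda_{h,t}} \le C_g / \overline{\kappa}$ up to the absolute constant. Crucially, this reduction holds for every $\phi_{s,a}$, so the parameter bound delivers the claim simultaneously for all $(s,a) \in \calS \times \calA$.

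The core of the argument is the parameter-error bound, where the link-function regularity and the martingale structure of the residuals enter. I would start from the first-order optimality condition of the regularized least-squares estimate $\widehat{\theta}^g_{h,t}$, whose vanishing gradient supplies the $+I$ regularizer appearing in $\Lambda_{h,t}$. Linearizing $f(\langle \widetilde{\phi}_h^{(\tau)}, \widehat{\theta}^g_{h,t} \rangle) - f(\langle \widetilde{\phi}_h^{(\tau)}, \theta^\star \rangle)$ by the mean value theorem introduces a weighted Gram matrix $\sum_{\tau \le t} f'(\bar{z}_\tau)\, \widetilde{\phi}_h^{(\tau)} (\widetilde{\phi}_h^{(\tau)})^\top$; the lower bound $|f'| > \underline{\kappa}$ shows this matrix dominates $\underline{\kappa}(\Lambda_{h,t} - I)$, which is exactly the strong convexity needed to convert a gradient bound into a $\Lambda_{h,t}$-norm bound on $\widehat{\theta}^g_{h,t} - \theta^\star$, contributing the factor $\underline{\kappa}^{-1}$ to $C_g$. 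The second-derivative bound $|f''| < M$ controls the Taylor remainder of this linearization, accounting for the $M$ term inside $C_g$.

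It then remains to bound the residual term $\sum_{\tau \le t} ( g(s_h^{(\tau)}, a_h^{(\tau)}) - f(\langle \widetilde{\phi}_h^{(\tau)}, \theta^\star \rangle) ) \, \widetilde{\phi}_h^{(\tau)}$ in the $\Lambda_{h,t}^{-1}$-norm. Since this is a martingale with bounded, hence conditionally sub-Gaussian, increments (as $g \in [0,1]$), a self-normalized concentration inequality of the Abbasi-Yadkori type, adapted to GLM bandits as in the cited references, yields a bound scaling like $\sqrt{d \ln(1/\delta)}$. To make the guarantee uniform over the optimistic-closure class $\calF_{\text{up}}$ --- parametrized by $\theta$, $\alpha \le \alpha_{\max}$, and $A$ with $\|A\|_{\text{op}} \le 1$ --- I would additionally invoke a covering-number argument whose log-covering number scales as $d^2 \ln((1 + \overline{\kappa} + \alpha_{\max})/\delta)$, explaining both the $d^2$ dependence and the appearance of $\alpha_{\max}$ in $C_g$. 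Collecting the $\overline{\kappa}$ from the reduction, the $\underline{\kappa}^{-1}$ from strong convexity, and these concentration terms recovers exactly the stated $C_g$; finally, setting $\delta = \tfrac{1}{TH}$ and union-bounding over all $(h,t)$ pairs gives the claim throughout the run.

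The hard part will be the parameter-error bound of the last two paragraphs: the nonlinearity of $f$ means the clean self-normalized martingale bound available in the linear case must be pushed through the mean value theorem and a covering argument, and carefully tracking how $\underline{\kappa}$, $\overline{\kappa}$, $M$, and the covering entropy combine into $C_g$ is the delicate step. By contrast, the reduction in the first paragraph and the concluding union bound are routine.
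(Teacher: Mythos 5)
Your proposal is correct and takes essentially the same route as the paper, whose entire proof of this lemma is the single line ``See Lemma~1 (and Lemma~7) in \citet{wang2020optimism}'': your sketch is precisely the argument behind that cited lemma (realizability of $g$ in $\calF$, mean-value-theorem reduction with the $\overline{\kappa}$ bound and Cauchy--Schwarz in the $\Lambda_{h,t}$-geometry, strong convexity from $\underline{\kappa}$, a self-normalized concentration bound, and a covering argument over $\calF_{\text{up}}$ that accounts for the $d^2\ln(\cdot)$ and $\alpha_{\max}$ terms in $C_g$). One cosmetic remark: in this paper the observations of $g$ are noiseless, so the martingale residuals in your third paragraph vanish identically and the stated $C_g$ is conservative for the safety function (it inherits its form from the companion $Q$-function bound of Lemma~\ref{lemma:glm_Q_uq}), but this only makes the claimed inequality easier and does not affect correctness.
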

For the purpose of qualifying uncertainty in GLMs, the weighted $l_2$-norm of $\phi$ (i.e., $\|\phi_{s, a}\|_{\Lambda_{h,t}^{-1}}$) plays an important role.
Because we assume that the Q-function and safety cost function share the same feature, we have a similar lemma on the uncertainty quantifier regarding the Q-function as follows:
\begin{lemma}
  \label{lemma:glm_Q_uq}
  Suppose Assumptions~\ref{assumption:link} and \ref{assumption:glm} hold.
  Let $\widehat{\theta}^Q_{h, t} \in \mbR^d$ denote the ridge estimate; that is, 
  $\widehat{\theta}^Q_{h,t} \coloneqq \argmin_{\|\theta\|_2\leq 1} \sum_{\tau\leq t}\left(y_{h}^{(\tau)} - f(\langle\, \widetilde{\phi}_h^{(\tau)},\theta \,\rangle)\right)^2$,
  where $y_{h}^{(\tau)} \coloneqq r(s_{h}^{(\tau)}, a_{h}^{(\tau)}) + \max_{a'\in \calA}\widehat{Q}_{r, h+1}^{(\tau)}(s_{h+1}^{(\tau)},a')$ for all $\tau\leq t$ with
  \[
    \widehat{Q}_{r, h}^{(t)}(s,a) \coloneqq \min \left\{ V_{\max}, f(\langle \phi_{s,a}, \hat{\theta}^Q_{h,t} \rangle) + C_{\sfrac{Q}{g}} \Gamma(s,a) \right\}
  \]
  that is initialized with $\widehat{Q}_{r,h}^{(0)} = 0$ for all $h \le H$ and $\widehat{Q}_{r, H+1}^{(t)} = 0$ for all $1 \le t \le T$.
  Then, with a universal constant $C_{\sfrac{Q}{g}} \in \mbR_{>0}$, the following inequalities holds
  \begin{alignat*}{2}
    |\, Q_{r,h}^\star(s_h^{(t)},a_h^{(t)}) - f(\langle\, \widetilde{\phi}_h^{(t)}, \widehat{\theta}^Q_{h,t} \,\rangle)\, | \le C_{\sfrac{Q}{g}} \cdot \Gamma(s_h^{(t)},a_h^{(t)})
  \end{alignat*} 
  for all $(s, a) \in \calS \times \calA$, with a probability at least $1 - \delta$.
\end{lemma}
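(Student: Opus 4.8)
The plan is to mirror the argument for Lemma~\ref{lemma:glm_safety_uq}, but to confront the two features that distinguish the $Q$-function from the safety cost: the regression targets $y_h^{(\tau)}$ are \emph{bootstrapped} (they contain the current estimate $\widehat{Q}_{r,h+1}^{(\tau)}$, which is itself fit on the collected trajectories), and the quantity being regressed onto, $\calT_h(\widehat{Q}_{r,h+1}^{(t)})$, varies with $t$ rather than being a single fixed function like $g$. First I would invoke Assumption~\ref{assumption:glm}: by construction $\widehat{Q}_{r,h+1}^{(t)}$ has exactly the form $\min\{V_{\max}, f(\langle \phi_{s,a}, \theta\rangle) + \alpha\|\phi_{s,a}\|_A\}$ (with $A = \Lambda_{h,t}^{-1}$, which satisfies $\|A\|_{\text{op}}\le 1$ since $\Lambda_{h,t}\succeq I$, and $\alpha = C_{\sfrac{Q}{g}} C_g \le \alpha_{\max}$), so it lies in the optimistic closure $\calF_{\text{up}}$. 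Hence $\calT_h(\widehat{Q}_{r,h+1}^{(t)}) \in \calF$, i.e.\ there is a realizable parameter $\theta^\circ_{h,t}\in\mbB^d$ with $\calT_h(\widehat{Q}_{r,h+1}^{(t)})(s,a) = f(\langle \phi_{s,a}, \theta^\circ_{h,t}\rangle)$ for all $(s,a)$. This realizability is precisely what makes the GLM regression well-posed and is the role played by the optimistic-closure assumption.

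Next I would establish the one-step statistical bound $|f(\langle \phi_{s,a}, \widehat{\theta}^Q_{h,t}\rangle) - \calT_h(\widehat{Q}_{r,h+1}^{(t)})(s,a)| \le C' \Gamma(s,a)$, uniformly in $(s,a)$. Writing $\epsilon_h^{(\tau)} \coloneqq y_h^{(\tau)} - \calT_h(\widehat{Q}_{r,h+1}^{(\tau)})(s_h^{(\tau)}, a_h^{(\tau)})$, the noise is mean-zero conditional on the history and bounded by $O(V_{\max})$ (both $r$ and the clipped value estimates lie in $[0,V_{\max}]$), hence sub-Gaussian. For a \emph{fixed} next-step value function, the self-normalized generalized-linear concentration inequality from the bandit literature (\cite{li2017provably,NIPS2010_4166}), together with Assumption~\ref{assumption:link} so that $f$ is bi-Lipschitz ($\underline{\kappa}<|f'|<\overline{\kappa}$ lets one pass between $|f(\langle\phi,\theta_1\rangle)-f(\langle\phi,\theta_2\rangle)|$ and $|\langle\phi,\theta_1-\theta_2\rangle|$), yields a bound of exactly the form $C_g\|\phi_{s,a}\|_{\Lambda_{h,t}^{-1}} = \Gamma(s,a)$ up to constants. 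The subtlety is that the target value function is data-dependent; to decouple this I would take a covering net of $\calF_{\text{up}}$ and union-bound the concentration over the net, so that it holds \emph{simultaneously} for every admissible next-step value function, and in particular for $\widehat{Q}_{r,h+1}^{(t)}$. The discretization error is absorbed into $C_{\sfrac{Q}{g}}$ once the covering radius is polynomially small in $1/(TH)$, the log-covering-number contributing to the same logarithmic factor already present in $C_g$.

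Finally I would pass from $\calT_h(\widehat{Q}_{r,h+1}^{(t)})$ to $Q_{r,h}^\star$ by backward induction on $h$, from $H+1$ (where $\widehat{Q}_{r,H+1}^{(t)} = 0 = Q^\star_{r,H+1}$) down to $1$, carrying the optimism hypothesis $\widehat{Q}_{r,h+1}^{(t)}(s,a) \ge Q^\star_{r,h+1}(s,a)$ for all $(s,a)$. By monotonicity of the Bellman operator, $\calT_h(\widehat{Q}_{r,h+1}^{(t)}) \ge \calT_h(Q^\star_{r,h+1}) = Q^\star_{r,h}$, which with the concentration bound gives $f(\langle\phi_{s,a},\widehat{\theta}^Q_{h,t}\rangle) + C_{\sfrac{Q}{g}}\Gamma(s,a) \ge Q^\star_{r,h}(s,a)$, and since $Q^\star_{r,h}\le V_{\max}$ the clip preserves this, yielding $\widehat{Q}_{r,h}^{(t)} \ge Q^\star_{r,h}$ and closing the induction as well as the lower side of the target bound. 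The upper side comes from the same inequality applied in the other direction (together with the inductive control of $\widehat{Q}_{r,h+1}^{(t)} - Q^\star_{r,h+1}$ through the contraction factor $\gamma_r$), so the two combine into $|Q_{r,h}^\star - f(\langle\phi_{s,a},\widehat{\theta}^Q_{h,t}\rangle)| \le C_{\sfrac{Q}{g}}\Gamma$ after rescaling the universal constant. Setting $\delta=\frac{1}{TH}$ as in Lemma~\ref{lemma:glm_safety_uq} and union-bounding over $h\in[1,H]$ and $t\in[1,T]$ delivers the stated high-probability guarantee.

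I expect the main obstacle to be the second step: controlling the statistical dependence created by bootstrapping. Because $y_h^{(\tau)}$ is built from $\widehat{Q}_{r,h+1}^{(\tau)}$, a function of the very trajectories forming the design matrix $\Lambda_{h,t}$, the targets cannot be treated as independent of the regressors, and a naive martingale bound fails. The covering/uniform-concentration device over $\calF_{\text{up}}$ — enabled precisely by the optimistic-closure Assumption~\ref{assumption:glm} — is the crux, and the delicate bookkeeping is matching the covering number and resulting constant $C_{\sfrac{Q}{g}}$ to the form of $C_g$ so that the final bonus is a single $\Gamma$ up to a universal factor.
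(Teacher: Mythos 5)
The paper's own ``proof'' of this lemma is a one-line deferral to Lemma~1 (and Lemma~7) of \citet{wang2020optimism}, and your sketch is essentially a reconstruction of that cited argument with the same ingredients: realizability of $\calT_h(\widehat{Q}_{r,h+1}^{(t)})$ in $\calF$ via the optimistic closure (Assumption~\ref{assumption:glm}), a self-normalized GLM concentration bound made uniform over $\calF_{\text{up}}$ by a covering-net argument to handle the data-dependent bootstrapped targets, and backward induction to establish optimism. On those steps you are aligned with --- and considerably more explicit than --- what the paper provides.

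The one place your argument does not close as written is the final step. What the concentration-plus-covering machinery actually delivers is $|\,f(\langle \phi_{s,a}, \widehat{\theta}^Q_{h,t}\rangle) - \calT_h(\widehat{Q}_{r,h+1}^{(t)})(s,a)\,| \le C\,\Gamma(s,a)$, i.e.\ closeness to the Bellman backup of the \emph{current estimate}, and backward induction then yields only the one-sided optimism $f(\langle \phi_{s,a}, \widehat{\theta}^Q_{h,t}\rangle) + C_{\sfrac{Q}{g}}\,\Gamma(s,a) \ge Q^\star_{r,h}(s,a)$. The reverse inequality requires controlling $\calT_h(\widehat{Q}_{r,h+1}^{(t)}) - \calT_h(Q^\star_{r,h+1})$, which is an expectation over $s_{h+1}$ of the downstream gap $\widehat{Q}_{r,h+1}^{(t)} - Q^\star_{r,h+1}$; that gap accumulates bonuses over the remaining horizon and is not pointwise bounded by $\Gamma(s_h,a_h)$ at the current state-action pair, so ``the two combine after rescaling the universal constant'' is not justified. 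This is arguably an imprecision inherited from the lemma statement itself (the cited Lemma~1 of \citet{wang2020optimism} is stated against the Bellman backup rather than against $Q^\star$, and only the optimistic side is used downstream in the regret analysis), but as a proof of the two-sided inequality displayed in the lemma, your last step has a genuine hole.
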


Note that $\Gamma(s_h^{(t)},a_h^{(t)})$ is the $\delta$-uncertainty quantifier with respect to the safety cost function.
One of the biggest advantages of the generalized linear CMDPs is that the magnitude of uncertainty for the Q-function is proportional to that for the safety cost function.
Hence, by exploring the Q-function based on the optimism in the face of the uncertainty principle \cite{auer2007logarithmic,strehl2008analysis}, the safety cost function is also explored simultaneously, which contributes to the efficient exploration of state-action spaces.

\textbf{Integration into \algo.\space}
The \glmalgo~is an algorithm to integrate the $\delta$-uncertainty quantifiers inferred by the GLM into the \algo~sequence.
Detailed pseudo code is presented in Appendix~\ref{appendix:proofs_glm}.

To deal with the safety constraint, \glmalgo~leverages the upper bound inferred by the GLM; that is, for all $h$ and $t$, the agent takes only actions that satisfy
\[
  f\left(\langle\, \widetilde{\phi}_h^{(t)}, \widehat{\theta}^g_{h,t} \,\rangle\right) + \Gamma \left(s_h^{(t)}, a_h^{(t)}\right) \le b_h.
\]
By Lemma~\ref{lemma:glm_safety_uq}, such state-action pairs satisfy the safety constraint, i.e. $g(s_h^{(t)}, s_h^{(t)}) \le b_h$, for all $h$ and $t$, with a probability at least $1 - \delta$.
If there is no action satisfying the safety constraint (i.e., $\calA_h^+ = \emptyset$), the emergency stop action $\widehat{a}$ is taken, and then the agent receives a penalty defined in \eqref{eq:penalty}.

As for policy optimization, we follow the optimism in the face of the uncertainty principle.
Specifically, the policy~$\pi$ is optimized so that the upper-confidence bound of the Q-function characterized by $\widehat{r}$ is maximized; that is, for any state $s \in \calS$, the policy is computed as follows:
\begin{equation*}
    \pi_h^{(t)}(s) = \argmax_{a \in \calA} \widehat{Q}^{(t)}_{\widehat{r}, h}(s, a).
\end{equation*}
Intuitively, this equation enables us to 1) solve the exploration and exploitation dilemma by incorporating the optimistic estimates of the Q-function and 2) make the agent avoid generating trajectories to violate the safety constraint via the modified reward function.

\textbf{Theoretical results.\space}
We now provide two theorems regarding safety and near-optimality.
For both theorems, see Appendix~\ref{appendix:proofs_glm} for the proofs.
\begin{theorem}
  \label{theorem:glm_safety}
  Suppose the assumptions in Lemma~\ref{lemma:glm_safety_uq} hold.
  Then, the \glmalgo~satisfies 
  $g(s_h^{(t)}, a_h^{(t)}) \le b_h$ for all $t \in [1, T]$ and $h \in [1, H]$,
  with a probability at least $1 - \delta$.
\end{theorem}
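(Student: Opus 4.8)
The plan is to obtain Theorem~\ref{theorem:glm_safety} as an almost immediate corollary of the uncertainty-quantifier guarantee in Lemma~\ref{lemma:glm_safety_uq} together with the action-selection rule of \glmalgo. All of the genuinely hard analysis---the self-normalized concentration bound controlling the ridge estimate $\widehat{\theta}^g_{h,t}$---is already encapsulated in Lemma~\ref{lemma:glm_safety_uq}, so what remains is to translate that confidence bound into a pointwise safety guarantee and to account correctly for the failure probability.

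First I would introduce the ``good event'' $E$ on which the conclusion of Lemma~\ref{lemma:glm_safety_uq} holds, i.e.
$$ |\, g(s_h^{(t)}, a_h^{(t)}) - f(\langle \widetilde{\phi}_h^{(t)}, \widehat{\theta}^g_{h,t} \rangle)\, | \le \Gamma(s_h^{(t)}, a_h^{(t)}) $$
simultaneously for every $(s,a) \in \calS \times \calA$ and every index pair $(h,t) \in [1,H] \times [1,T]$. Since Lemma~\ref{lemma:glm_safety_uq} is stated with $\delta = \tfrac{1}{TH}$ and the constant $C_g$ already carries a $\ln(1/\delta)$ term, I would either take a union bound over the $TH$ pairs or invoke the uniform-in-time (method-of-mixtures) form of the bound directly, so that $\Pr[E] \ge 1 - \delta$. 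Confirming that this probability accounting is consistent with the stated $C_g$ and $\delta$ is the single step that requires real care.

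Next, arguing entirely on $E$, I would fix any $(h,t)$ at which \glmalgo~executes a regular action, so that $a_h^{(t)} \in \calA_h^+$. By the construction of $\calA_h^+$ and the selection rule, this action satisfies $f(\langle \widetilde{\phi}_h^{(t)}, \widehat{\theta}^g_{h,t} \rangle) + \Gamma(s_h^{(t)}, a_h^{(t)}) \le b_h$. Combining this with the one-sided consequence of $E$, namely $g(s_h^{(t)}, a_h^{(t)}) \le f(\langle \widetilde{\phi}_h^{(t)}, \widehat{\theta}^g_{h,t} \rangle) + \Gamma(s_h^{(t)}, a_h^{(t)})$, the threshold $b_h$ upper-bounds the true cost, yielding $g(s_h^{(t)}, a_h^{(t)}) \le b_h$ directly.

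Finally I would dispatch the complementary case $\calA_h^+ = \emptyset$, in which \glmalgo~takes the emergency stop action $\widehat{a}$: by Assumption~\ref{assumption:reset} this action is a deterministic, safe reset to the initial state $s_1$, invoked precisely to avoid being forced into an unsafe action, so it introduces no violation. Collecting the two cases shows that on $E$ every executed action obeys $g(s_h^{(t)}, a_h^{(t)}) \le b_h$, and hence the bound holds for all $(t,h)$ with probability at least $1-\delta$. The deterministic implications in these last two steps are routine; the only real obstacle, as noted, is the probability bookkeeping that ensures the per-step confidence level aggregates to the claimed $1-\delta$ over the entire horizon and all episodes.
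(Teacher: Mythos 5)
Your proposal is correct and follows essentially the same route as the paper's proof: on the event that the Lemma~\ref{lemma:glm_safety_uq} confidence bound holds, the selection rule $f(\langle \widetilde{\phi}_h^{(t)}, \widehat{\theta}^g_{h,t} \rangle) + \Gamma(s_h^{(t)}, a_h^{(t)}) \le b_h$ immediately yields $g(s_h^{(t)}, a_h^{(t)}) \le b_h$, and the emergency stop handles the case $\calA_h^+ = \emptyset$. Your extra care about the union bound over the $TH$ index pairs is a point the paper leaves implicit (absorbed into the choice $\delta = \tfrac{1}{TH}$ and the $\ln(1/\delta)$ term in $C_g$), but it does not change the argument.
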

\begin{theorem}
  \label{theorem:near_optimality}
  Suppose the assumptions in Lemmas~\ref{lemma:glm_safety_uq} and \ref{lemma:glm_Q_uq} hold. 
  Let $C_1$ and $C_2$ be positive, universal constants.
  Also, with a sufficiently large $T$, let $t^\star$ denote the smallest integer satisfying $\lambda_{\min}(\Sigma)tH - C_1\sqrt{tHd} - C_2\sqrt{tH \ln \delta^{-1}} \ge 2 C_g \cdot \zeta^{-1}$,
  where $\lambda_{\min}(\Sigma)$ is the minimum eigenvalue of the second moment matrix $\Sigma$.
  Then, the policy $\pi^{(t)}$ obtained by \glmalgo~at episode $t$ satisfies
  \begin{equation*}
      \sum_{t=t^\star}^T \left[ V_r^{\pi^\star} - V_r^{\pi^{(t)}} \right] \le \widetilde{O}\left(H\sqrt{d^3 (T-t^*)}\right)
  \end{equation*}
 with probability at least $1 - \delta$.
\end{theorem}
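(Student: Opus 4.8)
The plan is to exploit the fact that the regret is summed only from episode $t^\star$ onward, so that throughout the summation range the confidence widths have already shrunk below the safety margin. Concretely, $t^\star$ is engineered so that for every $t \ge t^\star$ the uncertainty quantifier satisfies $\Gamma(s,a) \le \zeta/2$ uniformly, which is exactly the precondition of Theorem~\ref{theorem:reasonable_algo}. Once that holds, I would argue that the feasible action set $\calA_h^+$ always contains the optimal action, no emergency stop is triggered, and the modified reward coincides with the true reward; the problem then collapses to a standard optimism-based regret analysis for generalized linear MDPs, and the claimed $\widetilde{O}(H\sqrt{d^3(T-t^\star)})$ rate follows from the usual elliptical-potential machinery.

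First I would verify the uncertainty-shrinkage step. Since $\|\phi_{s,a}\|_2 \le 1$, we have $\|\phi_{s,a}\|_{\Lambda_{h,t}^{-1}}^2 \le \lambda_{\min}(\Lambda_{h,t})^{-1}$, so it suffices to lower-bound $\lambda_{\min}(\Lambda_{h,t})$. A matrix Chernoff/Bernstein bound applied to $\Lambda_{h,t} = \sum_{\tau \le t}\widetilde{\phi}_h^{(\tau)}(\widetilde{\phi}_h^{(\tau)})^\top + I$ yields, with probability $1-\delta$, a lower bound of the form $\lambda_{\min}(\Sigma)\,tH - C_1\sqrt{tHd} - C_2\sqrt{tH\ln\delta^{-1}}$, where $\Sigma$ is the second-moment matrix of the played features; this is precisely the quantity in the definition of $t^\star$. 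Choosing $t \ge t^\star$ therefore forces $\Gamma(s,a) = C_g\|\phi_{s,a}\|_{\Lambda_{h,t}^{-1}} \le \zeta/2$ for all $(s,a)$. Combining this with the Slater-type margin of Assumption~\ref{assumption:slator} and the safety confidence bound of Lemma~\ref{lemma:glm_safety_uq}, the optimal action obeys $f(\langle \phi, \widehat{\theta}^g_{h,t}\rangle) + \Gamma \le g + 2\Gamma \le (b_h-\zeta)+\zeta = b_h$, so $\pi^\star(s)\in\calA_h^+$; hence $\calA_h^+\neq\emptyset$ along every relevant trajectory, $\pi^{(t)}$ never invokes $\widehat{a}$, and by Theorem~\ref{theorem:reasonable_algo} the modified MDP $\widehat{\calM}$ and the CMDP $\calM$ share the same optimum with $\widehat{r}=r$.

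With safety out of the way, I would run the standard optimistic argument. By Lemma~\ref{lemma:glm_Q_uq} and the bonus $C_{\sfrac{Q}{g}}\Gamma$ built into $\widehat{Q}^{(t)}_{r,h}$, a backward induction over $h$ gives optimism, $\max_{a\in\calA_h^+}\widehat{Q}^{(t)}_{r,h}(s,a)\ge V^{\pi^\star}_{r,h}(s)$, where the induction goes through because $\pi^\star(s)$ lies in $\calA_h^+$. Consequently $V_r^{\pi^\star}-V_r^{\pi^{(t)}}\le \widehat{V}^{(t)}_{r,1}(s_1)-V_r^{\pi^{(t)}}$, and the right-hand side telescopes, via the Bellman equation along the trajectory induced by $\pi^{(t)}$, into a sum of per-step confidence widths bounded by $2C_{\sfrac{Q}{g}}\Gamma(s_h^{(t)},a_h^{(t)})$ plus a bounded martingale-difference term. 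Summing over $h$ and $t$, I would apply Cauchy--Schwarz together with the elliptical-potential (log-determinant) lemma to get $\sum_{t=t^\star}^T\|\widetilde{\phi}_h^{(t)}\|_{\Lambda_{h,t}^{-1}}\le\widetilde{O}(\sqrt{(T-t^\star)d})$ for each $h$, and Azuma--Hoeffding to control the martingale term by $\widetilde{O}(H\sqrt{T-t^\star})$. Using $C_g=\widetilde{O}(d)$ from its definition and collecting the factors ($H$ stages, one $\sqrt d$ from the potential lemma, a further factor $d$ from $C_g$) produces the stated $\widetilde{O}(H\sqrt{d^3(T-t^\star)})$ bound.

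I expect the main obstacle to be the coupling between the safety and optimality analyses: unlike unconstrained optimistic RL where $\widehat{V}\ge V^\star$ holds unconditionally, here the maximization is over the data-dependent set $\calA_h^+$, so optimism can fail if the optimal action is spuriously excluded. Making this rigorous requires that the matrix-concentration lower bound on $\lambda_{\min}(\Lambda_{h,t})$ hold uniformly over $h$ and $t\ge t^\star$ (a union bound, already absorbed into the choice $\delta=\tfrac{1}{TH}$ of Lemma~\ref{lemma:glm_safety_uq}) and, more delicately, that the played features be exploratory enough that $\lambda_{\min}(\Sigma)>0$; this non-degeneracy of the second-moment matrix is the real assumption hiding behind the definition of $t^\star$. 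The remaining bookkeeping---tracking the constants $C_g$, $C_{\sfrac{Q}{g}}$, $\underline{\kappa}$, $\overline{\kappa}$ through the confidence bounds---is routine but must be done carefully to confirm the $d^3$ exponent.
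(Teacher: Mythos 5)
Your proposal is correct and follows essentially the same route as the paper: establish via matrix concentration (the paper's Lemma~\ref{lemma:gamma_zeta}, from Proposition~1 of \citet{li2017provably}) that $\Gamma \le \zeta/2$ for all $t \ge t^\star$, use the Slater margin of Assumption~\ref{assumption:slator} to show the optimal policy's actions remain in $\calA_h^+$ so that $\widehat{\calM}$ coincides with $\calM$ and no emergency stops occur, and then invoke the optimism-based GLM regret analysis of \citet{wang2020optimism} to obtain $\widetilde{O}(H\sqrt{d^3(T-t^\star)})$. The only cosmetic difference is that you re-sketch the elliptical-potential/telescoping machinery that the paper imports as a black box, and the paper additionally isolates the confidence-failure event via a Strehl-style decomposition and bounds its contribution by $O(1)$ using $\delta = \tfrac{1}{TH}$, which your union-bound remark covers implicitly.
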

Theorem~\ref{theorem:glm_safety} shows that the \glmalgo~guarantees safety with high probability for every time step and episode, which is a variant of Theorem~\ref{theorem:reasonable_safe_algo} under the generalized linear CMDP assumption and corresponding $\delta$-uncertainty quantifier.
Theorem~\ref{theorem:near_optimality} demonstrates the agent’s ability to act near-optimally after a sufficiently large number of episodes.
The proof is based on the following idea.
After $t^\star$ episodes, the safety cost function and the Q-function are estimated with an accuracy better than $\frac{\zeta}{2}$.
Then, based on Theorem~\ref{theorem:reasonable_algo}, the optimal policy in $\widehat{\calM}$ is identical to that in $\calM$; thus, the agent achieves a near-optimal policy by leveraging the (well-estimated) optimistic Q-function.

\section{A Practical Algorithm}
\label{sec:practical}

Though we established an algorithm backed by theory under the generalized linear CMDP assumption in Section~\ref{sec:provable}, it is often challenging to obtain proper feature mapping functions in complicated environments.
Thus, in this section, we propose a more practical algorithm combining a GP-based 
estimator to guarantee safety with unconstrained deep RL algorithms to maximize the reward.

\textbf{Guaranteeing safety via GPs.\space}
As shown in the previous sections, the $\delta$-uncertainty quantifier plays a critical role in \algo.
To qualify the uncertainty in terms of the safety cost function $g$, we consider modeling it as a GP:
$g(\bm{z}) \sim \mathcal{GP}(\mu(\bm{z}), k(\bm{z}, \bm{z}'))$,
where $\bm{z} \coloneqq [s, a]$, $\mu(\bm{z})$ is a mean function, and $k(\bm{z}, \bm{z}')$ is a covariance function.
The posterior distribution over $g(\cdot, \cdot)$ is computed based on $n \in \mbZ_{>0}$ observations at state-action pairs $(\bm{z}_1, \bm{z}_2, \ldots, \bm{z}_n)$ with safety measurements $\bm{y}_n \coloneqq \{\, y_1, y_2, \ldots, y_n\, \}$, where $y_n \coloneqq g(\bm{z}_n) + N_n$ and $N_n \sim \mathcal{N}(0, \omega^2)$ is zero-mean Gaussian noise with a standard deviation of $\omega \in \mbR_{\ge 0}$.
We consider episodic RL problems, and so $n \approx tH+h$ for episode $t$ and time step~$h$, although the equality does not hold because of the episode cutoffs.
Using the past measurements, the posterior mean, variance, and covariance are computed analytically as $\mu_n(\bm{z}) =  \bm{k}_n^\top(\bm{z})(\bm{K}_n+\omega^2 \bm{I})^{-1} \bm{y}_n$, $\sigma_n(\bm{z}) = k_n(\bm{z},\bm{z})$, and $k_n(\bm{z},\bm{z}') = k(\bm{z},\bm{z}') - \bm{k}_n^\top(\bm{z})(\bm{K}_n+\omega^2 \bm{I})^{-1} \bm{k}_n(\bm{z}')$,
where $\bm{k}_n(\bm{z}) = [k(\bm{z}_1, \bm{z}), \ldots, k(\bm{z}_n, \bm{z})]^\top$ and $\bm{K}_n$ is the positive definite kernel matrix.
We now present a theorem on the safety guarantee.
\begin{theorem}
    \label{theorem:safety_gp}
    Assume $\| g \|_k^2 \le B$ and $N_n \le \omega$ for all $n \ge 1$.
    Set $\beta_n^{1/2} \coloneqq B + 4 \omega \sqrt{\nu_n + 1 + \ln(1/\delta)}$ and construct the $\delta$-uncertainty quantifier by
    \begin{equation}
        \label{eq:uq_gp}
        \Gamma(s, a) \coloneqq \beta_n \cdot \sigma_n(s, a), \quad \forall (s, a) \in \calS \times \calA,
    \end{equation}
    where $\nu_n$ is the information capacity associated with kernel $k$.
    Then, \algo~based on \eqref{eq:uq_gp} satisfies
    the safety constraint $g(s_h^{(t)}, a_h^{(t)}) \le b_h$
    for all $t$ and $h$ with a probability of at least $1 - \delta$.
\end{theorem}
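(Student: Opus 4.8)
The plan is to reduce the statement to Theorem~\ref{theorem:reasonable_safe_algo} by showing that the GP construction in \eqref{eq:uq_gp} is a legitimate $\delta$-uncertainty quantifier in the sense of Assumption~\ref{assumption:delta}. Concretely, it suffices to establish that, with probability at least $1-\delta$,
\[
  |\, g(s, a) - \mu_n(s, a)\,| \le \Gamma(s, a) \quad \text{for all } (s, a) \in \calS \times \calA.
\]
Granting this, the safety guarantee is immediate from the action-selection rule of \algo: every executed action $a_h^{(t)}$ lies in $\calA_h^+$, so $\mu_n(s_h^{(t)}, a_h^{(t)}) + \Gamma(s_h^{(t)}, a_h^{(t)}) \le b_h$, and on the good event $g(s_h^{(t)}, a_h^{(t)}) \le \mu_n + \Gamma \le b_h$; whenever no such action exists the emergency stop $\widehat{a}$ is triggered instead. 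This is exactly the reasoning already certified by Theorem~\ref{theorem:reasonable_safe_algo}, so the entire content of the proof lies in the concentration bound displayed above.

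To prove that bound I would appeal to the self-normalized confidence analysis for kernel ridge / GP regression in the RKHS $\mathcal{H}_k$ associated with $k$. The assumption $\| g \|_k^2 \le B$ places $g$ in $\mathcal{H}_k$ with bounded norm, and the bounded-noise assumption $N_n \le \omega$ renders the observation noise conditionally sub-Gaussian. I would decompose the posterior error $g(\bm{z}) - \mu_n(\bm{z})$ into (i) a regularization-bias term controlled by $\| g \|_k \le \sqrt{B}$ and (ii) a noise term forming a self-normalized martingale; bounding (ii) with the Hilbert-space self-normalized inequality yields a deviation proportional to $\sigma_n(\bm{z})$, whose width grows with $\tfrac{1}{2}\ln\det(\bm{I} + \omega^{-2}\bm{K}_n)$, a quantity the information capacity $\nu_n$ upper-bounds. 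Collecting constants reproduces the prescribed $\beta_n^{1/2} = B + 4\omega\sqrt{\nu_n + 1 + \ln(1/\delta)}$ and, after matching the resulting confidence width to the quantity $\Gamma$ defined in \eqref{eq:uq_gp}, delivers the displayed bound uniformly over $(s,a)$. A convenient feature of this argument is that the self-normalized bound holds simultaneously for all sample sizes $n$, hence for every $(t,h)$ along the run, so no additional union bound over episodes or time steps is required.

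The main obstacle will be making the concentration step rigorous in the \emph{adaptive} setting, where the query points $[s_h^{(t)}, a_h^{(t)}]$ are selected by the data-dependent policy and are therefore neither i.i.d.\ nor independent of the past noise. This is precisely why a self-normalized martingale argument, rather than a naive Hoeffding-plus-union-bound approach, is needed: one must set up the noise filtration so that the martingale structure is preserved along the realized trajectory and verify that $\nu_n$ still controls the relevant log-determinant term. Apart from this, everything is mechanical --- the definition of $\calA_h^+$ together with \eqref{eq:uq_gp} forces every executed action to be safe on the good event, the emergency-stop branch disposes of the case $\calA_h^+ = \emptyset$, and Theorem~\ref{theorem:reasonable_safe_algo} then yields the claimed guarantee that $g(s_h^{(t)}, a_h^{(t)}) \le b_h$ for all $t$ and $h$ with probability at least $1-\delta$.
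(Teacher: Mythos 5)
Your proposal matches the paper's argument: the paper likewise reduces the theorem to the generic \algo~safety argument (the two-case reasoning of Theorem~\ref{theorem:reasonable_safe_algo}: actions in $\calA_h^+$ are safe on the good event, otherwise $\widehat{a}$ is executed) after establishing that \eqref{eq:uq_gp} is a $\delta$-uncertainty quantifier, which it does by directly invoking the self-normalized RKHS confidence bound of \citet{chowdhury2017kernelized} (their Theorem~2) --- exactly the concentration result whose derivation you sketch. The only difference is that you outline the proof of that concentration step while the paper cites it, so the two arguments are essentially identical.
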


See Appendix~\ref{appendix:gp_safety} for the proofs.
Theorem~\ref{theorem:safety_gp} guarantees that the safety constraint is satisfied by combining the GP-based $\delta$-uncertainty quantifier in \eqref{eq:uq_gp} and the emergency stop action.

\textbf{Maximizing reward via deep RL.\space}
The remaining task is to optimize the policy via the modified reward function $\widehat{r}$ in \eqref{eq:penalty}, whereby the agent is penalized for emergency stop actions.
This problem is decoupled from the safety constraint and can be solved as the following unconstrained RL problem:
\begin{equation}
    \label{eq:deeprl}
    \pi \coloneqq \argmax_{\pi} V_{\widehat{r}}^\pi.
\end{equation}
There are many excellent algorithms for solving \eqref{eq:deeprl} such as trust region policy optimization (TRPO, \cite{schulman2015trust}) and twin delayed deep deterministic policy gradient (TD3, \cite{pmlr-v80-fujimoto18a}).
One of the key benefits of our \algo~is such compatibility with a broad range of unconstrained (deep) RL algorithms. 
 
\section{Experiments}
\label{sec:experiments}

We conduct two experiments.
The first is on Safety Gym \cite{Ray2019}, where an agent must maximize the expected cumulative reward under a safety constraint with additive structures as in Problems~\ref{problem:cumulative_safety}~and~\ref{problem:state_safety}.
The safety cost function $g$ is binary (i.e., $1$ for an unsafe state-action pair and $0$ otherwise), and the safety threshold is set to $\xi_1 = 20$.
The reason for choosing Safety Gym is that this benchmark is complex and elaborate, and has been used to evaluate a variety of excellent algorithms.
The second is a grid world where a safety constraint is instantaneous as in Problem~\ref{problem:every_time_safety}.
Due to the page limit, we present the settings and results of the grid-world experiment in Appendix~\ref{appendix:grid_experiment}.

To solve the Safety Gym tasks, we implement the practical algorithm presented in Section~\ref{sec:practical} as follows.
First, we convert the problem into a \GSE~problem by defining $b_h \coloneqq \gamma_g^{-1} \cdot (\xi_1 - \sum_{h'=0}^{h-1} \gamma_g^{h'} g(s_{h'}, a_{h'}))$ and enforcing the safety constraint represented as $g(s_h, a_h) \le b_h$ for every time step $h$ in each episode.
Second, to infer the safety cost, we use deep GP to conduct training and inference, as in \cite{salimbeni2017doubly} when dealing with high-dimensional input spaces.
Finally, as for the policy optimization in $\widehat{\calM}$, we leverage the TRPO algorithm.
We delegate other details to Appendix~\ref{appendix:details_experiment}.

\textbf{Baselines and metrics.\space}
We use the following four algorithms as baselines.
The first is TRPO, which is a safety-agnostic deep RL algorithm that purely optimizes a policy without safety consideration.
The second and third are CPO~\cite{achiam2017constrained} and TRPO-Lagrangian~\cite{Ray2019}, which are well-known algorithms for solving CMDPs.
The final algorithm is Saut{\'e} RL~\cite{sootla2022saute}, which is a recent, state-of-the-art algorithm for solving safe RL problems where constraints must be satisfied almost surely.
We employ the following three metrics to evaluate our \algo~and the aforementioned four baselines: 1)~the expected cumulative reward, 2)~the expected cumulative safety, and 3)~the maximum cumulative safety.
We execute each algorithm with five random seeds and compute the means and confidence intervals.

\begin{figure*}[t]
    \centering
    \begin{subfigure}[b]{0.31\textwidth}
        \centering
        \includegraphics[width=\textwidth]{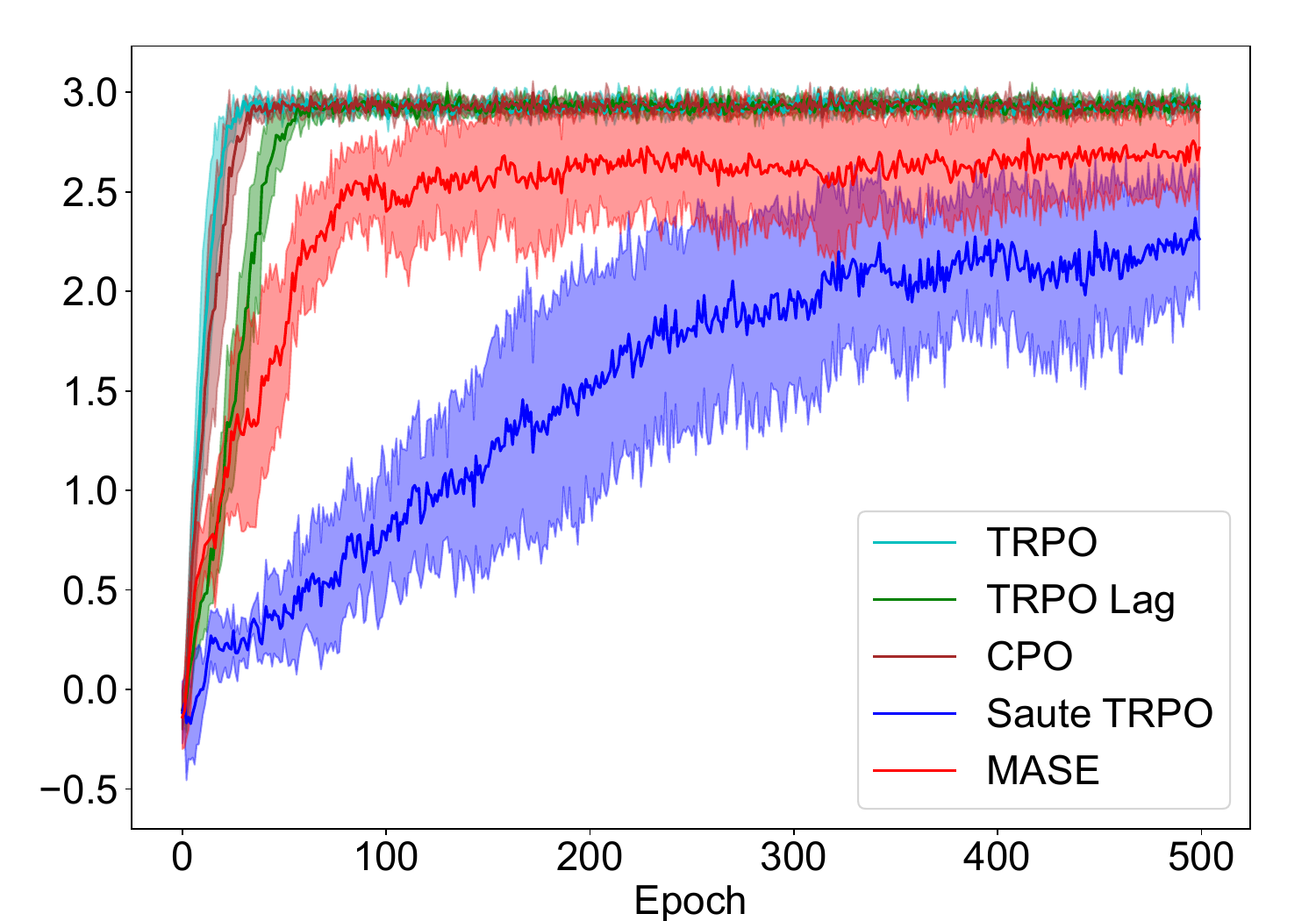}
        \caption{Average episode return.}
        \label{fig:point_return}
    \end{subfigure}
    \hfill
    \begin{subfigure}[b]{0.31\textwidth}
        \centering
        \includegraphics[width=\textwidth]{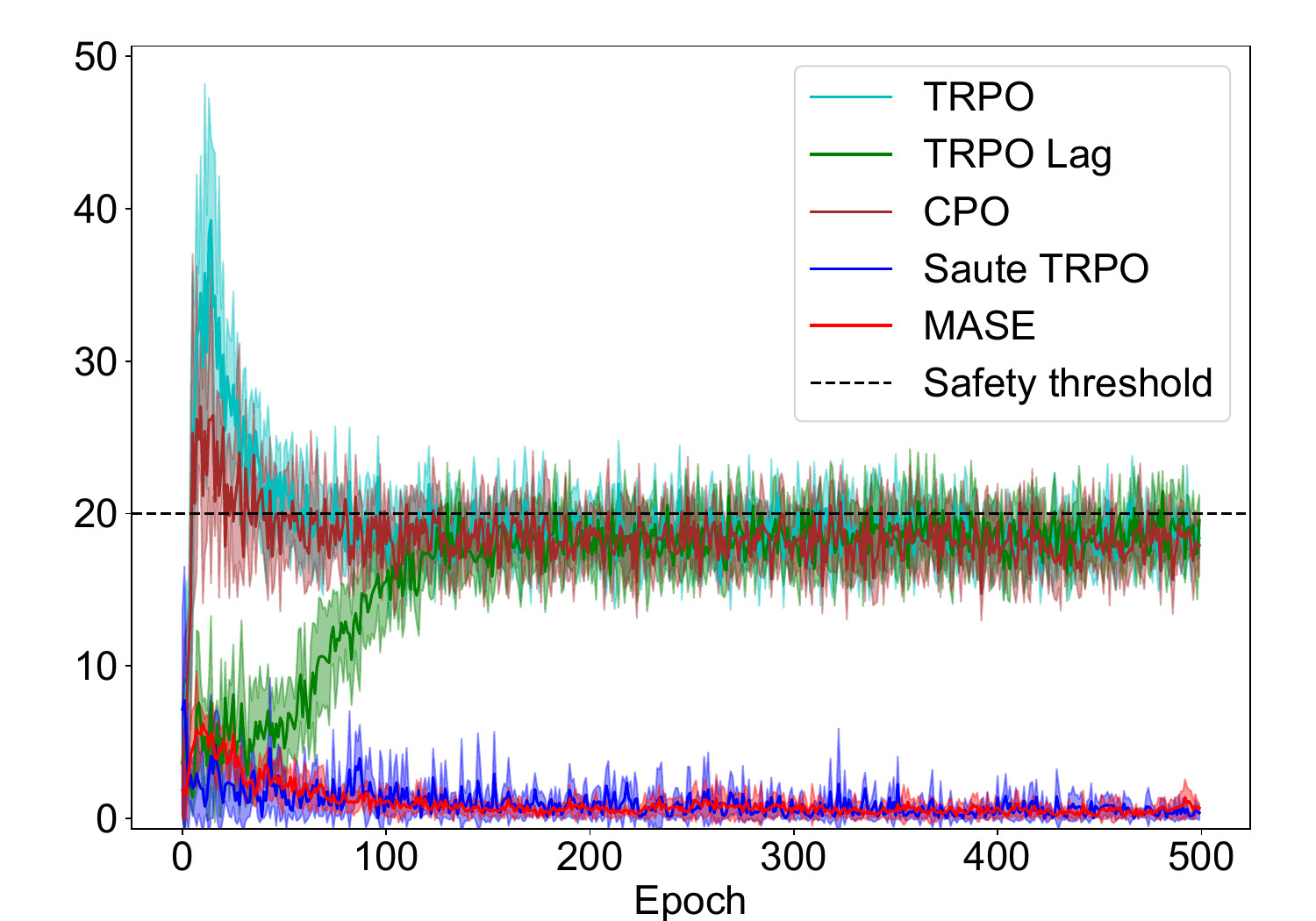}
        \caption{Average episode safety.}
        \label{fig:point_avecost}
    \end{subfigure}
    \hfill
    \begin{subfigure}[b]{0.31\textwidth}
        \centering
        \includegraphics[width=\textwidth]{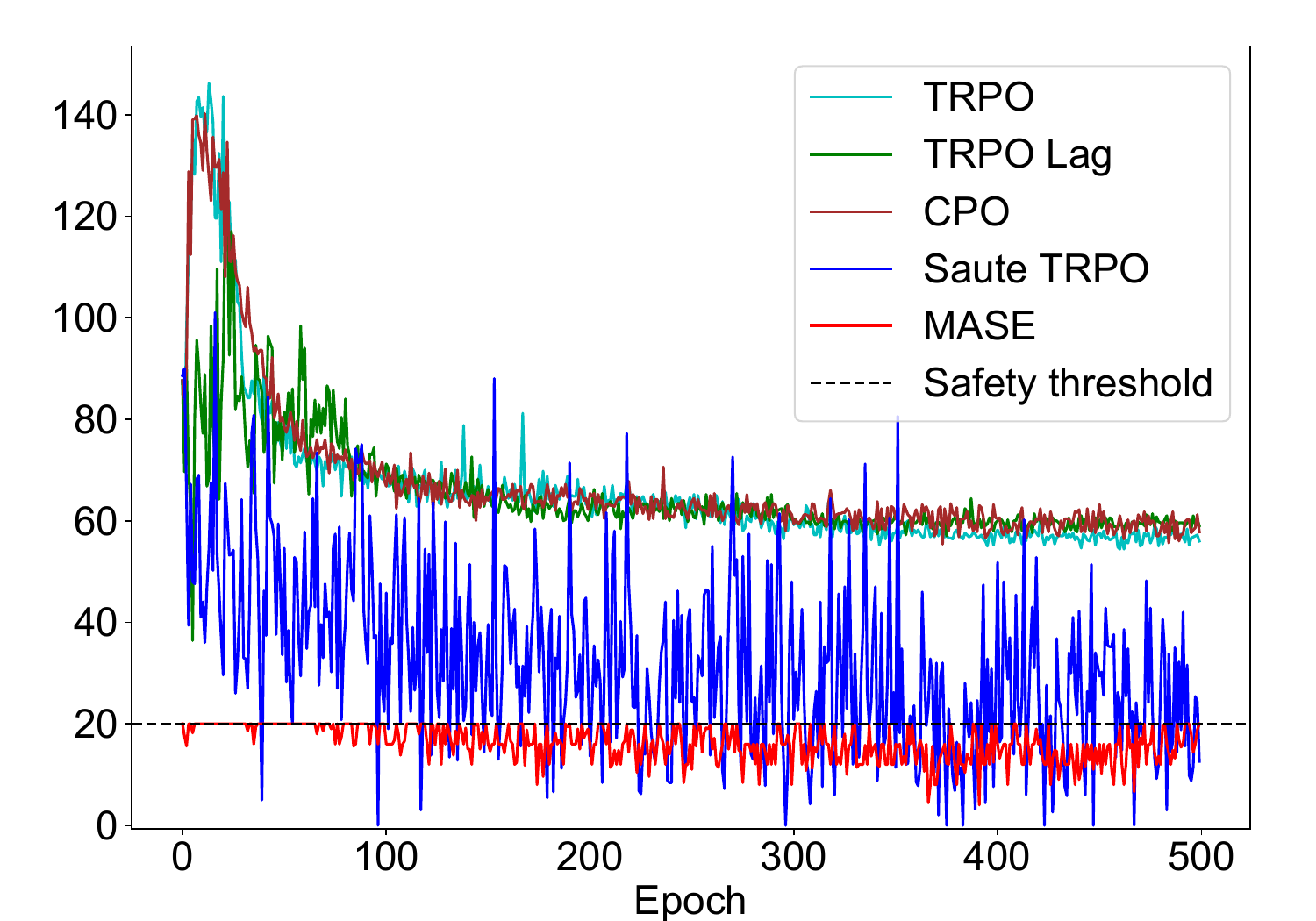}
        \caption{Maximum episode safety.}
        \label{fig:point_maxcost}
    \end{subfigure}
    \begin{subfigure}[b]{0.31\textwidth}
        \vspace{10pt}
        \centering
        \includegraphics[width=\textwidth]{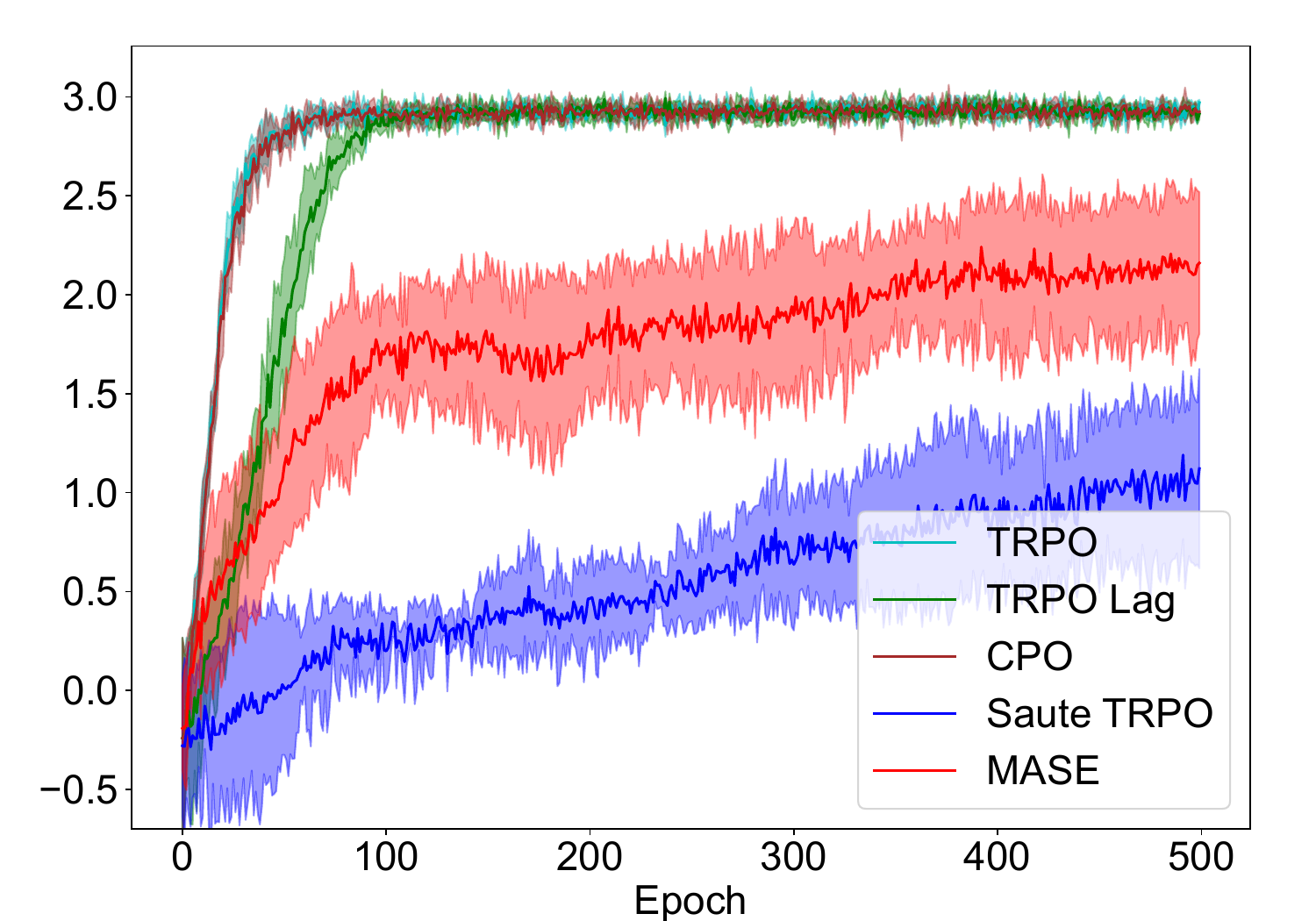}
        \caption{Average episode return.}
        \label{fig:car_return}
    \end{subfigure}
    \hfill
    \begin{subfigure}[b]{0.31\textwidth}
        \centering
        \includegraphics[width=\textwidth]{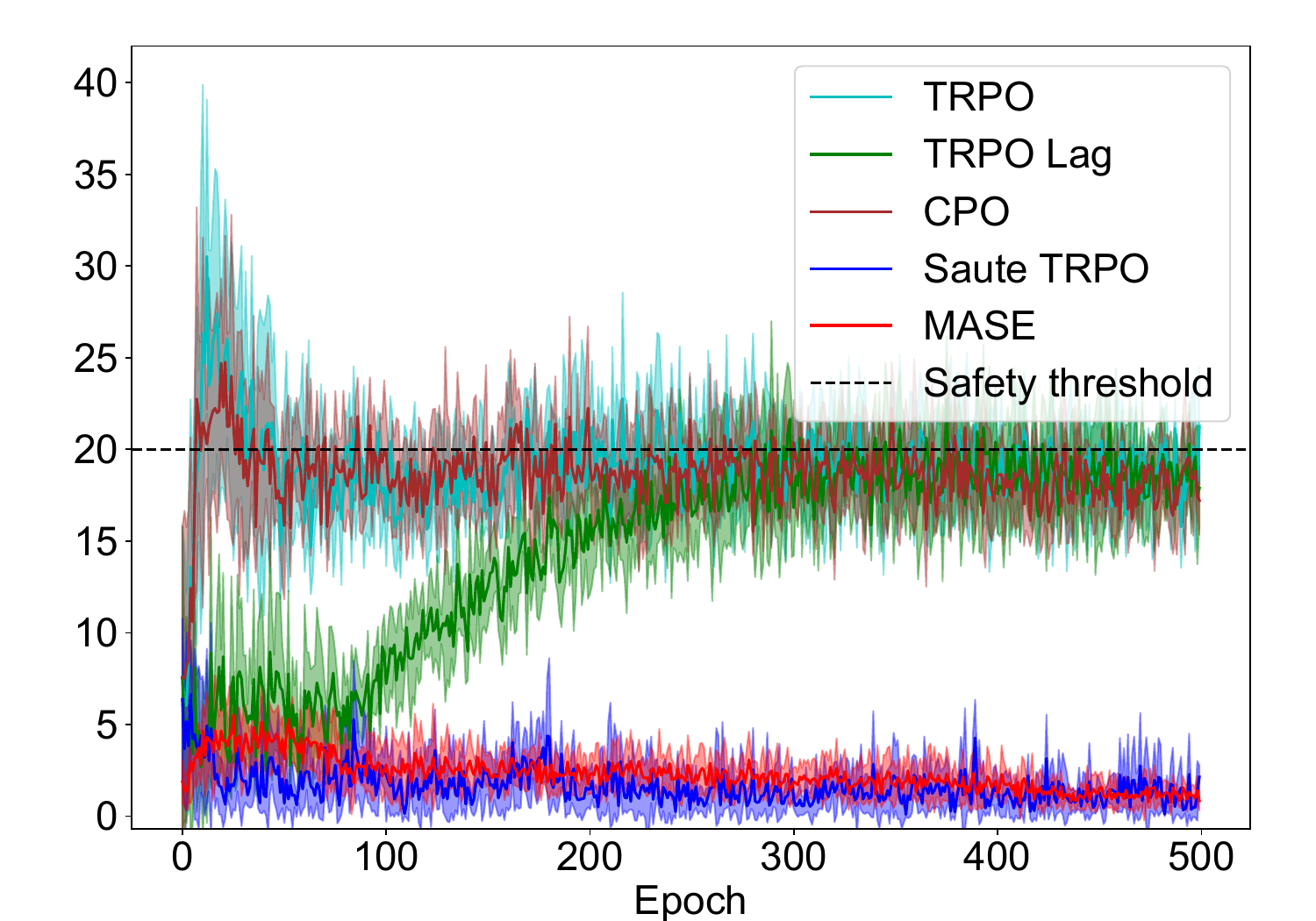}
        \caption{Average episode safety.}
        \label{fig:car_avecost}
    \end{subfigure}
    \hfill
    \begin{subfigure}[b]{0.31\textwidth}
        \centering
        \includegraphics[width=\textwidth]{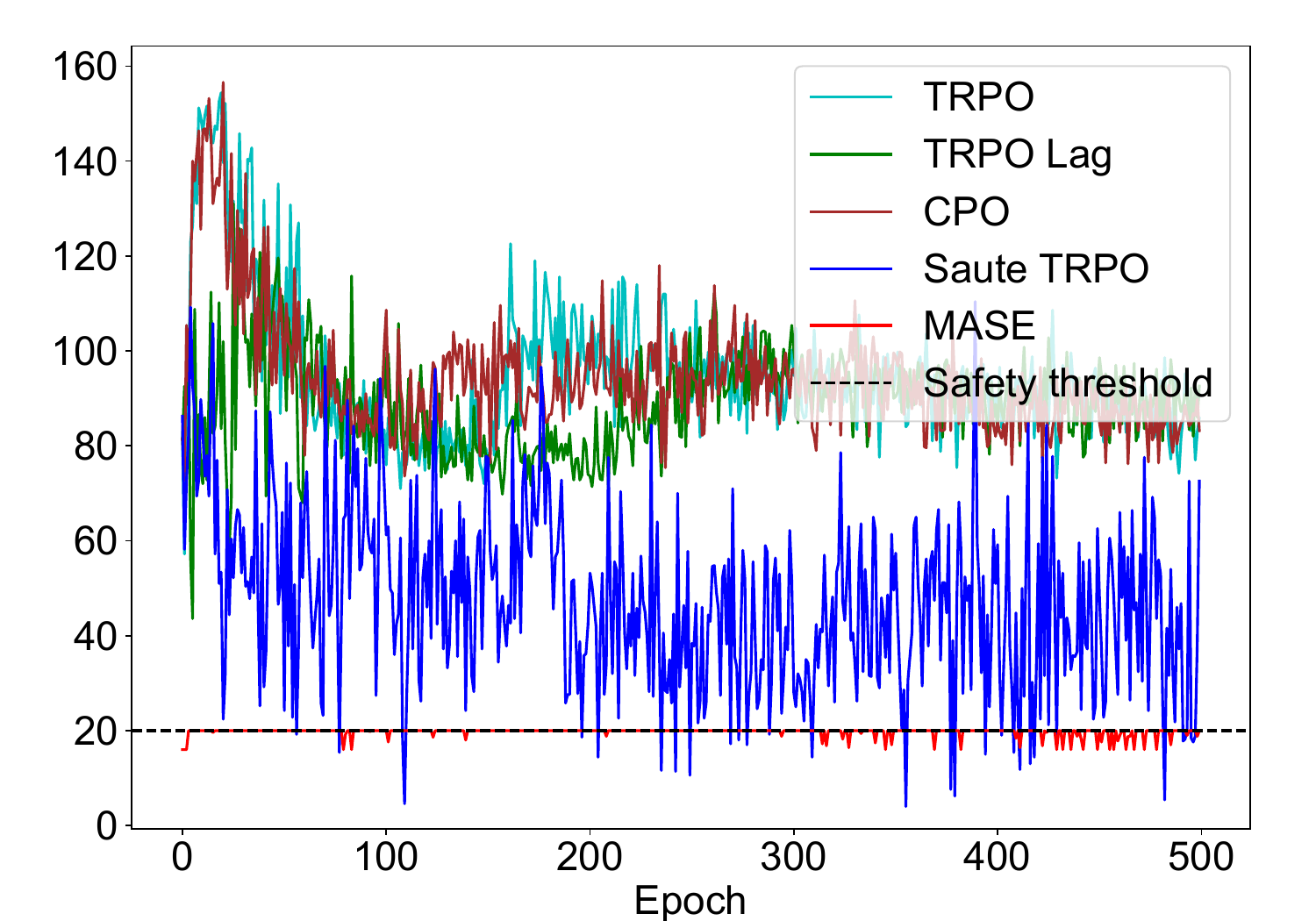}
        \caption{Maximum episode safety.}
        \label{fig:car_maxcost}
    \end{subfigure}
    \caption{Experimental results on Safety Gym (Top: PointGoal1, Bottom: CarGoal1). The proposed \algo~satisfies the safety constraint in every episode and achieves better performance in terms of the reward than the state-of-the-art method called Saut{\'e} RL. Conventional methods (i.e., TRPO, TRPO-Lagrangian, and CPO) repeatedly violate the safety constraint, especially in the early phase of training. Shaded areas represent 1$\sigma$ confidence intervals across five different random seeds.}
    \label{fig:experiment_safetygym}
\end{figure*}

\begin{figure*}[t]
    \centering
    \begin{subfigure}[b]{0.32\textwidth}
        \centering
        \includegraphics[width=\textwidth]{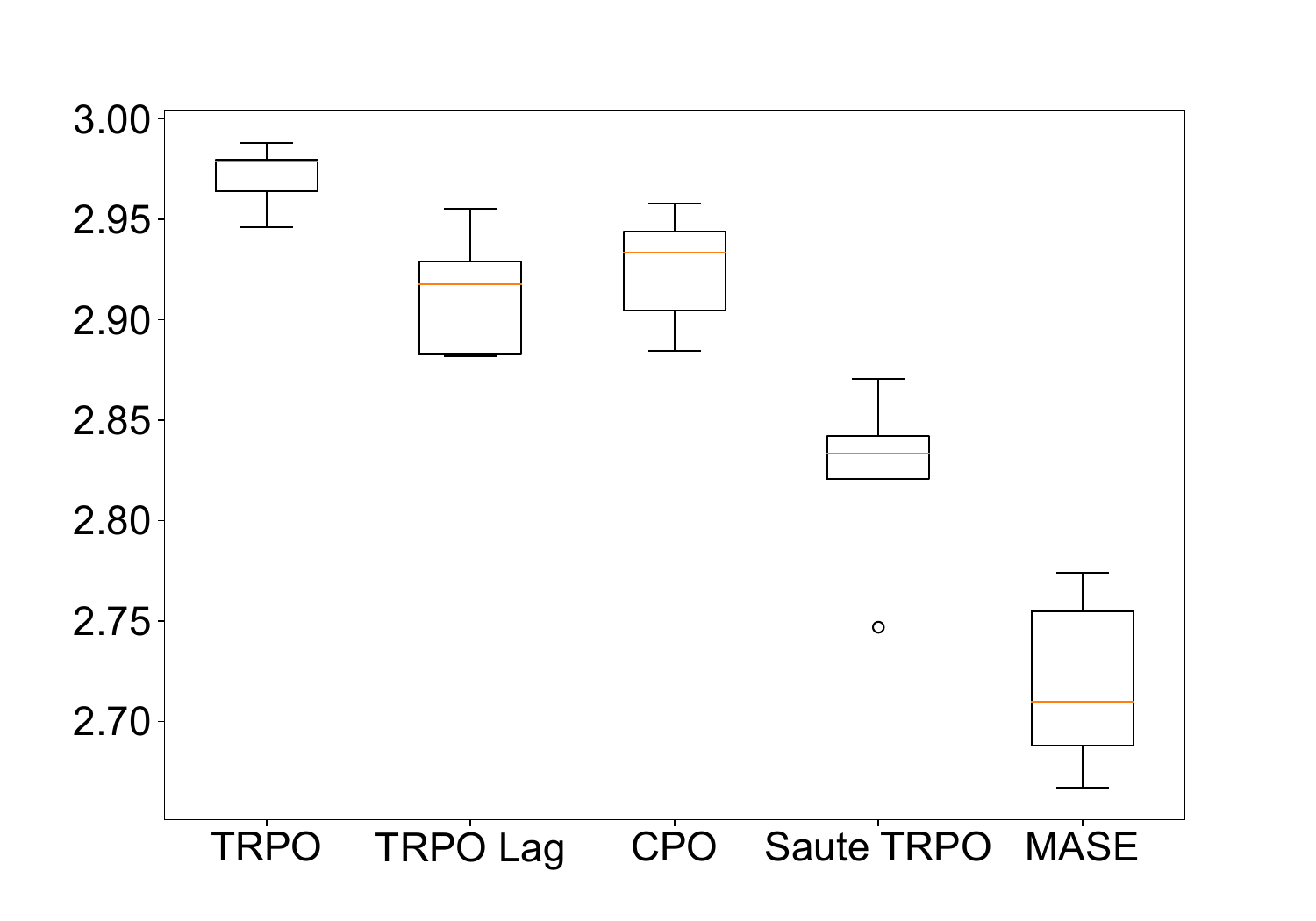}
        \caption{Average episode return.}
    \end{subfigure}
    \hfill
    \begin{subfigure}[b]{0.32\textwidth}
        \centering
        \includegraphics[width=\textwidth]{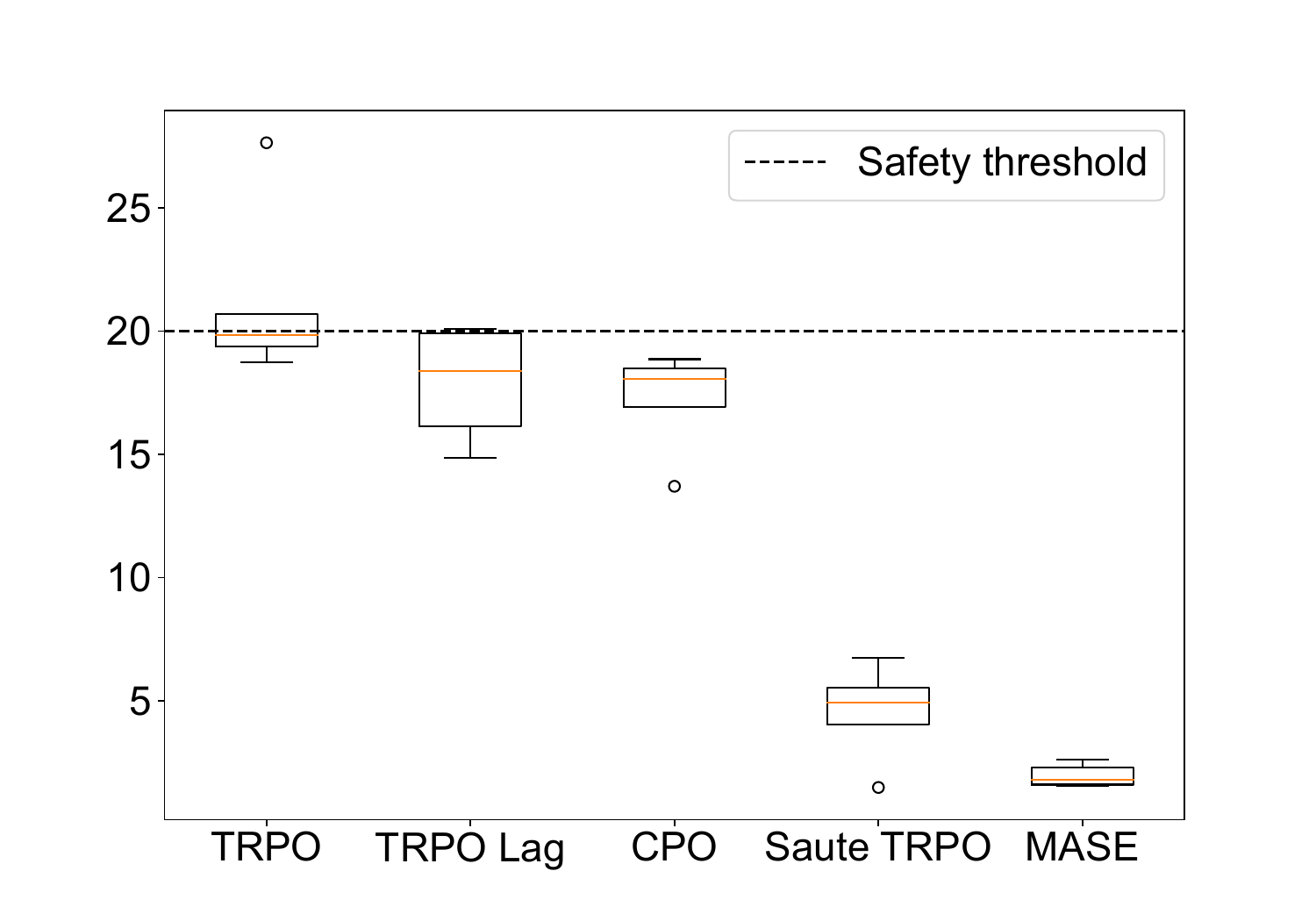}
        \caption{Average episode safety.}
    \end{subfigure}
    \hfill
    \begin{subfigure}[b]{0.32\textwidth}
        \centering
        \includegraphics[width=\textwidth]{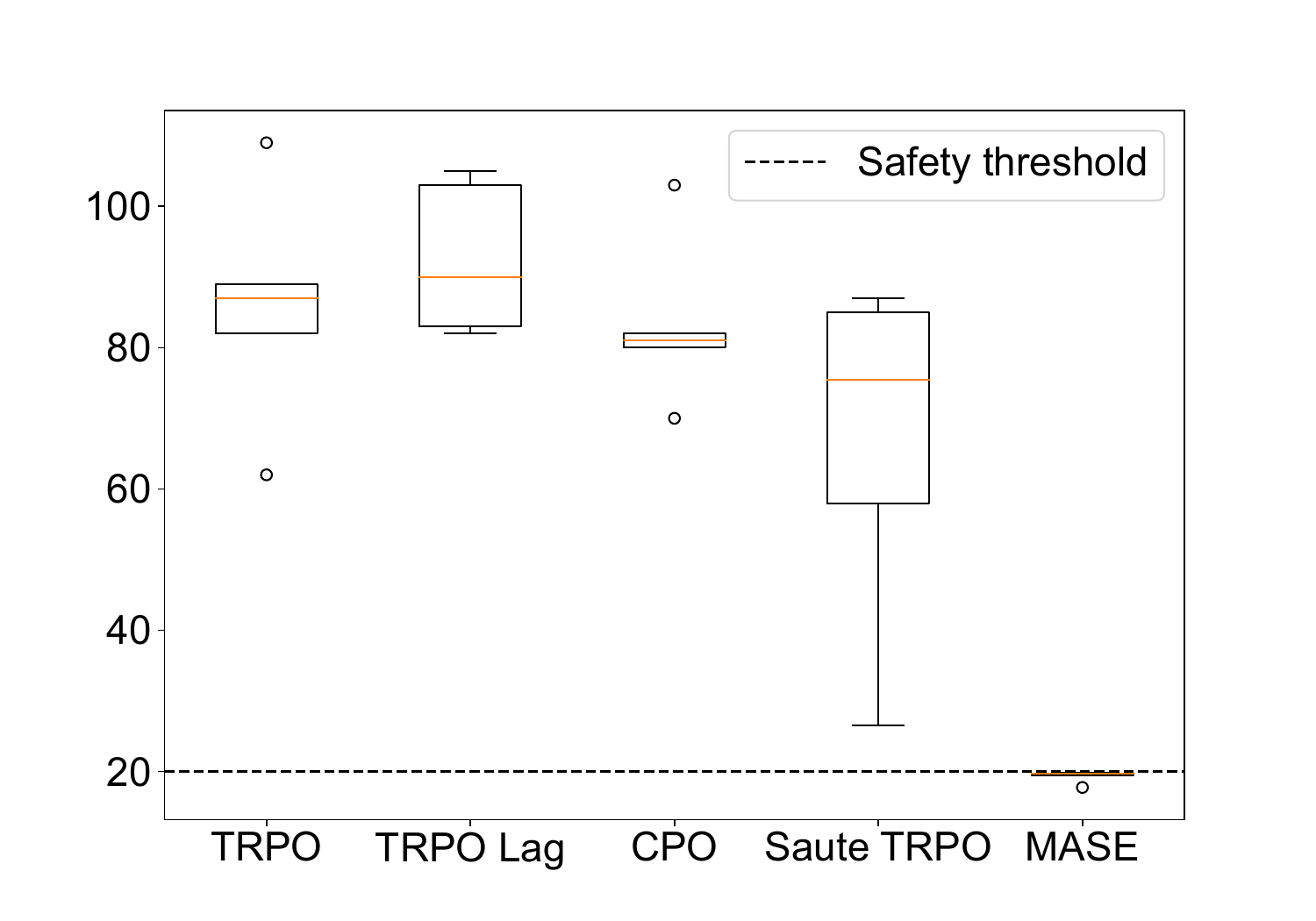}
        \caption{Maximum episode safety.}
    \end{subfigure}
    \caption{Experimental results on Safety Gym (CarGoal1) with the epoch of $1000$. The box plots show the converged performance. Though MASE performs worse than other baselines in terms of reward, the acquired policy is still near-optimal. As for safety, while baselines violate the safety constraint in most of the episodes, MASE guarantees the satisfaction of the severe safety constraint.}
    \label{fig:experiment_safetygym_epoch1000}
\end{figure*}

\textbf{Results.\space}
The experimental results are summarized in Figure~\ref{fig:experiment_safetygym}.
In both environments, the figures show that TRPO, TRPO-Lagrangian, and CPO successfully learn the policies, but violate the safety constraints during training and even after convergence.
Saut{\'e} RL is much safer than those three algorithms, but the safety constraint is not satisfied in some episodes, and the performance of the policy significantly deteriorates in terms of the the cumulative reward during training.
Our \algo~obtains better policies in a smaller number of samples compared with Saut{\'e} RL, while also satisfying the safety constraints with respect to both the average and the worst-case.
Note that, after convergence, the policy given by \algo~performs worse than those produced by the baseline algorithms in terms of reward, as shown in Figure~\ref{fig:experiment_safetygym_epoch1000}. 
The emergency stop action is a variant of resetting actions that are common in episodic RL settings, which prevent the agent from exploring the state-action spaces since the uncertainty quantifier is sometimes quite conservative.
We consider that this is a reason why the converged reward performance of \algo~is worse than other methods.
However, because we require the agent to solve difficult problems where safety is guaranteed at every time step and episode, we consider that this result is reasonable, and further performance improvements are left to future work.

\section{Conclusion}
\label{sec:conclusion}

In this article, we first introduced the \GSE~problem and proved that it is more general than three common safe RL problems.
We then proposed \algo~to optimize a policy under safety constraints that allow the agent to execute an emergency stop action at the sacrifice of a penalty based on the $\delta$-uncertainty qualifier.
As a specific instance of \algo, we first presented \glmalgo~to theoretically guarantee the near-optimality and safety of the acquired policy under generalized linear CMDP assumptions.
Finally, we provided a practical \algo~and empirically evaluated its performance in comparison with several baselines on the Safety Gym and grid-world.

\begin{ack}
We would like to thank the anonymous reviewers for their helpful comments. 
This work is partially supported by JST CREST JPMJCR201 and by JSPS KAKENHI Grant 21K14184. 
\end{ack}

\bibliographystyle{apalike}
\bibliography{ref}

\begin{thebibliography}{}

\bibitem[Achiam et~al., 2017]{achiam2017constrained}
Achiam, J., Held, D., Tamar, A., and Abbeel, P. (2017).
\newblock Constrained policy optimization.
\newblock In {\em International Conference on Machine Learning (ICML)}.

\bibitem[Alshiekh et~al., 2018]{alshiekh2018safe}
Alshiekh, M., Bloem, R., Ehlers, R., K{\"o}nighofer, B., Niekum, S., and Topcu,
  U. (2018).
\newblock Safe reinforcement learning via shielding.
\newblock In {\em AAAI Conference on Artificial Intelligence (AAAI)}.

\bibitem[Altman, 1999]{altman1999constrained}
Altman, E. (1999).
\newblock {\em Constrained Markov decision processes}, volume~7.
\newblock CRC Press.

\bibitem[Amani et~al., 2019]{amani2019linear}
Amani, S., Alizadeh, M., and Thrampoulidis, C. (2019).
\newblock Linear stochastic bandits under safety constraints.
\newblock In {\em Neural Information Processing Systems (NeurIPS)}.

\bibitem[Amani et~al., 2021]{amani2021safe}
Amani, S., Thrampoulidis, C., and Yang, L. (2021).
\newblock Safe reinforcement learning with linear function approximation.
\newblock In {\em International Conference on Machine Learning (ICML)}.

\bibitem[Amodei et~al., 2016]{amodei2016concrete}
Amodei, D., Olah, C., Steinhardt, J., Christiano, P., Schulman, J., and
  Man{\'e}, D. (2016).
\newblock Concrete problems in {AI} safety.
\newblock {\em arXiv preprint arXiv:1606.06565}.

\bibitem[Auer and Ortner, 2007]{auer2007logarithmic}
Auer, P. and Ortner, R. (2007).
\newblock Logarithmic online regret bounds for undiscounted reinforcement
  learning.
\newblock In {\em Neural Information Processing Systems (NeurIPS)}.

\bibitem[Bajracharya et~al., 2008]{bajracharya2008autonomy}
Bajracharya, M., Maimone, M.~W., and Helmick, D. (2008).
\newblock Autonomy for mars rovers: Past, present, and future.
\newblock {\em Computer}, 41(12):44--50.

\bibitem[Brunke et~al., 2022]{brunke2022safe}
Brunke, L., Greeff, M., Hall, A.~W., Yuan, Z., Zhou, S., Panerati, J., and
  Schoellig, A.~P. (2022).
\newblock Safe learning in robotics: From learning-based control to safe
  reinforcement learning.
\newblock {\em Annual Review of Control, Robotics, and Autonomous Systems},
  5:411--444.

\bibitem[Carr et~al., 2023]{carr2023safe}
Carr, S., Jansen, N., Junges, S., and Topcu, U. (2023).
\newblock Safe reinforcement learning via shielding under partial
  observability.
\newblock In {\em AAAI Conference on Artificial Intelligence}.

\bibitem[Chow et~al., 2017]{chow2017risk}
Chow, Y., Ghavamzadeh, M., Janson, L., and Pavone, M. (2017).
\newblock Risk-constrained reinforcement learning with percentile risk
  criteria.
\newblock {\em Journal of Machine Learning Research (JMLR)}, 18(1):6070--6120.

\bibitem[Chowdhury and Gopalan, 2017]{chowdhury2017kernelized}
Chowdhury, S.~R. and Gopalan, A. (2017).
\newblock On kernelized multi-armed bandits.
\newblock In {\em International Conference on Machine Learning (ICML)}.

\bibitem[Ding et~al., 2020]{ding2020natural}
Ding, D., Zhang, K., Basar, T., and Jovanovic, M. (2020).
\newblock Natural policy gradient primal-dual method for constrained {M}arkov
  decision processes.
\newblock In {\em Neural Information Processing Systems (NeurIPS)}.

\bibitem[Dulac-Arnold et~al., 2021]{dulac2021challenges}
Dulac-Arnold, G., Levine, N., Mankowitz, D.~J., Li, J., Paduraru, C., Gowal,
  S., and Hester, T. (2021).
\newblock Challenges of real-world reinforcement learning: definitions,
  benchmarks and analysis.
\newblock {\em Machine Learning}.

\bibitem[Eysenbach et~al., 2018]{eysenbach2018leave}
Eysenbach, B., Gu, S., Ibarz, J., and Levine, S. (2018).
\newblock Leave no trace: Learning to reset for safe and autonomous
  reinforcement learning.
\newblock In {\em International Conference on Learning Representations (ICLR)}.

\bibitem[Filippi et~al., 2010]{NIPS2010_4166}
Filippi, S., Cappe, O., Garivier, A., and Szepesv\'{a}ri, C. (2010).
\newblock Parametric bandits: The generalized linear case.
\newblock In {\em Neural Information Processing Systems (NeurIPS)}.

\bibitem[Fujimoto et~al., 2018]{pmlr-v80-fujimoto18a}
Fujimoto, S., van Hoof, H., and Meger, D. (2018).
\newblock Addressing function approximation error in actor-critic methods.
\newblock In {\em International Conference on Machine Learning (ICML)}.

\bibitem[Garc{\i}a and Fern{\'a}ndez, 2015]{garcia2015comprehensive}
Garc{\i}a, J. and Fern{\'a}ndez, F. (2015).
\newblock A comprehensive survey on safe reinforcement learning.
\newblock {\em Journal of Machine Learning Research (JMLR)}, 16(1):1437--1480.

\bibitem[Jin et~al., 2020]{jin2020provably}
Jin, C., Yang, Z., Wang, Z., and Jordan, M.~I. (2020).
\newblock Provably efficient reinforcement learning with linear function
  approximation.
\newblock In {\em Conference on Learning Theory (COLT)}.

\bibitem[K{\"o}nighofer et~al., 2020]{konighofer2020safe}
K{\"o}nighofer, B., Bloem, R., Junges, S., Jansen, N., and Serban, A. (2020).
\newblock Safe reinforcement learning using probabilistic shields.
\newblock In {\em International Conference on Concurrency Theory: CONCUR}.

\bibitem[Li et~al., 2010]{li2010contextual}
Li, L., Chu, W., Langford, J., and Schapire, R.~E. (2010).
\newblock A contextual-bandit approach to personalized news article
  recommendation.
\newblock In {\em International Conference on World Wide Web (WWW)}.

\bibitem[Li et~al., 2017]{li2017provably}
Li, L., Lu, Y., and Zhou, D. (2017).
\newblock Provably optimal algorithms for generalized linear contextual
  bandits.
\newblock In {\em International Conference on Machine Learning (ICML)}, pages
  2071--2080.

\bibitem[Melcer et~al., 2022]{melcer2022shield}
Melcer, D., Amato, C., and Tripakis, S. (2022).
\newblock Shield decentralization for safe multi-agent reinforcement learning.
\newblock In {\em Neural Information Processing Systems (NeurIPS)}.

\bibitem[Ono et~al., 2015]{ono2015chance}
Ono, M., Pavone, M., Kuwata, Y., and Balaram, J. (2015).
\newblock Chance-constrained dynamic programming with application to risk-aware
  robotic space exploration.
\newblock {\em Autonomous Robots}, 39(4):555--571.

\bibitem[Paternain et~al., 2019]{paternain2019constrained}
Paternain, S., Chamon, L., Calvo-Fullana, M., and Ribeiro, A. (2019).
\newblock Constrained reinforcement learning has zero duality gap.
\newblock {\em Neural Information Processing Systems (NeurIPS)}.

\bibitem[Pfrommer et~al., 2022]{pmlr-v168-pfrommer22a}
Pfrommer, S., Gautam, T., Zhou, A., and Sojoudi, S. (2022).
\newblock Safe reinforcement learning with chance-constrained model predictive
  control.
\newblock In {\em Learning for Dynamics and Control Conference (L4DC)}, pages
  291--303.

\bibitem[Rasmussen, 2003]{rasmussen2003gaussian}
Rasmussen, C.~E. (2003).
\newblock Gaussian processes in machine learning.
\newblock In {\em Summer school on machine learning}, pages 63--71. Springer.

\bibitem[Ray et~al., 2019]{Ray2019}
Ray, A., Achiam, J., and Amodei, D. (2019).
\newblock {\em Benchmarking safe exploration in deep reinforcement learning}.
\newblock OpenAI.

\bibitem[Rockafellar et~al., 2000]{rockafellar2000optimization}
Rockafellar, R.~T., Uryasev, S., et~al. (2000).
\newblock Optimization of conditional value-at-risk.
\newblock {\em Journal of risk}, 2:21--42.

\bibitem[Salimbeni and Deisenroth, 2017]{salimbeni2017doubly}
Salimbeni, H. and Deisenroth, M. (2017).
\newblock Doubly stochastic variational inference for deep {G}aussian
  processes.
\newblock In {\em Neural Information Processing Systems (NeurIPS)}.

\bibitem[Schulman et~al., 2015]{schulman2015trust}
Schulman, J., Levine, S., Abbeel, P., Jordan, M., and Moritz, P. (2015).
\newblock Trust region policy optimization.
\newblock In {\em International Conference on Machine Learning (ICML)}.

\bibitem[Sootla et~al., 2022]{sootla2022saute}
Sootla, A., Cowen-Rivers, A.~I., Jafferjee, T., Wang, Z., Mguni, D.~H., Wang,
  J., and Ammar, H. (2022).
\newblock Saut{\'e} {RL}: Almost surely safe reinforcement learning using state
  augmentation.
\newblock In {\em International Conference on Machine Learning (ICML)}.

\bibitem[Stooke et~al., 2020]{stooke2020responsive}
Stooke, A., Achiam, J., and Abbeel, P. (2020).
\newblock Responsive safety in reinforcement learning by pid {L}agrangian
  methods.
\newblock In {\em International Conference on Machine Learning (ICML)}, pages
  9133--9143.

\bibitem[Strehl and Littman, 2005]{strehl2005theoretical}
Strehl, A.~L. and Littman, M.~L. (2005).
\newblock A theoretical analysis of model-based interval estimation.
\newblock In {\em International Conference on Machine Learning (ICML)}.

\bibitem[Strehl and Littman, 2008]{strehl2008analysis}
Strehl, A.~L. and Littman, M.~L. (2008).
\newblock An analysis of model-based interval estimation for {Markov} decision
  processes.
\newblock {\em Journal of Computer and System Sciences}, 74(8):1309--1331.

\bibitem[Sui et~al., 2015]{sui2015safe}
Sui, Y., Gotovos, A., Burdick, J.~W., and Krause, A. (2015).
\newblock Safe exploration for optimization with {Gaussian} processes.
\newblock In {\em International Conference on Machine Learning (ICML)}.

\bibitem[Sun et~al., 2021]{sun2021safe}
Sun, H., Xu, Z., Fang, M., Peng, Z., Guo, J., Dai, B., and Zhou, B. (2021).
\newblock Safe exploration by solving early terminated {MDP}.
\newblock {\em arXiv preprint arXiv:2107.04200}.

\bibitem[Thananjeyan et~al., 2021]{thananjeyan2021recovery}
Thananjeyan, B., Balakrishna, A., Nair, S., Luo, M., Srinivasan, K., Hwang, M.,
  Gonzalez, J.~E., Ibarz, J., Finn, C., and Goldberg, K. (2021).
\newblock Recovery {RL}: Safe reinforcement learning with learned recovery
  zones.
\newblock {\em IEEE Robotics and Automation Letters}, 6(3):4915--4922.

\bibitem[Thomas et~al., 2021]{thomas2021safe}
Thomas, G., Luo, Y., and Ma, T. (2021).
\newblock Safe reinforcement learning by imagining the near future.
\newblock In {\em Neural Information Processing Systems (NeurIPS)}.

\bibitem[Turchetta et~al., 2020]{turchetta2020safe}
Turchetta, M., Kolobov, A., Shah, S., Krause, A., and Agarwal, A. (2020).
\newblock Safe reinforcement learning via curriculum induction.
\newblock In {\em Neural Information Processing Systems (NeurIPS)}, volume~33,
  pages 12151--12162.

\bibitem[Wachi and Sui, 2020]{wachi_sui_snomdp_icml2020}
Wachi, A. and Sui, Y. (2020).
\newblock Safe reinforcement learning in constrained {M}arkov decision
  processes.
\newblock In {\em International Conference on Machine Learning (ICML)}.

\bibitem[Wachi et~al., 2018]{wachi2018safe}
Wachi, A., Sui, Y., Yue, Y., and Ono, M. (2018).
\newblock Safe exploration and optimization of constrained {MDP}s using
  {Gaussian} processes.
\newblock In {\em {AAAI} Conference on Artificial Intelligence ({AAAI})}.

\bibitem[Wachi et~al., 2021]{wachi2021safe}
Wachi, A., Wei, Y., and Sui, Y. (2021).
\newblock Safe policy optimization with local generalized linear function
  approximations.
\newblock In {\em Neural Information Processing Systems (NeurIPS)}.

\bibitem[Wang et~al., 2020]{wang2020optimism}
Wang, Y., Wang, R., Du, S.~S., and Krishnamurthy, A. (2020).
\newblock Optimism in reinforcement learning with generalized linear function
  approximation.
\newblock In {\em International Conference on Learning Representations (ICLR)}.

\bibitem[Wang et~al., 2023]{wang2023enforcing}
Wang, Y., Zhan, S.~S., Jiao, R., Wang, Z., Jin, W., Yang, Z., Wang, Z., Huang,
  C., and Zhu, Q. (2023).
\newblock Enforcing hard constraints with soft barriers: Safe reinforcement
  learning in unknown stochastic environments.
\newblock In {\em International Conference on Machine Learning}, pages
  36593--36604. PMLR.

\bibitem[Yang and Wang, 2019]{yang2019sample}
Yang, L. and Wang, M. (2019).
\newblock Sample-optimal parametric {Q}-learning using linearly additive
  features.
\newblock In {\em International Conference on Machine Learning (ICML)}.

\end{thebibliography}

\newpage
\appendix
\onecolumn
\begin{center}
    \LARGE{Appendices}
\end{center}

\section{Limitations and Potential Negative Societal Impacts}

We will first discuss limitations and potential negative societal impacts regarding our work.

\subsection{Limitations}
One of the major limitations of this study is that emergency stop actions are allowed for agents.
Emergency stop actions should often be avoided because of the expensive cost of human intervention in many applications (e.g., the agent is in a hazardous or remote environment).
In future work, we will investigate an algorithm that requires the agent to learn a reset policy allowing them to return to the initial state as in \citep{eysenbach2018leave}, rather than asking for human intervention via emergency stop actions.

Another limitation is how to construct the uncertainty quantifier.
In our experiment, because we used a computationally-inexpensive deep GP algorithm \cite{salimbeni2017doubly} and the uncertainty quantifier is updated at the end of the episode (see Line 13 in Algorithm~\ref{alg:mase}), the computational time of the GP part was much smaller than the RL part in our experiment settings.
However, since GP is a computationally-expensive algorithm in general, GP can be a computational bottleneck in some cases.

\subsection{Potential Negative Societal Impacts}
We believe that safety is an essential requirement for applying RL in many real problems.
While we have not found any potential negative societal impact of our proposed method, we must remain aware that any RL algorithms are vulnerable to misuse and ours is no exception.

\section{Proof of Theorem~\ref{theorem:generality}}
\label{appendix:theorem21}

We first present lemmas regarding the relationship between the \GSE~problem and Problems \ref{problem:cumulative_safety}, \ref{problem:state_safety}, and \ref{problem:every_time_safety}.
After that, we present the proof for the Theorem~\ref{theorem:generality} in Appendix~\ref{appendix:theorem2-1} by combining them.

\subsection{Relationship between the \GSE~problem and Problem~\ref{problem:cumulative_safety}}
\label{appendix:gse_p2}

\begin{lemma}
  \label{lemma:gse_vs_p2}
  Problem~\ref{problem:cumulative_safety} can be transformed into the \GSE~problem.
\end{lemma}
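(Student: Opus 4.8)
The plan is to realize Problem~\ref{problem:cumulative_safety} as an instance of the \GSE~problem by choosing the time-varying threshold $b_h$ to track the remaining (discounted) safety budget along the trajectory, exactly as anticipated in the main text. Concretely, I would set $b_h$ by the recursion $b_{h+1} = \gamma_g^{-1}(b_h - g(s_h, a_h))$ initialized so that $b_1 = \gamma_g^{-1}\xi_1$ (matching the $\eta$-recursion stated in the body up to the index convention), and then argue that the conjunction of the instantaneous constraints $g(s_h, a_h) \le b_h$ over all $h \in [1,H]$ carves out exactly the same almost-sure feasible set of trajectories as the single cumulative constraint $\sum_{h=1}^H \gamma_g^h g(s_h, a_h) \le \xi_1$. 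Since both problems share the identical objective $\max_\pi V_r^\pi$, equality of the feasible policy sets is what establishes the transformation.

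First I would unroll the recursion to obtain the closed form $b_h = \gamma_g^{-h}\xi_1 - \sum_{h'=1}^{h-1}\gamma_g^{-(h-h')} g(s_{h'},a_{h'})$ (empty sum at $h=1$), verified by a short induction on $h$. Multiplying the instantaneous inequality $g(s_h,a_h)\le b_h$ by the positive factor $\gamma_g^{h}$ (here $\gamma_g \in (0,1]$ guarantees $\gamma_g > 0$, so the inequality direction is preserved) yields the pathwise equivalence
\[
  g(s_h, a_h) \le b_h \quad\iff\quad \sum_{h'=1}^{h} \gamma_g^{h'} g(s_{h'}, a_{h'}) \le \xi_1 .
\]
This algebraic identity is the crux of the argument: it says that each instantaneous constraint coincides with the cumulative constraint truncated at step $h$.

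With this identity the two inclusions are immediate. For the forward direction, if all instantaneous constraints hold then the one at $h = H$ is, verbatim, the cumulative constraint, so every \GSE-feasible trajectory is Problem~\ref{problem:cumulative_safety}-feasible. For the reverse direction I would invoke the nonnegativity $g: \calS\times\calA \to [0,1]$: since every summand $\gamma_g^{h'} g(s_{h'},a_{h'}) \ge 0$, the partial sums are nondecreasing in $h$ and hence bounded above by the full sum, so the cumulative constraint implies every truncated partial-sum constraint and therefore every instantaneous constraint. Because all of these statements hold trajectory-by-trajectory, they transfer directly to the almost-sure ($\Pr[\cdot]=1$) versions demanded by both problems, and the feasible policy sets coincide.

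The step I expect to require the most care is the reverse direction together with the measurability and timing of $b_h$: I must confirm that $b_h$ depends only on the already-observed costs $g(s_{h'},a_{h'})$ for $h' < h$, so that it is genuinely \emph{myopically known} at time $h$ as the \GSE~formulation requires, and I should emphasize that the reliance on $g \ge 0$ is essential (without nonnegativity the cumulative bound would not control the intermediate partial sums). Reconciling the index conventions, namely the objective's sum starting at $h=1$ versus the recursion's initialization, is routine but is precisely where an off-by-one error would most easily slip in, so I would pin down the exponents explicitly when verifying the closed form.
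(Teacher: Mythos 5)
Your proof is correct and follows essentially the same route as the paper's: both introduce the remaining discounted safety budget (your $b_h$ is exactly the paper's $\eta_h = \gamma_g^{-h}(\xi_1 - \sum_{h'=1}^{h-1}\gamma_g^{h'}g(s_{h'},a_{h'}))$) and show the instantaneous constraints $g(s_h,a_h)\le b_h$ carve out the same feasible set as the cumulative constraint. If anything, you are more explicit than the paper about the reverse inclusion resting on $g\ge 0$ (so that the partial sums are dominated by the full sum) and about the off-by-one in the initialization of the recursion, both of which the paper leaves implicit.
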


\begin{proof}
  We first utilize a safety state augmentation technique presented in \citet{sootla2022saute} by defining a new variable $\eta_h$ such that
  \begin{equation}
    \label{eq:eta_1}
    \eta_h \coloneqq \gamma_g^{-h} \cdot \left(\, \xi_1 - \sum_{h'=1}^{h-1} \gamma_g^{h'} g(s_{h'}, a_{h'}) \, \right), \quad \forall h \in [1, H].
  \end{equation}
  This new variable $\eta_h$ means the remaining safety budget associated with the discount factor $\gamma_g$, which is updated as follows:
  \begin{alignat}{2}
    \label{eq:eta_2}
    \eta_{h+1} = \gamma_g^{-1} \cdot \left(\, \eta_h - g(s_h, a_h) \, \right) \quad \text{with} \quad \eta_0 = \xi_1.
  \end{alignat}
  By \eqref{eq:eta_1}, the necessary and sufficient condition for satisfying the constraint in Problem~\ref{problem:cumulative_safety} is
  \begin{equation}
    \eta_h \ge 0, \quad \forall h \in [1, H].
  \end{equation}
  By \eqref{eq:eta_2}, we have
  \begin{equation}
    \eta_{h+1} \ge 0, \quad \forall h \in [1, H] \iff \eta_h - g(s_h, a_h) \ge 0, \quad \forall h \in [1, H].
  \end{equation}
  In summary, by introducing the new variable $\eta_h$, Problem~\ref{problem:cumulative_safety} is rewritten to the following problem:
  \begin{equation}
    \label{lemma:gse_vs_p2_final_eq}
    \max_{\pi} V^\pi_r \quad \text{subject to} \quad  \Pr[\, g(s_h, a_h) \le \eta_h \mid \calP, \pi \,] = 1, \quad \forall h \in [1, H].
  \end{equation}
  The aforementioned problem \eqref{lemma:gse_vs_p2_final_eq} is a special case of the \GSE~problem with $b_h \coloneqq \eta_h$.
  Therefore, we obtain the desired lemma.
\end{proof}

\subsection{Relationship between the \GSE~problem and Problem~\ref{problem:state_safety}}

\begin{lemma}
  \label{lemma:gse_vs_p3}
  Problem~\ref{problem:state_safety} can be transformed into the \GSE~problem.
\end{lemma}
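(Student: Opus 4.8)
The plan is to mirror the construction that was just used for Lemma~\ref{lemma:gse_vs_p2}. First I would choose the safety cost function to be the indicator of the unsafe set, $g(s,a) \coloneqq \mathbb{I}(s \in S_\text{unsafe})$. This is a legitimate cost: it takes values in $\{0,1\} \subset [0,1]$, and although it depends only on the state, the definition $g : \calS \times \calA \to [0,1]$ permits this. With this choice the summand in Problem~\ref{problem:state_safety} becomes exactly $\gamma_g^h\, g(s_h,a_h)$, so the discounted state-visitation penalty is recast as an ordinary discounted safety cost of the form appearing in Problem~\ref{problem:cumulative_safety}.

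Next I would reuse the safety-budget augmentation from \eqref{eq:eta_1}--\eqref{eq:eta_2}, now initialized with the threshold $\xi_2$:
\[
  \eta_h \coloneqq \gamma_g^{-h}\Bigl(\xi_2 - \sum_{h'=1}^{h-1}\gamma_g^{h'} g(s_{h'},a_{h'})\Bigr), \qquad \eta_{h+1} = \gamma_g^{-1}(\eta_h - g(s_h,a_h)), \quad \eta_0 = \xi_2 .
\]
Because $\eta_h$ is computed only from states and actions observed strictly before step $h$, it is myopically known and hence a valid \GSE~threshold. The telescoping argument of Lemma~\ref{lemma:gse_vs_p2} then shows that, along any fixed trajectory, requiring $g(s_h,a_h)\le\eta_h$ for all $h$ is equivalent to $\eta_h \ge 0$ for all $h$, and (since every summand is nonnegative) equivalent to the single cumulative bound $\sum_{h=1}^H \gamma_g^h\, \mathbb{I}(s_h \in S_\text{unsafe}) \le \xi_2$. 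Setting $b_h \coloneqq \eta_h$ therefore produces the instantaneous \GSE~constraint $\Pr[\,g(s_h,a_h)\le b_h\,]=1$ for all $h$.

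The step I expect to be the main obstacle is reconciling the almost-sure instantaneous constraint of the \GSE~problem with the in-expectation constraint of Problem~\ref{problem:state_safety}. The construction above controls the cumulative cost on every realized trajectory, whereas Problem~\ref{problem:state_safety} only bounds its expectation. The sound direction is immediate: if $\sum_{h=1}^H \gamma_g^h\, \mathbb{I}(s_h \in S_\text{unsafe}) \le \xi_2$ holds almost surely, taking expectations preserves the inequality, so every policy feasible for the transformed \GSE~instance is feasible for Problem~\ref{problem:state_safety}, i.e.\ the \GSE~feasible set embeds into that of Problem~\ref{problem:state_safety}. This containment is the conservative reduction the paper needs, since any policy obtained by solving the \GSE~instance automatically satisfies the original state constraint. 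To argue that the two feasible sets actually coincide --- the strong reading of ``transformed into'' --- I would isolate the regimes where the converse also holds: when the transition kernel and policy are deterministic the expectation collapses onto the realized sum and the constraints are literally identical, and in the strict regime $\xi_2 = 0$ the nonnegativity of each term forces $\Pr[s_h \in S_\text{unsafe}]=0$ for all $h$, so the in-expectation and almost-sure versions coincide and the transformation with $b_h \equiv 0$ is exact. I would therefore present the general-$\xi_2$ case as this conservative embedding and flag the deterministic and $\xi_2=0$ cases as those in which the correspondence is tight, which suffices to establish the lemma in the sense required by Theorem~\ref{theorem:generality}.
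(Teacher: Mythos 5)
Your construction takes a genuinely different route from the paper's, and the difference matters for what the lemma is supposed to deliver. The paper never leaves the world of expectations: it rewrites the constraint of Problem~\ref{problem:state_safety} through a reversible chain of inequalities (using only nonnegativity of the summands and linearity of expectation), then defines the cost as the marginal probability $g(s_h,a) \coloneqq \mbE[\,\mathbb{I}(s_h \in S_\text{unsafe}) \mid \calP,\pi\,]$ and the threshold as the remaining \emph{expected} budget $b_h \coloneqq \gamma_g^{-h}\bigl\{\xi_2 - \mbE[\sum_{h'=1}^{h-1}\gamma_g^{h'}\mathbb{I}(s_{h'}\in S_\text{unsafe}) \mid \calP,\pi]\bigr\}$. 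Both sides of the resulting per-step inequality are deterministic, so the ``$\Pr[\cdot]=1$'' wrapper is vacuous and the feasible set is preserved exactly. You instead take $g$ to be the realized indicator and $b_h$ the realized budget, which imposes the cumulative bound on every trajectory --- strictly stronger than the in-expectation constraint of Problem~\ref{problem:state_safety}. You correctly diagnose this as the crux and honestly present it as a one-directional, conservative embedding, tight only for deterministic dynamics or $\xi_2=0$. Under the strong reading the paper advertises (``the feasible policy space in the \GSE~problem can be identical to those in the other three problems''), your general-$\xi_2$ argument therefore does not prove the lemma as stated; it proves the weaker statement that the \GSE~problem conservatively approximates Problem~\ref{problem:state_safety}, which is the relationship the paper reserves for chance constraints in Appendix~\ref{appendix_cc}, not the equivalence claimed in Theorem~\ref{theorem:generality}.

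That said, your version buys something the paper's does not: your $g$ is a genuine state(-action) cost and your $b_h$ is myopically known and computable from the observed prefix of the trajectory, which is exactly what \algo~needs at run time. The paper's $g$ and $b_h$ both depend on expectations under the very policy $\pi$ being optimized, so they are not functions of $(s,a)$ in the CMDP sense and are not observable online; the ``exactness'' of that reduction is bought at the price of a cost function that is not really a cost function. If you want to close the gap while keeping your pathwise construction, the cleanest fix is to state explicitly that you are proving the conservative direction (feasibility in the transformed \GSE~instance implies feasibility in Problem~\ref{problem:state_safety}) and to note that exactness requires either the paper's in-expectation redefinition of $g$ and $b_h$ or one of the special regimes you identify.
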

\begin{proof}
The following chain of inequalities holds:
\begin{align*}
    &\ \mbE \left[\, \sum_{h=1}^H \gamma_g^h \, \mathbb{I}(s_h \in S_\text{unsafe}) \, \lmid \, \calP, \pi \,\right] \le \xi_2 \\
    \iff &\ \mbE \left[\, \sum_{h'=1}^h \gamma_g^{h'} \, \mathbb{I}(s_{h'} \in S_\text{unsafe}) \, \lmid \, \calP, \pi \,\right]  \le \xi_2, \quad \forall h \in [1, H] \\
    \iff &\ \mbE \left[\, \gamma_g^h \, \mathbb{I}(s_h \in S_\text{unsafe}) \, \lmid \, \calP, \pi \,\right] \le \xi_2 - \mbE \left[\, \sum_{h'=1}^{h-1} \gamma_g^{h'} \, \mathbb{I}(s_{h'} \in S_\text{unsafe}) \, \lmid \, \calP, \pi \,\right], \quad \forall h \in [2, H] \\
    \iff &\ \mbE \left[\, \mathbb{I}(s_h \in S_\text{unsafe}) \, \lmid \, \calP, \pi \,\right] \le \gamma_g^{-h} \left\{\, \xi_2 - \mbE \left[\, \sum_{h'=1}^{h-1} \gamma_g^{h'} \, \mathbb{I}(s_{h'} \in S_\text{unsafe}) \, \lmid \, \calP, \pi \,\right] \right\}, \quad \forall h \in [2, H]
\end{align*}
Because we assume Markov property, we simply define the safety cost function $g: \calS \times \calA \rightarrow \mbR$ as
\[
    g(s_h, a) \coloneqq \mbE[\, \mathbb{I}(s_h \in S_\text{unsafe}) \mid \calP, \pi \,], \quad \forall a \in \calA.
\]
We now set 
\[
b_h \coloneqq \gamma_g^{-h} \left\{\, \xi_2 - \mbE \left[\, \sum_{h'=1}^{h-1} \gamma_g^{h'} \, \mathbb{I}(s_{h'} \in S_\text{unsafe}) \, \lmid \, \calP, \pi \,\right] \right\},
\]
then the Problem~\ref{problem:state_safety} can be transformed into
\[
\Pr[\,g(s, a) \le b_h \mid \calP, \pi \,] = 1.
\]
Finally, we obtained the desired lemma.
\end{proof}

\subsection{Relationship between the \GSE~problem and Problem~\ref{problem:every_time_safety}}

\begin{lemma}
  \label{lemma:gse_vs_p5}
    Problem~\ref{problem:every_time_safety} can be transformed into the \GSE~problem.
\end{lemma}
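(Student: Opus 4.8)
The plan is to exhibit Problem~\ref{problem:every_time_safety} as a literal special case of the \GSE~problem, requiring neither a state augmentation nor a redefinition of the cost function. Both problems share the same objective $\max_\pi V_r^\pi$ and use the identical safety cost function $g$, so the only difference between their constraints lies in the threshold: Problem~\ref{problem:every_time_safety} imposes $\Pr[\, g(s_h, a_h) \le \xi_3 \mid \calP, \pi \,] = 1$ with a \emph{time-invariant} threshold $\xi_3$ for every $h \in [1, H]$, whereas the \GSE~problem imposes $\Pr[\, g(s_h, a_h) \le b_h \mid \calP, \pi \,] = 1$ with a \emph{time-varying} threshold $b_h$.

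First I would set $b_h \coloneqq \xi_3$ for all $h \in [1, H]$, noting that a constant threshold is trivially a valid instance of a time-varying one. Under this choice the \GSE~constraint reduces verbatim to the instantaneous constraint of Problem~\ref{problem:every_time_safety}, so the two feasible policy sets coincide and the two optimization problems become identical. Since the reward objectives already match, this establishes the desired transformation and completes the lemma.

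Unlike the reductions in Lemmas~\ref{lemma:gse_vs_p2} and~\ref{lemma:gse_vs_p3}, which require introducing a remaining-budget variable $\eta_h$ or invoking the Markov property to define a surrogate cost, this case carries no genuine obstacle; the entire content is the observation that the \GSE~threshold subsumes constant thresholds as a degenerate case. The only point worth stating explicitly is that no extra assumption is consumed in the process, so the equivalence is exact rather than merely a conservative approximation.
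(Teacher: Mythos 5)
Your proof is correct and matches the paper's own argument exactly: the paper likewise sets $b_h = \xi_3$ for all $h$ and observes that the \GSE~problem then coincides with Problem~\ref{problem:every_time_safety}. The extra remarks you add about not consuming additional assumptions are accurate but not needed.
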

\begin{proof}
  Set $b_h = \xi_3$ for all $h$.
  Then, the \GSE~problem is identical to Problem~\ref{problem:every_time_safety}.
\end{proof}

\subsection{Summary}
\label{appendix:theorem2-1}

\begin{proof}
  Combining Lemma~\ref{lemma:gse_vs_p2}, \ref{lemma:gse_vs_p3}, and \ref{lemma:gse_vs_p5}, we obtain the desired Theorem~\ref{theorem:generality}.
\end{proof}

\section{Connections to Safe RL Problems with Chance Constraints}
\label{appendix_cc}

As a strongly related formulation to Problem~\ref{problem:state_safety}, policy optimization under joint chance-constraints has been studied especially in the field of control theory such as \citet{ono2015chance} and \citet{pmlr-v168-pfrommer22a}, which is written as 
\begin{problem}[Safe RL with joint chance constraints]
  \label{problem:chance_safety}
  Let $\xi_5 \in \mbR_{\ge 0}$ be a constant representing a safety threshold.
  Also, let $S_\text{unsafe} \subset \calS$ denote a set of unsafe states.
  Find the optimal policy $\pi^\star$ such that
  \begin{equation*}
    \textstyle
    \max_{\pi} V^\pi_r \quad \text{subject to} \quad \Pr \left[
    \bigvee_{h=1}^{H} s_h \in S_\text{unsafe}\, \lmid \, \calP, \pi \right] \le \xi_5.
  \end{equation*}
\end{problem}

\begin{lemma}
    \label{lemma:B1}
    Problem~\ref{problem:state_safety} is a conservative approximation of Problem~\ref{problem:chance_safety}.
\end{lemma}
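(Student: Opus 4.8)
The plan is to read ``conservative approximation'' as a feasible-set containment: I would show that every policy admissible under the expected-cost constraint of Problem~\ref{problem:state_safety} is automatically admissible under the joint chance constraint of Problem~\ref{problem:chance_safety}, once the two thresholds are linked appropriately. Establishing this inclusion of feasible sets is exactly what justifies calling Problem~\ref{problem:state_safety} a conservative (i.e.\ never less safe) surrogate for Problem~\ref{problem:chance_safety}, since its optimum is then a feasible, possibly suboptimal, point for the chance-constrained problem.

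The central tool is Boole's inequality together with linearity of expectation. First I would rewrite the chance-constraint event as a union of per-step failure events, $\bigvee_{h=1}^{H} \{s_h \in S_\text{unsafe}\} = \bigcup_{h=1}^{H} \{s_h \in S_\text{unsafe}\}$, and apply the union bound to obtain $\Pr[\bigcup_{h} \{s_h \in S_\text{unsafe}\} \mid \calP, \pi] \le \sum_{h=1}^{H} \Pr[s_h \in S_\text{unsafe} \mid \calP, \pi]$. Next I would convert each probability into the expectation of the corresponding indicator, $\Pr[s_h \in S_\text{unsafe}] = \mbE[\mathbb{I}(s_h \in S_\text{unsafe})]$, and reassemble the sum via linearity, giving $\sum_{h} \Pr[s_h \in S_\text{unsafe}] = \mbE[\sum_{h=1}^{H}\mathbb{I}(s_h \in S_\text{unsafe})]$. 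The right-hand side is precisely the (undiscounted) cost quantity controlled in Problem~\ref{problem:state_safety}, so in the clean case $\gamma_g = 1$ the constraint $\mbE[\sum_h \mathbb{I}(s_h \in S_\text{unsafe})] \le \xi_2$ immediately yields $\Pr[\bigcup_h \{s_h \in S_\text{unsafe}\}] \le \xi_2$; setting $\xi_5 = \xi_2$ then closes the argument.

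The main obstacle is the discount factor $\gamma_g$. When $\gamma_g < 1$ each weight satisfies $\gamma_g^h \le 1$, so the discounted cost in Problem~\ref{problem:state_safety} underestimates the undiscounted sum produced by the union bound, and the constraint no longer bounds $\sum_h \Pr[s_h \in S_\text{unsafe}]$ by $\xi_2$ directly. I would resolve this by using $\gamma_g^h \ge \gamma_g^H$ for every $h \in [1,H]$ to write $\mbE[\sum_h \mathbb{I}(s_h \in S_\text{unsafe})] \le \gamma_g^{-H}\,\mbE[\sum_h \gamma_g^h\,\mathbb{I}(s_h \in S_\text{unsafe})] \le \gamma_g^{-H}\xi_2$, and then choose the chance threshold as $\xi_5 = \gamma_g^{-H}\xi_2$ (noting this is meaningful as a probability bound only when $\gamma_g^{-H}\xi_2 \le 1$). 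The delicate part of the write-up is therefore not the inequalities, which are routine, but the bookkeeping of the threshold correspondence: both the slack introduced by the union bound and the discount rescaling must be absorbed into a single explicit relation between $\xi_2$ and $\xi_5$ so that feasibility for Problem~\ref{problem:state_safety} provably transfers to Problem~\ref{problem:chance_safety}.
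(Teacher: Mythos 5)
Your proof is correct and takes essentially the same route as the paper's: both hinge on Boole's inequality applied to the union of per-step failure events, followed by rewriting each probability as the expectation of an indicator and reassembling the sum by linearity. You are in fact slightly more careful than the paper, whose chain of inequalities ends at the undiscounted sum $\mbE[\sum_h \mathbb{I}(s_h \in S_\text{unsafe})]$ and identifies it with the constraint of Problem~\ref{problem:state_safety} without accounting for the weights $\gamma_g^h$; your explicit threshold correspondence $\xi_5 = \gamma_g^{-H}\xi_2$ closes that small bookkeeping gap.
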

\begin{proof}
    This lemma mostly follows from Theorem 1 in \citet{ono2015chance}.
    Regarding the constraint in Problem~\ref{problem:chance_safety}, we have the following chain of equations:
    \begin{align}
        \nonumber
        \Pr \left[ \bigvee_{h=1}^{H} s_h \in S_\text{unsafe}\, \lmid \, \calP, \pi \right]
        &\ \le \sum_{h=1}^H \Pr \left[s_h \in S_\text{unsafe}\, \lmid \, \calP, \pi \right]  \\
        &\ = \sum_{h=1}^H \mbE \left[ \, \mathbb{I}(s_h \in S_\text{unsafe})\, \lmid \, \calP, \pi \, \right] \\
        &\ = \mbE \left[\, \sum_{h=1}^H  \mathbb{I}(s_h \in S_\text{unsafe})\, \lmid \, \calP, \pi \, \right].
        \label{eq:problem_2_5}
    \end{align}
    In the first step, we used Boole's inequality (i.e., $\Pr[A \cup B] \le \Pr[A] + \Pr[B]$).
    The final term in \eqref{eq:problem_2_5} is the LHS of the constraint in Problem~\ref{problem:state_safety}, which implies that Problem~\ref{problem:state_safety} is a conservative approximation of Problem~\ref{problem:chance_safety}.
    Therefore, the \GSE~problem is also a conservative approximation of Problem~\ref{problem:chance_safety}.
\end{proof}

\begin{corollary}
    Suppose we solve the \GSE~problem by properly defining the safety function $g(\cdot, \cdot)$ and the threshold $b_h$.
    Then, the obtained policy is a conservative solution of Problem~\ref{problem:chance_safety}. 
\end{corollary}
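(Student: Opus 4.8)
The plan is to combine two results already established — the reduction of Problem~\ref{problem:state_safety} to the \GSE~problem (Lemma~\ref{lemma:gse_vs_p3}) and the conservative-approximation relationship of Lemma~\ref{lemma:B1} — and to reason entirely at the level of feasible policy sets. First I would pin down the meaning of a \emph{conservative solution}: a policy $\pi$ is a conservative solution of Problem~\ref{problem:chance_safety} if it is feasible for that problem, i.e. $\Pr[\bigvee_{h=1}^H s_h \in S_\text{unsafe} \mid \calP, \pi] \le \xi_5$, possibly at the cost of reward optimality. It therefore suffices to show that the feasible set of the \GSE~problem, under the chosen $g$ and $b_h$, is contained in the feasible set of Problem~\ref{problem:chance_safety}; any reward-maximizer over the former is then automatically feasible for the latter.

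The core of the argument is a composition of two feasible-set relations. By Lemma~\ref{lemma:gse_vs_p3}, with $g(s_h, a) \coloneqq \mbE[\mathbb{I}(s_h \in S_\text{unsafe}) \mid \calP, \pi]$ and $b_h$ as constructed there, the feasible set of the \GSE~problem coincides \emph{exactly} with that of Problem~\ref{problem:state_safety}. By Lemma~\ref{lemma:B1}, whose Boole-inequality chain yields $\Pr[\bigvee_{h=1}^H s_h \in S_\text{unsafe}] \le \mbE[\sum_{h=1}^H \mathbb{I}(s_h \in S_\text{unsafe})]$, every policy satisfying the Problem~\ref{problem:state_safety} constraint also satisfies the Problem~\ref{problem:chance_safety} constraint, so the feasible set of Problem~\ref{problem:state_safety} is contained in that of Problem~\ref{problem:chance_safety}. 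Composing the equality with this inclusion gives the claimed containment; indeed the final line of the proof of Lemma~\ref{lemma:B1} already records exactly this chaining, so the corollary is essentially a formal repackaging of that conclusion.

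I do not expect a deep technical obstacle, since both ingredients are in hand. The one point needing care is threshold and discount bookkeeping: the Boole-inequality bound in Lemma~\ref{lemma:B1} involves the \emph{undiscounted} indicator sum, whereas the reduction of Lemma~\ref{lemma:gse_vs_p3} carries the factor $\gamma_g^{h}$ through $b_h$. I would resolve this by instantiating the reduction at $\gamma_g = 1$ with $\xi_2 = \xi_5$, so that the quantity governed by the \GSE~constraint is precisely the undiscounted expected number of unsafe visits appearing in Lemma~\ref{lemma:B1}; since $\gamma_g \le 1$ in general, requiring the undiscounted sum to stay below the threshold only tightens the constraint and hence preserves conservativeness. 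Once this matching is fixed the chain of inclusions closes, and every policy returned by solving the \GSE~problem is feasible for — and thus a conservative solution of — Problem~\ref{problem:chance_safety}.
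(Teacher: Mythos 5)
Your proof is correct and follows essentially the same route as the paper: the corollary is obtained by composing Lemma~\ref{lemma:gse_vs_p3} (the \GSE~problem captures Problem~\ref{problem:state_safety} exactly) with Lemma~\ref{lemma:B1} (Problem~\ref{problem:state_safety} conservatively approximates Problem~\ref{problem:chance_safety} via Boole's inequality), which is precisely the chaining recorded in the final sentence of the paper's proof of Lemma~\ref{lemma:B1}. Your explicit bookkeeping on the discount factor --- instantiating the reduction at $\gamma_g = 1$ so that the quantity bounded by Boole's inequality coincides with the constrained quantity --- addresses a detail the paper glosses over and is a welcome tightening rather than a departure from its argument.
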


\begin{remark}
It is extremely challenging to directly solve the Problem~\ref{problem:chance_safety} characterized by \textit{joint} chance-constraints without approximation, as discussed in \citet{ono2015chance}.
Practically, it would be a promising and realistic approach to solve Problem~\ref{problem:chance_safety} by converting it into the \GSE~problem.
\end{remark}

More detailed explanations for the aforementioned remark are as follows.
It is extremely challenging to directly solve the Problem~\ref{problem:chance_safety} characterized by \textit{joint} chance-constraints. 
Most of the previous work does not directly deal with this type of constraint and uses some approximations or assumptions.
For example, \citet{pmlr-v168-pfrommer22a} assume a known linear time-invariant dynamics.
Also, \citet{ono2015chance} approximate the joint chance constraint as in the above procedure and obtain
\begin{align*}
    \nonumber
    \Pr \left[ \bigvee_{h=1}^{H} s_h \in S_\text{unsafe}\, \lmid \, \calP, \pi \right]
    &\ \le \mbE \left[\, \sum_{h=1}^H  \mathbb{I}(s_h \in S_\text{unsafe})\, \lmid \, \calP, \pi \, \right].
\end{align*}
This is a conservative approximation with an additive structure, which is easier to solve than the original joint chance constraint.
\citet{ono2015chance} deals with the above constraints with additive safety structure. By additionally transforming the conservatively-approximated problem into the GSE problem, the problem would become easier to handle because the safety constraint is instantaneous.
\section{Proof of Theorems~\ref{theorem:reasonable_safe_algo} and \ref{theorem:reasonable_algo}}
\label{appendix:reasonable_algo}

\subsection{Proof of Theorem~\ref{theorem:reasonable_safe_algo}}
\begin{proof}
Recall that, at every time step $h$, the \algo~chooses actions satisfying
\begin{equation}
\label{eq:appendix_c1}
\mu(s_h,a_h) + \Gamma(s_h,a_h) \le b_h.
\end{equation}
By definition of the $\delta$-uncertainty quantifier, the actions that are conservatively chosen based on \eqref{eq:appendix_c1} also satisfy the safety constraint, with a probability of at least $1 - \delta$; that is,
\begin{equation}
g(s_h, a_h) \le b_h.
\end{equation}
In addition, when there is no action satisfying \eqref{eq:appendix_c1}, the emergency stop action is executed; that is, safety is guaranteed with a probability of $1$.
Hence, the desired theorem is now obtained.
\end{proof}

\subsection{Proof of Theorem~\ref{theorem:reasonable_algo}}
\begin{proof}
Assumption~\ref{assumption:slator} implies that there exists a policy that satisfies a more conservative safety constraint written as
\begin{equation}
  \label{eq:appendix_safety_conservative}
  g(s_h, a_h) \le b_h - \zeta, \quad \forall h \in [1, H].
\end{equation}

By combining \eqref{eq:appendix_safety_conservative} and the assumption $\Gamma(s,a) \le \frac{1}{2}  \zeta, \forall (s,a) \in \calS \times \calA$, we have
\begin{equation}
  \mu(s_h, a_h) + \Gamma(s_h, a_h) \le g(s_h, a_h) + \zeta \le b_h, \quad \forall h \in [1, H],
\end{equation}
which guarantees that there exists a policy that conservatively satisfies the safety constraint via the $\delta$-uncertainty quantifier  $\Gamma(\cdot, \cdot)$ at every time step $h$, with a probability of at least $1-\delta$.

When we set $c$ to be a sufficiently large scalar such that $c > \frac{\zeta}{2 \gamma_r^H}V_{\max}$, the penalty $\widehat{r}$ satisfies
\begin{align*}
  \widehat{r}(s_h, a_h)
  &\ = \frac{-c}{\min_{a \in \calA} \Gamma(s_{h+1}, a)} \\
  &\ < \frac{-\frac{1}{2} \zeta \cdot \frac{1}{\gamma_r^H} V_{\max}}{\frac{1}{2} \zeta} \\
  &\ < - \gamma_r^{-H} V_{\max}.
\end{align*}
This means that, when the constraint violation happens even a single time, the value by a policy obtained in $\widehat{\calM}$ becomes negative because
$\max_s V_r^\pi(s) \le V_{\max}$.

Under Assumption~\ref{assumption:slator}, after convergence, the optimal policy in $\widehat{M}$ will not violate the safety constraint, and thus the emergency stop action $\widehat{a}$ will not be executed.
In this case, the modified (unconstrained) MDP $\widehat{\calM}$ is identical to the original CMDP $\calM$.
Therefore, we now obtain the desired theorem.
\end{proof}

\section{Supplementary materials regarding \glmalgo}
\label{appendix:proofs_glm}

\subsection{Pseudo-code for \glmalgo}
We first present the pseudo-code for \glmalgo~in Algorithm~\ref{alg:lsvi-ucb}.

\begin{algorithm}[t]
  \caption{GLM-MASE}
  \begin{algorithmic}[1]
  % \STATE Initialize $\widehat Q_{h,0} \equiv V_{\max}$ for all $h\leq H$ and $\widehat Q_{H+1,t}\equiv 0$ for all $1 \leq t \leq T$. Set $C$, $C_g$, and $C_{\sfrac{Q}{g}}$.
  % \STATE Initialize $\widehat{\theta}^g_{h, t}$, $\widehat{\theta}^Q_{h, t}$, and $\Gamma(\cdot, \cdot)$.
  \For{episode $t=1,2,\cdots,T$}
    \For{time $h = 1, \ldots, H$}
        \State Take action $a_h = \pi(s_h)$ within $\calA^+_{h}$
        \State Receive reward $r(s_h, a_h)$ and next state $s_{h+1}$
        \State Receive safety cost $g(s_h, a_h)$ and update safety threshold $b_{h+1}$
        \If{$\calA^+_{h} = \emptyset$}
        \State Compute $\widehat{r}(s_h, a_h) = - \frac{c}{\min_{a \in \calA} \Gamma(s_{h+1}, a)}$
        \State Append $(s_h, a_h, \widehat{r}(s_h, a_h), s_{h+1})$ to $\calD$
        \State \textbf{break} (i.e., emergency stop action $\widehat{a}$)
      \Else
      \State Append $(s_h, a_h, r(s_h, a_h), s_{h+1})$ to $\calD$
      \EndIf
    \EndFor 
    \State Update $\widehat{\theta}^g_{h, t}$ and $\widehat{\theta}^Q_{h, t}$
    \State Compute the optimistic Q-estimate
    \begin{equation*}
      \textstyle
      \widehat{Q}^{(t)}_{\widehat{r}, h}(s, a) = \min \{ V_{\max}, f(\langle \phi_{s,a}, \hat{\theta}^Q_{h, t} \rangle)+ C_{\sfrac{Q}{g}} \Gamma(s,a)\}
    \end{equation*}
    \State Optimize the policy by
    \begin{equation*}
    \pi_h^{(t)}(s) = \argmax_{a \in \calA} \widehat{Q}^{(t)}_{\widehat{r}, h}(s, a).
    \end{equation*}
    \State Update $\Gamma(s,a) \coloneqq C_g \cdot \|\phi_{s, a}\|_{\Lambda_{h,t}^{-1}}$ and then rewrite $\calD$
  \EndFor
  \end{algorithmic}
  \label{alg:lsvi-ucb}
  \end{algorithm}

\subsection{Proofs of Lemmas~\ref{lemma:glm_safety_uq} and \ref{lemma:glm_Q_uq}}

\begin{proof}
    See Lemma~1 (and Lemma 7) in \citet{wang2020optimism}.
\end{proof}

\subsection{Preliminary Lemmas}

\begin{lemma}
\label{lemma:gamma_zeta}
Suppose the assumptions in Lemma~\ref{lemma:glm_safety_uq} and \ref{lemma:glm_Q_uq} hold. 
Let $C_1$ and $C_2$ be positive, universal constants.
Also, with a sufficiently large $T$, let $t^\star$ denote the smallest integer satisfying 
\begin{equation}
    \label{equation:lambda_min}
    \lambda_{\min}(\Sigma)tH - C_1\sqrt{tHd} - C_2\sqrt{tH \ln \delta^{-1}} \ge 2 \cdot C_g \cdot \zeta^{-1}
\end{equation}
where $\lambda_{\min}(\Sigma)$ is the minimum eigenvalue of the second moment matrix $\Sigma$.
Then, we have
\begin{equation}
    \Gamma(s_h^{(t)}, a_h^{(t)}) \le \frac{1}{2} \cdot \zeta.
\end{equation}
\end{lemma}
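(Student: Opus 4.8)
The plan is to bound the uncertainty quantifier $\Gamma(s_h^{(t)}, a_h^{(t)}) = C_g \, \|\phi_{s_h^{(t)}, a_h^{(t)}}\|_{\Lambda_{h,t}^{-1}}$ entirely through the smallest eigenvalue of the design matrix $\Lambda_{h,t}$, and then to argue that the growth condition defining $t^\star$ makes that eigenvalue large enough to push $\Gamma$ below $\tfrac12\zeta$. The first step is a deterministic reduction, and the second is a high-probability concentration statement about $\Lambda_{h,t}$.

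First I would reduce the Mahalanobis seminorm to an eigenvalue bound. Since the feature map sends $\calS \times \calA$ into $\mbB^d$ we have $\|\phi_{s,a}\|_2 \le 1$, and $\Lambda_{h,t}^{-1} \preceq \lambda_{\min}(\Lambda_{h,t})^{-1} I$, so
\[
\|\phi_{s,a}\|_{\Lambda_{h,t}^{-1}}^2 = \phi_{s,a}^\top \Lambda_{h,t}^{-1} \phi_{s,a} \le \frac{\|\phi_{s,a}\|_2^2}{\lambda_{\min}(\Lambda_{h,t})} \le \frac{1}{\lambda_{\min}(\Lambda_{h,t})}.
\]
Consequently $\Gamma(s_h^{(t)}, a_h^{(t)}) \le C_g / \sqrt{\lambda_{\min}(\Lambda_{h,t})}$, so to conclude $\Gamma \le \tfrac12\zeta$ it suffices to show $\lambda_{\min}(\Lambda_{h,t}) \ge 4 C_g^2 \zeta^{-2}$; the inequality defining $t^\star$ is exactly the statement that the high-probability lower bound on $\lambda_{\min}(\Lambda_{h,t})$ reaches the level required here (up to tracking the constants that appear in the seminorm bound).

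Next I would lower-bound $\lambda_{\min}(\Lambda_{h,t})$. Writing $\Lambda_{h,t} = I + \sum_{\tau \le t} \widetilde{\phi}_h^{(\tau)} \widetilde{\phi}_h^{(\tau)\top}$, the empirical Gram term concentrates around its conditional expectation, whose smallest eigenvalue grows linearly in the number $n \approx tH$ of samples collected so far at rate $\lambda_{\min}(\Sigma)$, with $\Sigma$ the population second-moment matrix of the features. A matrix concentration inequality then yields, with probability at least $1-\delta$,
\[
\lambda_{\min}(\Lambda_{h,t}) \ge \lambda_{\min}(\Sigma)\, tH - C_1 \sqrt{tHd} - C_2 \sqrt{tH \ln \delta^{-1}},
\]
the two deviation terms coming respectively from the $d$-dimensional covering of the sphere and from the $\ln\delta^{-1}$ tail. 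By the definition of $t^\star$ as the smallest integer for which the right-hand side attains the required threshold, the eigenvalue bound of the previous step holds for every $t \ge t^\star$, and combining the two steps gives $\Gamma(s_h^{(t)}, a_h^{(t)}) \le \tfrac12\zeta$.

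The main obstacle is the concentration step, because the feature vectors $\{\widetilde{\phi}_h^{(\tau)}\}_{\tau \le t}$ are \emph{not} i.i.d.: each is generated by the policy $\pi^{(\tau)}$, which was itself optimized from earlier episodes, so the samples are adaptively collected and form only a martingale-type structure relative to the episode filtration. I would therefore invoke a matrix Freedman / self-normalized minimum-eigenvalue inequality rather than a plain i.i.d.\ matrix Chernoff bound, and I would need a regularity hypothesis ensuring that the conditional second moment of each $\widetilde{\phi}_h^{(\tau)}$ is uniformly bounded below (so that $\lambda_{\min}(\Sigma) > 0$ and the linear-in-$tH$ growth is genuine rather than degenerate along some direction never explored). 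Once such an inequality is in hand, the per-episode bookkeeping that produces the explicit deviation constants $C_1$ and $C_2$ is routine.
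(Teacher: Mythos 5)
Your proof follows essentially the same route as the paper's: reduce $\Gamma(s_h^{(t)},a_h^{(t)}) = C_g\|\phi_{s,a}\|_{\Lambda_{h,t}^{-1}}$ to a bound on $\lambda_{\min}(\Lambda_{h,t})$, then lower-bound that eigenvalue by $\lambda_{\min}(\Sigma)tH - C_1\sqrt{tHd} - C_2\sqrt{tH\ln\delta^{-1}}$ and invoke the definition of $t^\star$. The concentration inequality you describe for adaptively collected features is exactly what the paper imports wholesale as Proposition~1 of \citet{li2017provably}, so your concerns about the non-i.i.d.\ structure and the need for $\lambda_{\min}(\Sigma)>0$ are handled there rather than something you need to re-derive. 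The one substantive difference is in your favor: your first step correctly gives $\|\phi_{s,a}\|_{\Lambda_{h,t}^{-1}} \le \lambda_{\min}(\Lambda_{h,t})^{-1/2}$, whereas the paper's proof uses the chain $\|\phi_{s,a}\|_{\Lambda_{h,t}^{-1}} \le \lambda_{\max}(\Lambda_{h,t}^{-1}) = \lambda_{\min}(\Lambda_{h,t})^{-1}$, which does not follow from the standard seminorm bound (since $\Lambda_{h,t}\succeq I$, the correct square-root bound is the \emph{weaker} of the two). Under your correct version, the threshold in \eqref{equation:lambda_min} would need to be $4C_g^2\zeta^{-2}$ rather than $2C_g\zeta^{-1}$ for the conclusion $\Gamma \le \tfrac12\zeta$ to go through, exactly as you flag; this only changes the constant defining $t^\star$ and not the asymptotics of Theorem~\ref{theorem:near_optimality}, but it is a genuine constant-tracking gap in the paper's own argument that your write-up repairs.
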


\begin{proof}
By Proposition 1 of \citet{li2017provably},
\[
    \lambda_{\min}(\Lambda_{h,t}) \ge 
    \lambda_{\min}(\Sigma)tH - C_1\sqrt{tHd} - C_2\sqrt{tH \ln \delta^{-1}}.
\]
By combining the aforementioned inequality with \eqref{equation:lambda_min}, we have
\[
\lambda_{\min}(\Lambda_{h,t}) \ge 2 \cdot C_g \cdot \zeta^{-1}.
\]
Using the definition of $\lambda_{\max}(\Lambda^{-1}_{h,t}) = \frac{1}{\lambda_{\min}(\Lambda_{h,t})}$, the following chain of equations hold for all $t \in [t^\star, T]$ and $h \in [1, H]$:
\begin{alignat*}{2}
    \Gamma(s_h^{(t)}, a_h^{(t)})
    & = C_g \cdot \|\phi_{s, a}\|_{\Lambda_{h,t}^{-1}} \\
    & \le C_g \cdot \lambda_{\max}(\Lambda^{-1}_{h,t}) \\
    & = C_g \cdot \lambda^{-1}_{\min}(\Lambda_{h,t}) \\
    & \le \frac{1}{2}\, \zeta.
\end{alignat*}
Therefore, we have the desired lemma.
\end{proof}

\subsection{Proof of Theorem~\ref{theorem:glm_safety}}

\begin{proof}
  By definition, the \glmalgo~chooses actions satisfying
  \begin{equation}
    \label{eq:appendix_d1}
    f(\langle\, \widetilde{\phi}_h^{(\tau)}, \widehat{\theta}^g_{h,t} \,\rangle) + \Gamma(s_h^{(t)}, a_h^{(t)}) \le b_h.
  \end{equation}
  By Lemma~\ref{lemma:glm_safety_uq}, the actions that are conservatively chosen based on \eqref{eq:appendix_d1} also satisfy the actual safety constraint, with a probability at least $1 - \delta$; that is,
  \begin{equation}
    g(s_h^{(t)}, a_h^{(t)}) \le b_h.
  \end{equation}
  In addition, when there is no action satisfying \eqref{eq:appendix_d1}, the emergency stop action is executed where no unsafe action will be executed.
  Hence, the desired theorem is now obtained.
\end{proof}

\subsection{Proof of Theorem~\ref{theorem:near_optimality}}

By Assumption~\ref{assumption:slator}, the optimal policy $\pi^\star$ satisfies
\begin{align}
    g(s_h, \pi^\star(s_h)) \le b_h - \zeta, \quad \forall h \in [1, H].
\end{align}
Thus, the set of state-action pairs that are potentially visited by $\pi^\star$ are written as  
\begin{alignat*}{2}
    \{\, (s, a) \in \calS \times \calA \mid g(s,a) \le b_h - \zeta \, \},
\end{alignat*}
which satisfies the following chain of inequalities:
\begin{alignat*}{2}
    \{\, (s, a) \mid g(s,a) \le b_h - \zeta \, \}
    & \subseteq \{\, (s, a) \mid \mu(s,a) - \Gamma(s,a) \le b_h - \zeta \, \} \\
    & \subseteq \{\, (s, a) \mid \mu(s,a) + \Gamma(s,a) \le b_h \, \}.
\end{alignat*}
The state and action spaces in the last line represent the set of state-action pairs that may be visited by the policy obtained by the \algo~algorithm.
We used Lemma~\ref{lemma:glm_safety_uq} in the first line and $\Gamma(s,a) < \frac{\zeta}{2}$ in the second line.

By Lemma~8 in \citet{strehl2005theoretical}, the total regret can be decomposed into two parts as follows:
\begin{equation}
    \sum_{t=t^\star}^T \left[ V_r^{\pi^\star} - V_r^{\pi_t} \right]  = \mathcal{R}(T) + \sum_{t=t^\star} V_{\max} \delta,
    \label{eq:regret}
\end{equation}
where the first term is the regret under the condition that the confidence bound based on $\delta$-uncertainty quantifier successfully contains the true safety function.
Also, the second term is the regret under the opposite condition, which occurs with a probability $\delta$.

The first term in \eqref{eq:regret} is upper-bounded based on \citet{wang2020optimism} as follows:
\begin{align*}
    \mathcal{R}(T)
    \le\ &
    O\Biggl(H\sqrt{(T-t^\star)\ln((T-t^\star)H)} \\
    &\quad\quad + H \overline{\kappa} \underline{\kappa}^{-1}\sqrt{M+\overline{\kappa}+d^2 \ln \left(\frac{\overline{\kappa}+\alpha_{\max}}{(T-t^\star)H}\right) \cdot (T-t^\star)d \ln\left(1 + \frac{(T-t^\star)}{d}\right)}\Biggr) \\
    \le\ & \widetilde{O}(H\sqrt{d^3 (T-t^\star)}).
\end{align*}

As for the second term in \eqref{eq:regret}, set $\delta = \frac{1}{TH}$ and then we have
\begin{align*}
    \sum_{t=t^\star}^T V_{\max} \cdot \delta
    & = \sum_{t=t^\star}^T V_{\max} \cdot \frac{1}{TH} \\
    & = \sum_{t=t^\star}^T \frac{1 - \gamma^H}{1-\gamma} \cdot \frac{1}{TH} \\
    & \le \sum_{t=t^\star}^T H \cdot \frac{1}{TH} \\
    & \le O(1).
\end{align*}

In summary, the regret can be upper bounded by $\widetilde{O}(H\sqrt{d^3 (T-t^\star)})$.
We now obtain the desired theorem.

\section{Proof of Theorem~\ref{theorem:safety_gp}}
\label{appendix:gp_safety}

\begin{lemma}[$\delta$-uncertainty quantifier]
    \label{lemma:safety_confidence}
    Assume $\| g \|_k^2 \le B$ and $N_n \le \omega$ for all $n \ge 1$.
    Set
    \begin{equation}
        \Gamma(s, a) \coloneqq \beta_n \cdot \sigma_n(s, a), \quad \forall (s, a) \in \calS \times \calA
    \end{equation}
    with $\beta_n^{1/2} \coloneqq B + 4 \omega \sqrt{\nu_n + 1 + \ln \delta^{-1}}$, where $\nu_n$ is the information capacity associated with kernel $k$.
    Then, $\Gamma$ is a $\delta$-uncertainty quantifier.
\end{lemma}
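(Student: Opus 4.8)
The goal is to verify that the proposed $\Gamma$ meets the defining property of a $\delta$-uncertainty quantifier in Assumption~\ref{assumption:delta}: identifying the estimated mean $\mu$ with the GP posterior mean $\mu_n$, I must show that $|g(s,a) - \mu_n(s,a)| \le \Gamma(s,a) = \beta_n \sigma_n(s,a)$ holds simultaneously for all $(s,a) \in \calS \times \calA$ with probability at least $1-\delta$. The plan is to reduce this to a standard kernelized concentration bound and then check that the paper's choice of $\beta_n$ is a conservative (large enough) inflation of the resulting confidence width.

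The two hypotheses are exactly what such a bound requires: $\|g\|_k^2 \le B$ places $g$ in the RKHS of $k$ with a controlled norm, and $N_n \le \omega$ makes the observation noise bounded, hence $\omega$-sub-Gaussian. First I would invoke the self-normalized concentration inequality for kernel ridge regression (the martingale argument of Abbasi-Yadkori, sharpened for the kernelized setting by Chowdhury and Gopalan). This yields an event of probability at least $1-\delta$ on which, uniformly over $n$ and over every $z = (s,a)$,
\[
    |g(z) - \mu_n(z)| \le \left( \|g\|_k + \omega \sqrt{2\left(\nu_n + 1 + \ln \delta^{-1}\right)} \right) \sigma_n(z),
\]
where $\nu_n$ is the maximal information gain (information capacity) of $k$ after $n$ observations. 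The nontrivial content is the \emph{uniformity} of this bound over the entire, possibly continuous, domain $\calS \times \calA$ and over all $n$ at once; this is precisely what the information-capacity term $\nu_n$ buys, and it is the step I would cite rather than re-derive.

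The remaining work is constant-chasing to confirm that the width coefficient above is dominated by the quantity appearing in $\Gamma$. Using $\|g\|_k \le \sqrt{B} \le B$ (taking $B \ge 1$ without loss of generality) and $\sqrt{2} \le 4$, the coefficient is at most $B + 4\omega\sqrt{\nu_n + 1 + \ln\delta^{-1}} = \beta_n^{1/2}$; since $\beta_n^{1/2} \ge 1$ we may further relax $\beta_n^{1/2}\sigma_n$ to $\beta_n\sigma_n$. Hence, on the same high-probability event,
\[
    |g(z) - \mu_n(z)| \le \beta_n^{1/2}\, \sigma_n(z) \le \beta_n\, \sigma_n(z) = \Gamma(z)
\]
for all $z = (s,a)$, which is exactly the condition of Assumption~\ref{assumption:delta}; so $\Gamma$ is a $\delta$-uncertainty quantifier, and Theorem~\ref{theorem:safety_gp} then follows by plugging this $\Gamma$ into Theorem~\ref{theorem:reasonable_safe_algo}. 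The hard part is the concentration inequality itself: ensuring a single event simultaneously controls the posterior error everywhere on $\calS \times \calA$ and for every $n$ requires the self-normalized martingale machinery together with the bound on information gain. By contrast, the reduction to Assumption~\ref{assumption:delta} and the check that the paper's $\beta_n$ conservatively dominates the textbook coefficient are routine.
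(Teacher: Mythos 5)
Your proof is correct and follows essentially the same route as the paper, which simply invokes Theorem~2 of Chowdhury and Gopalan (2017) to obtain the uniform high-probability bound $|g(s,a)-\mu_n(s,a)| \le \beta_n \sigma_n(s,a)$ and then identifies $\Gamma = \beta_n\sigma_n$ as the $\delta$-uncertainty quantifier. You are in fact somewhat more careful than the paper: the cited theorem gives the coefficient $\|g\|_k + \omega\sqrt{2(\nu_n+1+\ln\delta^{-1})}$ to the first power, and your explicit constant-chasing (showing this is dominated by $\beta_n^{1/2}$ and then by $\beta_n$ when $\beta_n^{1/2}\ge 1$) supplies the step the paper leaves implicit.
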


\begin{proof}
Recall the assumption that $\| g \|_k^2 \le B$ and $N_n \le \omega$, $\forall n \ge 1$.
Also, set 
\[
\beta_n^{1/2} \coloneqq B + 4 \omega \sqrt{\nu_n + 1 + \ln \delta^{-1}}.
\]
By Theorem~2 in \citet{chowdhury2017kernelized}, we have
\[
    |\, g(s, a) - \mu_n(s, a) \,| \le \beta_n \cdot \sigma_n(s, a), \quad \forall (s, a) \in \calS \times \calA
\]
for all $n \ge 1$, with probability at least $1-\delta$.
Now, define
\begin{equation}
    \Gamma(s, a) \coloneqq \beta_n \cdot \sigma_n(s, a), \quad \forall (s, a) \in \calS \times \calA
\end{equation}
then we interpret that $\Gamma: \calS \times \calA \rightarrow \mbR$ is a $\delta$-uncertainty quantifier  based on GP.
\end{proof}

\subsection{Proof of Theorem~\ref{theorem:safety_gp}}

\begin{proof}
    Based on Lemma~\ref{lemma:safety_confidence}, when there is at least one safe action (i.e., $\calA^+_h \ne \emptyset$), the satisfaction of the safety constraint is guaranteed based on the $\delta$-uncertainty quantifier with a probability at least $1-\delta$.
    Also, if there is no safe action (i.e., $\calA^+_h = \emptyset$), the emergency stop action is executed.
    In both cases, \algo~guarantees the satisfaction of the safety constraint with probability at least $1-\delta$.
\end{proof}

\section{Details of Safety-Gym Experiment}
\label{appendix:details_experiment}

We present the details regarding our experiments using Safety-Gym. 
Our experimental setting is based on \citet{sootla2022saute}, which is slightly different from the original Safety Gym in that the obstacles (i.e., unsafe region) are replaced deterministically so that the environment is solvable and there is a viable solution.
In this experiment, we used a machine with Intel(R) Xeon(R) Silver 4210 CPU, 128GB RAM, and NVIDIA A100 GPU.
For a fair comparison, we basically used the same hyper-parameter as in \citet{sootla2022saute}, which is summarized in Table~\ref{tab:hparameters}.
\begin{table}[ht]
\caption{Hyper-parameters for Safety Gym experiments.}
\label{tab:hparameters}
\vskip 0.15in
\begin{center}
\begin{small}
\begin{sc}
\begin{tabular}{llcr}
\toprule
 & Name & Value  \\
\midrule
\multirow{11}{*}{Common Parameters} & Network Architecture & $[64, 64]$ \\
& Activation Function & {\rm tanh} \\
& Learning Rate (Critic) & $5 \times 10^{-3}$ \\
& Learning Rate (Policy) & $3 \times 10^{-4}$ \\
& Learning Rate (Penalty) & $3 \times 10^{-2}$ \\
& Discount Factor (Reward) & $0.99$ \\
& Discount Factor (Safety) & $0.99$ \\
& Steps per Epoch & $10,000$ \\
& Number of Gradient Steps & $80$ \\
& Number of Epochs & $500$ \\
& Target KL & $0.01$ \\
\midrule
\multirow{4}{*}{TRPO \& CPO} & Damping Coefficient & $0.1$ \\
& Backtrack Coefficient & $0.8$ \\
& Backtrack iterations & $10$ \\
& Learning Margin & False \\
\midrule
\multirow{4}{*}{\algo} & Penalty for Emergency Stop Actions & $-1$ \\
& Deep GP Network Architecture & $[16, 16]$ \\
& Number of Inducing Points & $128$ \\
& Kernel Function & Radial Basis Function \\
\bottomrule
\end{tabular}
\end{sc}
\end{small}
\end{center}
\vskip -0.1in
\end{table}

In our experiment, when the agent identified that there was no safe action based on the GP-based uncertainty quantifier, we simply terminated the current episode (i.e., resetting) immediately after the emergency stop action and started the new episode.
The frequency of the emergency stop actions is shown in the Table~\ref{tab:emergency_stop}.
\begin{table}[h]
\caption{Frequency of the emergency stop actions.}
\label{tab:emergency_stop}
\vskip 0.15in
\begin{center}
\begin{small}
\begin{sc}
\begin{tabular}{ccc}
\toprule
Task & Total & Last $100$ episodes \\
\midrule
PointGoal1 & 154/500 & 24/100 \\
CarGoal1 & 397/500 & 46/100  \\
\bottomrule
\end{tabular}
\end{sc}
\end{small}
\end{center}
\vskip -0.1in
\end{table}

The emergency stop action is a variant of so-called resetting actions that are common in episodic RL settings, which prevent the agent from exploring the state-action spaces since the uncertainty quantifier is sometimes quite conservative.
We consider that this is the reason why the reward performance of our MASE is worse than other methods (e.g., TRPO-Lagrangian, CPO). However, because we require the agent to solve more difficult problems where safety is guaranteed at every time step and episode, we consider that this result is reasonable to some extent.
Though it is better for an algorithm for such a severe safety constraint to have a comparable performance as CPO, we will leave it for future work.

We also conducted an experiment to compare the performance of \algo~with the early-terminated MDP (ET-MDP, \cite{sun2021safe}) algorithm.
The ET-MDP is an algorithm to execute emergency stop actions \textit{immediately after} safety constraints are violated.
Figure~\ref{fig:experiment_safetygym_et_mdp} shows the experimental results.
The ET-MDP and \algo~exhibit similar learning curves on the average episode reward and average episode safety.
However, while ET-MDP violated the safety constraint in most episodes (i.e., almost all episodes are terminated after an unsafe action is executed), \algo~did not violate any safety constraint.

\begin{figure*}[t]
    \centering
    \begin{subfigure}[b]{0.32\textwidth}
        \centering
        \includegraphics[width=\textwidth]{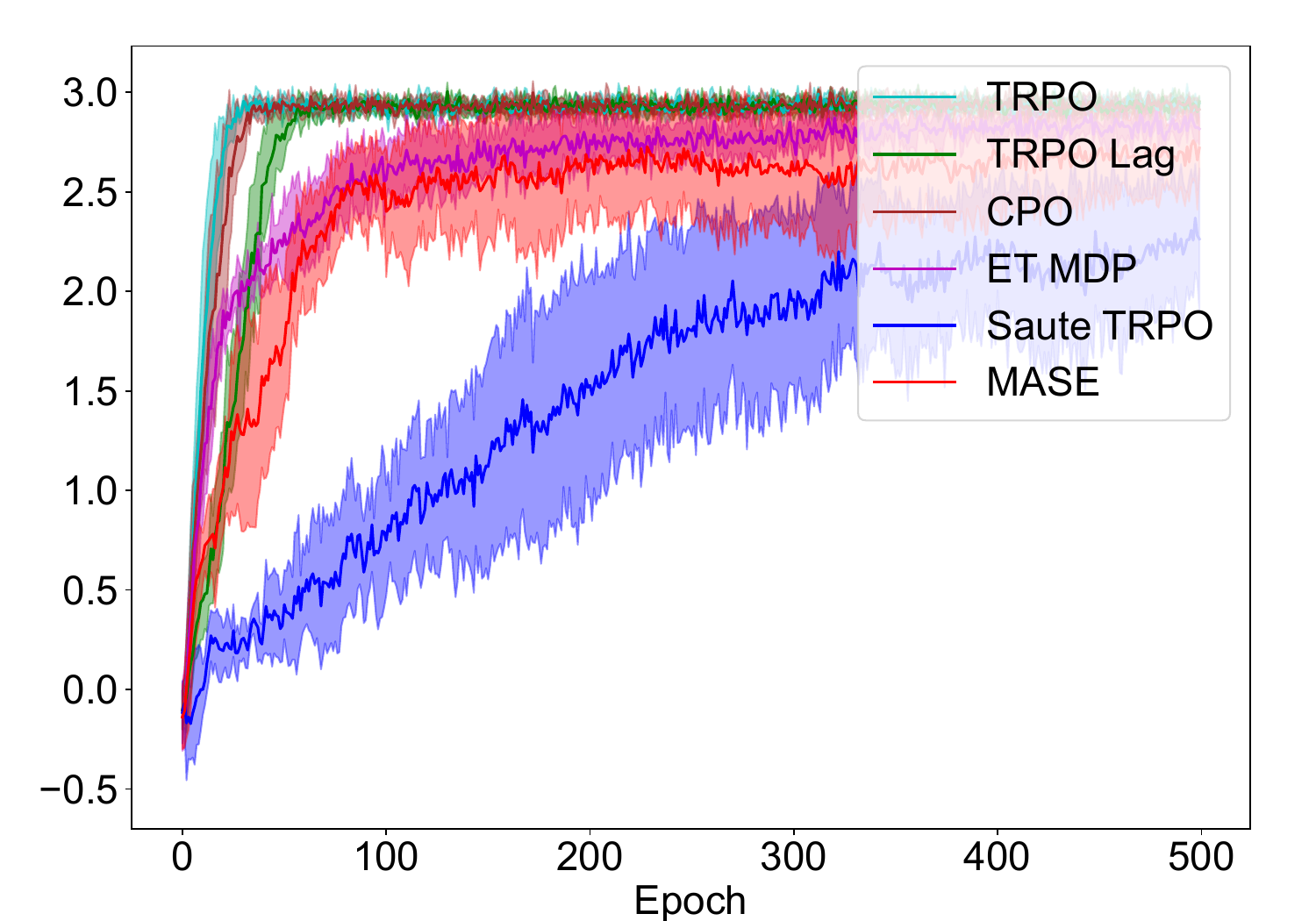}
        \caption{Average episode return.}
    \end{subfigure}
    \hfill
    \begin{subfigure}[b]{0.32\textwidth}
        \centering
        \includegraphics[width=\textwidth]{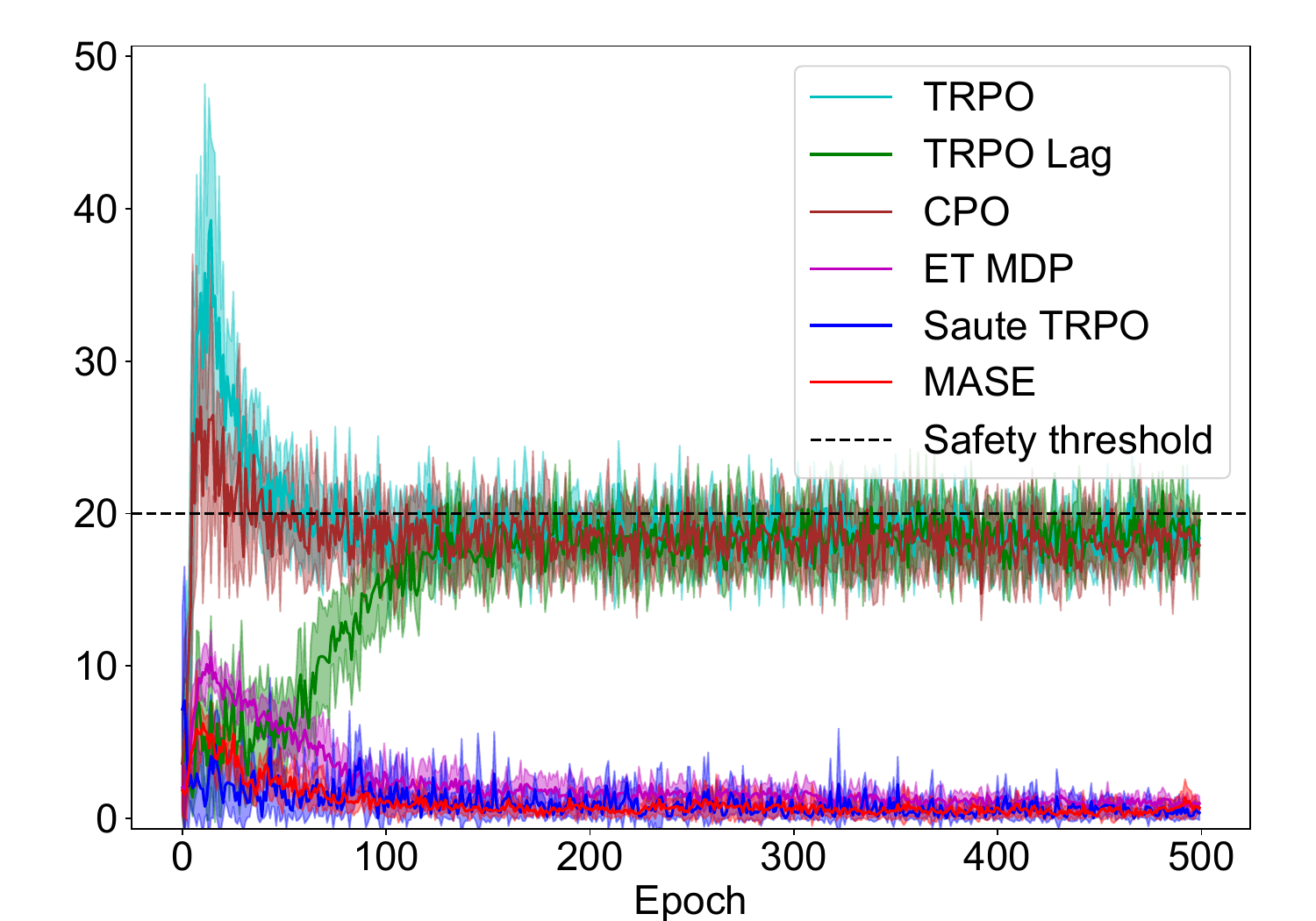}
        \caption{Average episode safety.}
    \end{subfigure}
    \hfill
    \begin{subfigure}[b]{0.32\textwidth}
        \centering
        \includegraphics[width=\textwidth]{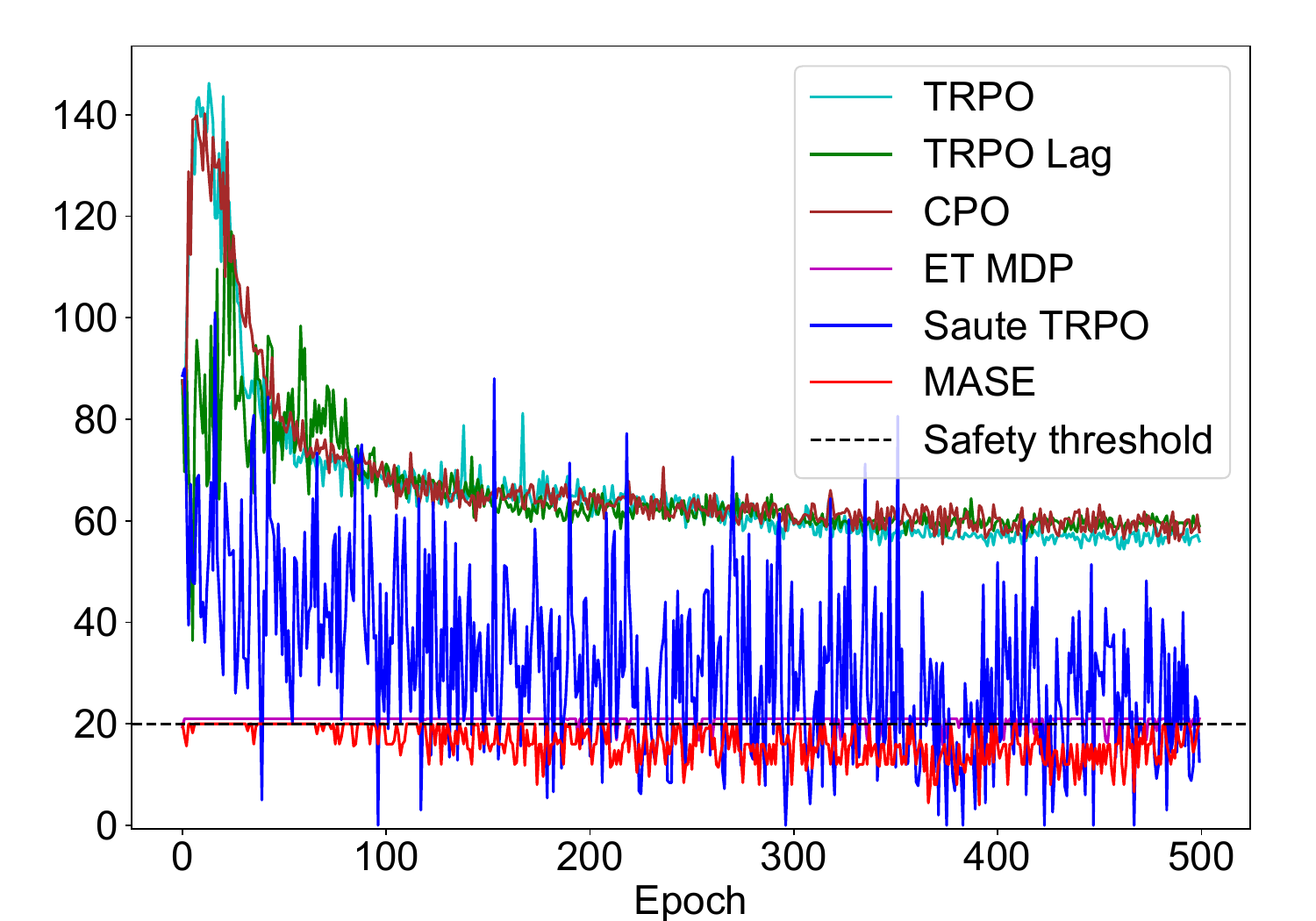}
        \caption{Maximum episode safety.}
    \end{subfigure}
    \begin{subfigure}[b]{0.32\textwidth}
        \centering
        \includegraphics[width=\textwidth]{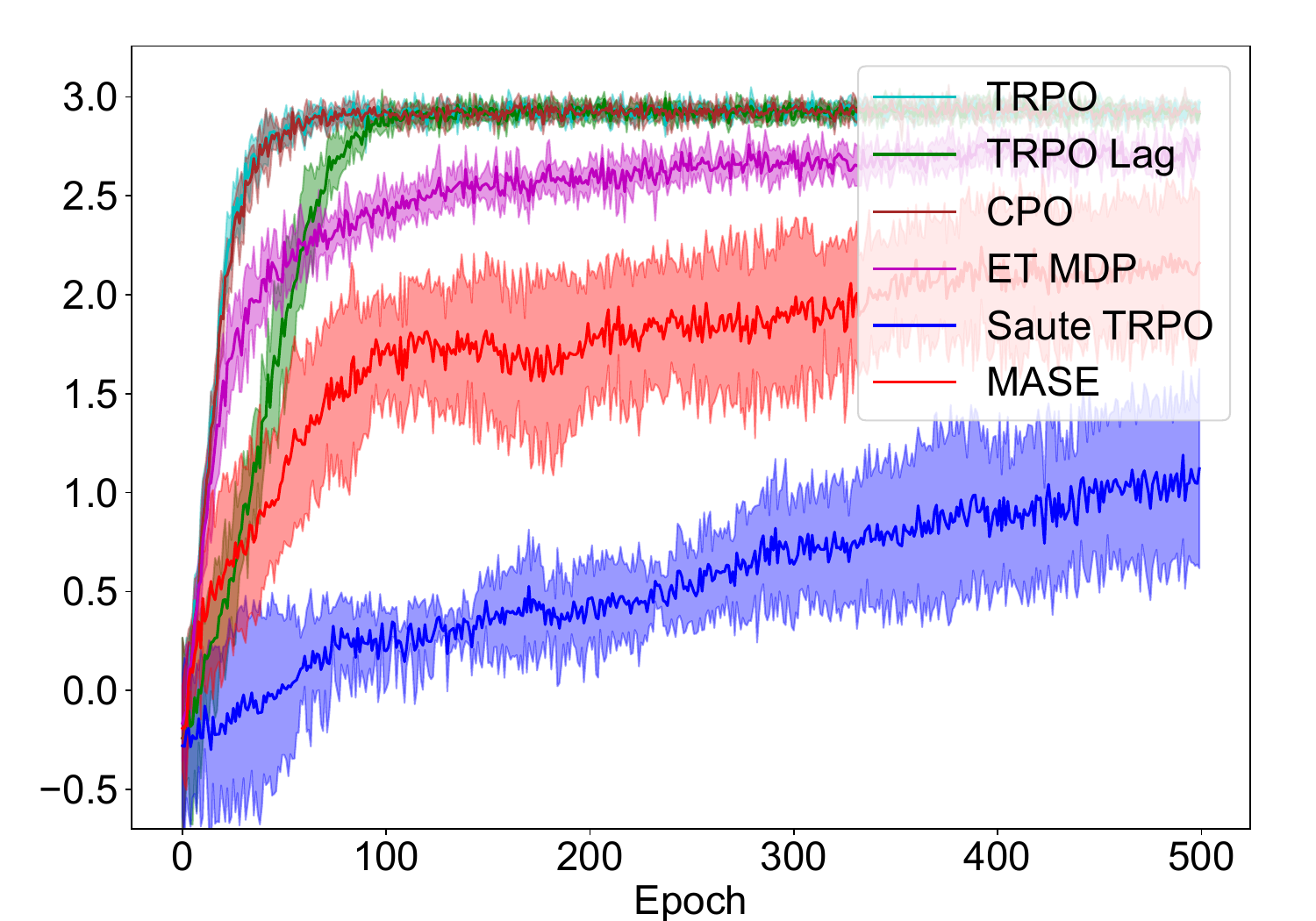}
        \caption{Average episode return.}
    \end{subfigure}
    \hfill
    \begin{subfigure}[b]{0.32\textwidth}
        \centering
        \includegraphics[width=\textwidth]{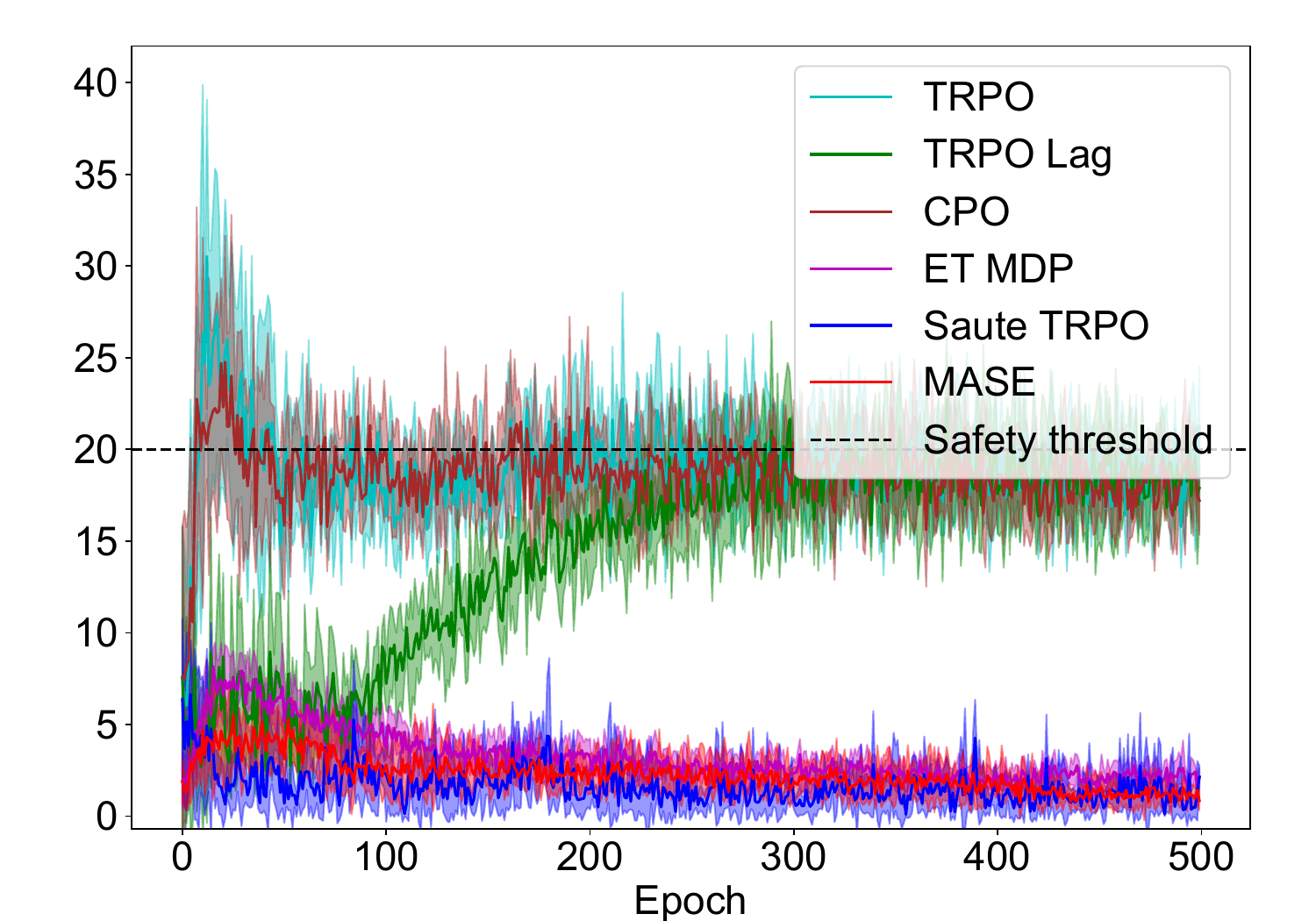}
        \caption{Average episode safety.}
    \end{subfigure}
    \hfill
    \begin{subfigure}[b]{0.32\textwidth}
        \centering
        \includegraphics[width=\textwidth]{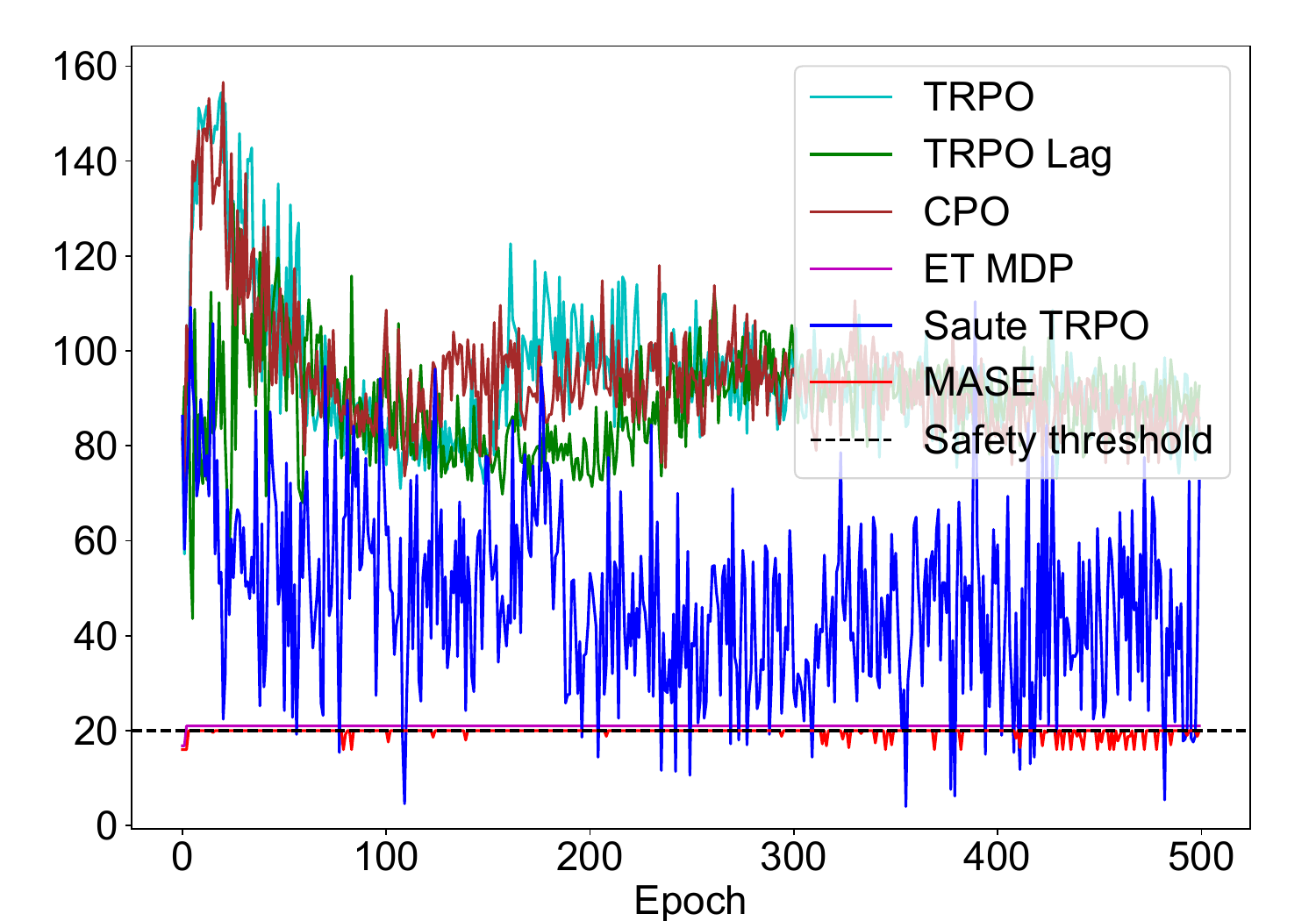}
        \caption{Maximum episode safety.}
    \end{subfigure}
    \caption{Experimental results on Safety Gym (Top: PointGoal1, Bottom: CarGoal1) with an additional implementation of the Early-terminated MDP (ET-MDP) algorithm (\citet{sun2021safe}).}
    \label{fig:experiment_safetygym_et_mdp}
\end{figure*}

\section{Grid-world Experiment}
\label{appendix:grid_experiment}

We also conduct an experiment using the grid-world problem as in \citet{wachi_sui_snomdp_icml2020}.
Experimental settings are based on their original implementation (\url{https://github.com/akifumi-wachi-4/safe_near_optimal_mdp}).
We consider a $20 \times 20$ square grid in which reward and safety functions are randomly generated.
There are two types regarding the safety threshold: one is time-invariant as in \cite{wachi_sui_snomdp_icml2020} and the other is time-variant as in the \GSE~problem.

We run SNO-MDP \cite{wachi_sui_snomdp_icml2020} and \algo~in $100$ randomly generated environments, and we compute the reward collected by the algorithms and count the number of episodes in which the safety constraint is violated at least once.
The reward is normalized with respect to that by SNO-MDP.
\begin{table}[ht]
\caption{Experimental results for grid-world experiments.}
\label{tab:snomdp}
\vskip 0.15in
\begin{center}
\begin{small}
\begin{sc}
\begin{tabular}{lcccc}
\toprule
& \multicolumn{2}{c}{Time-invariant safety threshold} & \multicolumn{2}{c}{Time-variant safety threshold} \\
\cmidrule(lr){2-3}
\cmidrule(lr){4-5}
& Reward & Safety violation & Reward & Safety violation  \\
\midrule
SNO-MDP \cite{wachi_sui_snomdp_icml2020} & \bm{$1.0 \pm 0.0$} & \bm{$0$} & $1.0 \pm 0.0$ & 87 \\
MASE (Ours) & \bm{$1.0 \pm 0.0$} & \bm{$0$} & \bm{$2.4 \pm 1.0$} & \bm{$0$}  \\
\bottomrule
\end{tabular}
\end{sc}
\end{small}
\end{center}
\vskip -0.1in
\end{table}

The experimental results are shown in Table~\ref{tab:snomdp}.
When the safety threshold is time-invariant, \algo~behaves identically with SNO-MDP; thus, the performance of the \algo~is comparable with that of SNO-MDP.
When the safety threshold is time-variant, SNO-MDP cannot deal with it by nature; hence, the safety constraint is not satisfied in most of the episodes.
In contrast, our \algo~satisfies the safety constraint in every episode, which also contributes to the larger reward.

\end{document}